\newtheorem{theorem}{Theorem}[section]
\newtheorem{corollary}[theorem]{Corollary}
\newtheorem{lemma}[theorem]{Lemma}
\newtheorem{remark}[theorem]{Remark}
\newtheorem{proposition}[theorem]{Proposition}
\newcommand{\diag}{\mathrm{diag}}
\newcommand{\Var}{\mathbb{V}\mathrm{ar}}
\newcommand{\deltal}{\delta_L}
\newcommand{\psd}{\mathrm{psd}}
\newcommand{\abs}[1]{\left\vert#1\right\vert}
\newcommand{\norm}[1]{\left\lVert#1\right\rVert}
\newcommand{\tensornorm}[1]{{\left\vert\kern-0.25ex\left\vert\kern-0.25ex\left\vert #1 
    \right\vert\kern-0.25ex\right\vert\kern-0.25ex\right\vert}}
\newcommand{\floor}[1]{\left\lfloor#1\right\rfloor}
\newcommand{\overbar}[1]{\mkern 1mu \overline{\mkern-1.5mu#1\mkern0mu}\mkern 1mu}
\newcommand{\const}{\,\mathrm{const}}
\newcommand{\decay}{\,\mathrm{decay}}
\newcommand{\E}{\mathbb{E}}
\newcommand{\N}{\mathbb{N}}
\newcommand{\Prob}{\mathbb{P}}
\newcommand{\R}{\mathbb{R}}
\newcommand{\Ss}{\mathbb{S}}
\newcommand{\EE}{\mathcal{E}}
\newcommand{\OO}{\mathcal{O}}
\newcommand{\UU}{\mathcal{U}}
\newcommand{\dd}{\mathrm{d}}
\newcommand{\del}{\partial}
\newcommand{\tr}{\mathrm{tr}}
\newcommand{\one}{\mathbbm{1}}
\newcommand{\lr}{\eta}
\newcommand{\de}{\delta^{(L)}}
\newcommand{\vecc}{\mathrm{vec}}
\newcommand{\Ca}{c_{\alpha}}
\newcommand{\qedwhite}{\begin{flushright} \ensuremath{\Box} \end{flushright}}
\newcommand{\parms}{\alpha}
\newtheorem{assumption}[theorem]{Assumption}
\title{Convergence and   Regularization Properties\\ of Gradient Descent for Deep Residual Networks\footnote{Alain Rossier's research was supported through EPSRC Centre for Doctoral Training in Mathematics of Random Systems: Analysis, Modelling and Simulation (EP/S023925/1). %Part of this work is done during Alain's visit to the Department of Industrial and Systems Engineering at the University of Southern California.
}}
\author{Rama~Cont$^1$, Alain Rossier$^{1,2}$ and Renyuan Xu$^3$\\
\ \\
   \small{ $^1$ Mathematical Institute, University of Oxford \hspace{0.2cm} $^2$ Instadeep Ltd}\\
   \small{ $^3$ Department of Industrial and Systems Engineering, University of Southern California} \\
   \small{rossier@maths.ox.ac.uk, Rama.Cont@maths.ox.ac.uk, renyuanx@usc.edu}
}
\begin{document}
\numberwithin{equation}{section}

\maketitle

\begin{abstract}
We prove linear convergence of gradient descent to a global optimum for the training of deep residual networks with constant layer width and smooth activation function. We show that if the trained weights, as a function of the layer index, admit a scaling limit as the depth increases, then the limit has finite $p-$variation with $p=2$. Proofs are based on non-asymptotic estimates for the loss function and for norms of the network weights along the gradient descent path. We illustrate the relevance of our theoretical results to practical settings using detailed numerical experiments on supervised learning problems.
\end{abstract}
\tableofcontents
\newpage
\section{Introduction}

Whether gradient descent methods find globally optimal solutions in the training of neural networks and how trained neural networks generalize are two major open questions in the theory of deep learning. The non-convexity of the loss functions for neural network training may lead to sub-optimal solutions when applying gradient descent methods. It is thus relevant to understand from a theoretical point of view whether specific neural network architectures with a proper choice of learning rates for gradient descent methods can improve the optimization landscape and/or eliminate sub-optimal solutions \cite{sun2020global}. There is some empirical evidence that gradient descent seems to select solutions that generalize well  \cite{zhang2021understanding} even without any {\it explicit} regularization. Hence, it is believed that gradient descent induces an implicit regularization \cite{neyshabur2014search} and characterizing the nature of this regularization is an interesting research question. 

In the present work we prove linear convergence of gradient descent to a global minimum for a class of deep residual networks with constant layer width and smooth activation function. Furthermore, we show that under practical assumptions, the trained weights admit a scaling limit as a function of the layer index which has finite $2$-variation. Our result shows that how implicit regularization emerges from gradient descent.
Our proofs are based on non-asymptotic estimates for the loss function and norms of the network weights along the gradient descent path. These non-asymptotic estimates are interesting in their own right and may prove useful to other researchers for the study of dynamics of learning algorithms.

\subsection{Convergence and   regularization properties of deep learning algorithms}

Existing results on convergence and implicit regularization in deep learning exploit %????
three paradigms: over-parametrized neural networks with fixed depth and large width, linear neural networks with sufficiently large depth, and mean-field residual networks.

Under sufficient over-parametrization by width with {\it fixed depth}, many popular neural network architectures (including feed-forward, convolutional, and residual) with ReLU activation find a global optimum in linear time with respect to the remaining error and the trained network generalizes well \cite{ALL2019, ALS2019}. However, the associated generalization bounds are intractable, and the amount of over-parametrization implied in these results is often unrealistically large.  One can improve the asymptotic analysis \cite{ZYYL2020, ZG2019}, but it still falls short of leading to any practical insight. For smooth activation functions, \cite{du2019gradient} studied the convergence of gradient descent for various network architectures, including residual networks. They show that for any depth, if the residual layers are wide enough and the learning rate is small enough, gradient descent on the empirical mean-squared loss converges to a solution with zero training loss in linear time. The rate of convergence is proportional to the learning rate and the minimum eigenvalue of the \textit{Gram matrix}. \cite{EMWW2019} showed that in the over-parametrized regime, for a suitable initialization with the last layer initialized at zero and other weights initialized uniformly, gradient descent can find a global minimum exponentially fast with high probability. 

For {\it linear} deep neural networks (i.e. with identity activation function), \cite{bartlett2018gradient} showed that training with gradient descent is able to learn the positive definite linear transformations using identity initialization. \cite{wu2019global} proposed a new initialization scheme named zero-asymmetric (ZAS) and proved that that under such initialization, for an arbitrary target matrix, gradient descent converges to an $\epsilon$-optimal point in $\mathcal{O}(L^3\log(1/\epsilon))$ iterations, which scales polynomially with the network depth $L$. Subsequent refinements of the convergence rates and the width requirements have been established in \cite{DH2019, zou2020global}. Finally, \cite{yun2020unifying} showed the implicit regularization of gradient descent for linear fully-connected networks to $l_2$ max-margin solutions. \\

%showed that for training deep residual networks with certain linear transformations at input and output layers, which are fixed throughout training, both gradient descent and stochastic gradient descent with zero initialization on all hidden weights can converge to the global minimum of the training loss.

Another line of work deals with mean-field residual networks by looking at the continuum limit of residual networks when either the depth $L$ or the width $d$ goes to infinity. \cite{YS2017} build on the analysis of \cite{DFS2016} for feed-forward networks to study the average behaviour  of randomly initialized residual networks with  width tending to infinity. They show that a careful initialization, depending on the depth, may enhance expressivity. %is required to avoid \textit{chaos} and \textit{stability}.\footnote{Informally speaking, a chaotic network can discriminate easily between two similar inputs, increasing its expressivity but decreasing its ability to generalize to unseen data. Contrarily, a stable network collapses the input space into a single point, hindering its expressivity.} 
% Surprisingly, a good initialization is often enough to predict good test performance, suggesting that the training procedure does not matter too much. In particular, a good initialization for a $\tanh$ ResNet is determined by its \textit{trainability}, i.e. how much the gradients explode as the depth increases, whereas a good initialization for a $\relu$ ResNet is determined solely by its \textit{expressivity}. 
Further, \cite{lu2020mean} proposed a continuum limit of deep residual networks by letting the depth $L$ tends to infinity and showed that every local minimum of the loss landscape is global. This characterization enables them to derive the first global convergence result for multi-layer neural networks in the mean-field regime.

%Finally, \cite{ACH2018} explain the implicit acceleration of gradient descent on an overparametrized network. Consider the simple model $f_{w, \delta}(x) = \delta w^{\top} x$ parametrized by $(w, \delta) \in \R^d \times \R$, trained with gradient descent on the $\ell^p$ loss. They compare the dynamics of $w$ with fixed $\delta = 1$, and the dynamics of $\widetilde{w} \coloneqq \delta w$ when both $\delta$ and $w$ are trained. They show that the latter is the classical gradient descent on $\widetilde{w}$, with a momentum term that arises from the interaction between $w$ and $\delta$. 

In addition to the network architectures listed above, non-linear neural networks with fixed width and {\it large but finite} depth are successful and practically more popular \cite{HZRS2015, HZRS2016}. It is well-documented that for a fixed number of parameters, going deeper allows the models to capture richer structures \cite{eldan2016power, T2015}. However, the theoretical foundations for such networks remain widely open due to their complex training landscape. %and technical difficulties to control the propagation of chaos.

\subsection{Contributions}

We consider a supervised learning problem where we seek to learn an unknown mapping with inputs and outputs in $\mathbb{R}^d$ using a residual network with constant width $d$ and a smooth activation function.
We study the convergence and implicit regularization of gradient descent for the mean-squared error.  %, in the large depth limit. 
%We formulate a non-linear regression problem with inputs and outputs in $\R^d$, seeking to minimize the mean-squared error.
\begin{itemize}
    \item {\bf Linear convergence.} For $\epsilon > 0$, we prove that for a residual network of depth $L=\Omega(1/\epsilon)$, we can choose a learning rate schedule such that gradient descent on the training loss converges to a $\epsilon$-optimal solution in $\Theta(\log(1/\epsilon))$ iterations.
    \item {\bf Scaling limit of trained weights.} The trained weights, as  a function of the layer, may admit a scaling limit as $L\to\infty$. We prove that such a scaling limit is a matrix-valued function with finite $2$-variation.
    % \item {\bf Emergence of H\"older continuous regularity in the layer index.}
    %We prove that gradient descent leads to a set of optimal weights whose dependence on the layer index admits a scaling limit, as $L\to\infty$, which is a H\"older continuous function. %We thus find that gradient descent implicitly biases the optimization path towards solutions with a strong regularity throughout the depth.  
    \item {\bf Non-asymptotic estimates on loss function and weights along the gradient descent path.} In addition to the convergence results mentioned above, we obtain (non-asymptotic) estimates along the gradient descent path for the loss function and various norms of the weights, with tractable bounds. 
       
    \item {\bf Relevance to practical settings.} We illustrate the relevance of our  theoretical results in practical settings using  detailed numerical experiments with networks of realistic width and depth. \\
\end{itemize}
%\paragraph{Comparison to the literature.}
Our analysis generalizes previous results on {\it linear} neural networks \cite{wu2019global} to a more general nonlinear setting relevant for learning problems. Our non-asymptotic results stand in contrast to the mean-field analysis \cite{lu2020mean} which requires infinite depth. Our tractable bounds improve upon the ones found for networks over-parametrized by width \cite{ALS2019, du2019gradient, EMWW2019, ZYYL2020, ZG2019}, where the trained weights do not leave the lazy training regime \cite{COB2019}: in our setting, the trained weights are not necessarily staying close to their initialization. A key ingredient in the proof is to study the evolution of various norms for the weights under gradient descent iterations. These estimates are provided in Lemmas \ref{lemma-f} and \ref{lemma-g}. 

Our theoretical results suggest that initialization of weights at scale $L^{-1}$ together with a $L^{-1/2}$ scaling of the activation function leads to convergence under a constant learning rate. The overarching principle is to make sure that the gradient stays on the same scale as the weights (here $L^{-1/2}$) during training.
Our analysis also extends, with minimal changes, to the case where linear layers are added at the beginning and the end of the network. \\ 
% Different from the mean-field analysis \cite{lu2020mean} where the global convergence result only holds when $L$ tends to infinity without quantifying how large $L$ should take in practice to enjoy such plausible result, we provide the global convergence of the gradient descent method for any given layer $L$.
%\paragraph{Implication of our results.} 

%\subsection{Notation}

\paragraph{Notations}
Define $(e_m)_{m'} = \one_{\{m'=m\}}\in \R^d$. For a vector $x\in\R^d$, we denote $\norm{x}_2$ the Euclidean norm of $x$, and for a matrix $M\in\R^{d\times d}$, we denote $\norm{M}_F$ the Frobenius norm of $M$. 
When the context is clear, we omit the superscript $x$ for the quantities that depend on the input $x$. We denote $f = \OO(g)$ if there exists $c>0$ such that $f(z)\leq cg(z)$, where $z=(L, k, t, \eta_L(t), c_0)$. That means, our Big-O notation involves a constant that is independent of the depth $L$, the layer number $k$, the iteration number $t$, the learning rates $\eta_L(t)$, and the universal constant $c_0$ defined in Assumption \ref{3_4_assumptions_1}. Similar definitions stand for $\Omega$ and $\Theta$. For a function $\sigma \colon \R \to \R$,  define $\sigma_d \colon \R^d \to \R^d$ by $\sigma_d(x)_i = \sigma(x_i)$ for $i=1, \ldots, d$.

\section{Residual networks}

Let $x\in\R^d$ be an input vector, $\deltal$ be a fixed positive real number, and $\parms^{(L)} \in \R^{L\times d \times d}$ be a set of parameters (or weights). In this section, we focus on a ResNet architecture \textit{without bias} with $L$ fully-connected layers: %can be extended to the setting with bias terms but the calculations are considerably more tedious. Hence we focus on the % We aim to extend the analysis when biases are introduced as well.
\begin{equation} \label{resnets_wo_bias}
    \begin{cases}
        h^{x,\, (L)}_{k} &= h^{x,\, (L)}_{k-1} + \deltal \sigma_d\hspace{-1pt} \left(\parms_k^{(L)} h^{x, \, (L)}_{k-1}\right), \,\,\, k=1, \ldots, L, \\
        h^{x, \,(L)}_0 &= x.
    \end{cases}
\end{equation}
The output of the network is $h_L^{x, \, (L)}$, which we denote by $\widehat{y}_L \hspace{-2pt} \left(x, W^{(L)} \right) $ to emphasize the dependence on the input $x$ and the weights $\parms^{(L)}$. \footnote{The analysis with bias is done by expanding the weights $\parms_k^{(L)}$ and the hidden states $h_k^{(L)}$ with an additional dimension.}  Fix a training set $D_N \coloneqq \{ (x_i,y_i) : i=1,\ldots,N \} \subset \R^d \times \R^d$, and the loss function $\ell\colon \R^d \times \R^d \to \R_+$ defined by $\ell(y, \widehat{y}) \coloneqq \frac{1}{2} \norm{y-\widehat{y}}_2^2$.

We study the dynamics of the weights induced by gradient descent (GD) on the mean-squared error $J_L \colon \R^{L \times d \times d} \to \R_{+}$ defined by 
\begin{equation} \label{mean_squared_error}
    J_L \hspace{-2pt} \left( \parms^{(L)} \right) \coloneqq \frac{1}{N} \sum_{i=1}^N \ell\left( y_i, \widehat{y}_L \hspace{-2pt} \left(x_i, \parms^{(L)} \right) \right) = \frac{1}{2N} \sum_{i=1}^N \norm{ y_i - \widehat{y}_L \hspace{-2pt} \left(x_i, \parms^{(L)}\right)}_2^2. 
\end{equation} 
We consider a gradient descent learning algorithm which sequentially updates the weights using an initialization $A^{(L)}(0)\in\R^{L\times d \times d}$ and
\begin{equation}  \label{eq:A_dot}
    \Delta A_k^{(L)}(t) \coloneqq  A_k^{(L)}(t+1)-  A_k^{(L)}(t) = - \lr_L(t) 
    \nabla_{\parms_k} J_L \hspace{-2pt} \left( A^{(L)}(t) \right),
\end{equation}
where $\lr_L(t)>0$ is the \textit{learning rate} at iteration $t\in\N$, which may depend on the depth $L$, but is independent of the layer index $k$.

\begin{assumption} \label{3_4_assumptions_1} \
%We assume the following throughout this paper.
There exists a constant $c_0 > 0$ such that
\begin{itemize}
\setlength\itemsep{0.3em}
\item[(i)] 
   Smooth activation function: $\sigma \in C^2(\R)$, $\sigma'(0)=1$ and for all $z\in\R$, $\abs{\sigma(z)} \leq \abs{z}$, $\abs{\sigma'(z)} \leq 1$ and $\abs{\sigma''(z)} \leq 1$.
    \item[(ii)]  Scaling factor: $\de = L^{-1/2}$. 
    \item[(iii)] Separated unit data: $\norm{x_i}_2 = \norm{y_i}_2 = 1$ and $\forall i\neq j$, $ \abs{ \langle x_i, x_j \rangle } \leq (8N)^{-1} e^{-4c_0}$.
    %\quad \mbox{and} \quad   \sup_{ \substack{k=1, \ldots, L-1 \\ m=1, \ldots, d}} \norm{A_{k+1, m}^{(L)}(0) - A_{k, m}^{(L)}(0)}_{2} < c_0 d^{-1/2} L^{-1}
    \item[(iv)] Initialisation with $O(1/L)$ weights:  \[
    % \sup_{ \substack{k=1, \ldots, L \\ m=1, \ldots, d}}
    \sup_{k,m} \norm{A_{k, m}^{(L)}(0)}_{2} \leq 2^{-9/2} N^{-1/2} d^{-1/2} e^{-4.2c_0} L^{-1}.
    \]
    \item[(v)] Small initial loss: $J_L(A^{(L)}(0)) \leq 2^{-15} 3^{-2} N^{-2} d^{-1} c_0^2 e^{-8.2c_0}$.
\end{itemize}
\end{assumption}
Note that $\tanh$ satisfies  Assumption \ref{3_4_assumptions_1} \textit{(i)}. Assumption \ref{3_4_assumptions_1} \textit{(ii)} comes from the scaling we observe in the experiments of Section \ref{sec:experiments-scaling}. Assumption \ref{3_4_assumptions_1} \textit{(iii)} requires the training points to be sufficiently orthogonal to one another. Among other cases, it is satisfied in the small data regime: take for example $N$ points uniformly at random on the $d-$dimensional sphere, where $d > N^4$. Hence, for $x_i \sim \UU( \Ss^{d-1} )$ i.i.d., we have by a union bound and Chebychev inequality:
\begin{align*}
    \Prob\left( \max_{i\neq j} \abs{ \langle x_i, x_j \rangle } > N^{-1} \right) &\leq N^2 \Prob\left( \abs{ \langle x_1, x_2 \rangle } > N^{-1} \right) \\
    &\leq N^4 \Var \left[ \langle x_1, x_2 \rangle \right] \leq N^4 \sum_{m=1}^d \E\left[ (x_1)_m^2 \right] \E\left[ (x_2)_m^2 \right] = N^4 d^{-1} < 1.
\end{align*}
Assumption \ref{3_4_assumptions_1} \textit{(iv)} guarantees that the network at initialization stay well-behaved, and does not bias the optimization path. Note also that Assumption \ref{3_4_assumptions_1} \textit{(iv)} does not rule out the case of a stochastic initialization. Assumption \ref{3_4_assumptions_1} \textit{(v)} relates to the fact that we are going to prove local convergence of gradient descent to zero training loss. Proving global convergence under our general framework is out of reach, as local minima are guaranteed to exist, see Theorem 2 in \cite{PS2021}. In this paper, we address Corollary 3 in \cite{PS2021} by providing conditions on the dataset and on the initialization procedure to show convergence of gradient descent for residual networks of large depth and finite width.
\section{Dynamics of weights and hidden states under gradient descent} \label{sec:results}

Recall that $\parms^{(L)}$ denotes a generic weight vector, whereas $A^{(L)}(t)$ denotes the weight vector obtained after $t$ iterations of gradient descent on the objective function $J_L$, where the initial weights $A^{(L)}(0)$ follow Assumption \ref{3_4_assumptions_1} \textit{(iv)}. The main results can be summarized as follows.

First, in Section \ref{sec:bounds-h-M-l}, we prove that if the network weights $\parms^{(L)}_k$ are $\OO(L^{-1/2})$,
then the hidden states $h_k^{x, \, (L)}$ and the Jacobian \begin{equation} \label{eq:def_jacobians}
    M_{k}^{x, \, (L)} \coloneqq \frac{\del h_L^{x, \, (L)}}{\del h_k^{x, \, (L)}} \in \R^{d\times d}
\end{equation} are uniformly bounded in $k$ and $L$. Then, under the same scaling assumption, we derive an upper bound for the norm of the gradient $\nabla_{\parms} J_L$ of the objective function with respect to the weights $\parms^{(L)}$.
Furthermore, we derive a lower bound for the norm of the gradient $\nabla_{\parms} J_L$
under the additional regularity assumption $\parms_{k+1}^{(L)} - \parms_{k}^{(L)} = \OO(L^{-1})$.

Next, in Section \ref{sec:behaviour-norms}, we let $\parms^{(L)}(0) \in \R^{L\times d \times d}$ be \textit{any} initialization and define recursively $\parms^{(L)}(t+1) = \parms^{(L)}(t) - \lr_L(t) \nabla_{\parms} J_L \hspace{-2pt} \left( \parms^{(L)}(t) \right) $. Under some scaling assumptions for $\parms^{(L)}(t)$ for $t=0, \ldots, T-1$, we show that the loss function $J_L \hspace{-2pt} \left( \parms^{(L)}(t) \right)$ at time $T$ admits an explicit upper bound. To show this, we study the effect of gradient descent on the following norms of the weight vector: 
\begin{equation} \label{eq:weight-norms}
\overbar{f}^{(L)} \left( \parms^{(L)}(t) \right) \coloneqq \frac{1}{2}\sum_{k=1}^L \norm{\parms^{(L)}_k(t)}_F^2 \quad \text{and} \quad \overbar{g}^{(L)} \left( \parms^{(L)}(t) \right) \coloneqq \frac{1}{2}L \sum_{k=1}^{L-1} \norm{\parms^{(L)}_{k+1}(t) - \parms_k^{(L)}(t)}_F^2.
\end{equation}
The scaling in $L$ is chosen in such a way that we will be able to prove a uniform bound (in $t$ and $L$) of the above norms along the gradient descent path $A^{(L)}(t)$ when $A^{(L)}(0)$ satisfy Assumption \ref{3_4_assumptions_1} \textit{(iii)}. 

%To do so, we use induction in $t$ in the following way. We first prove from Assumption~\ref{3_4_assumptions_1} that the hypotheses of our lemmas are verified at $t=0$, so we can use their conclusion to prove that the hypotheses of our lemmas are verified at $t=1$, and so on.

Finally in Section \ref{sec:local-convergence} we show that under Assumption \ref{3_4_assumptions_1} with the parameter $A^{(L)}(t)$ evolving according to the gradient descent dynamics \eqref{eq:A_dot}, we have that for all $\epsilon > 0$, if we let $L = \Omega(1/\epsilon)$, $\lr_L(t) = \lr_0$, and $T_L^{\const} = \Theta(\eta_0^{-1} \log L) = \Omega(\eta_0^{-1} \log 1/\epsilon)$, then $J_L \hspace{-2pt} \left( A^{(L)}(T^{\const}_L) \right) < \epsilon$. That is, the loss function can be made arbitrarily small with practical values for the depth and the number of gradient steps. To prove this, we use recursion: we first verify the scaling assumptions 
\begin{equation} \label{eq:scaling-assumptions}
A^{(L)}(t) = \OO(c_0 L^{-1/2}) \quad \textrm{and} \quad A_{k+1}^{(L)}(t) - A_{k}^{(L)}(t) = \OO(e^{-4.2c_0} L^{-1}),
\end{equation}
at initialization, i.e. for $t=0$. This enables us to use the results of Section \ref{sec:bounds-h-M-l} to deduce an upper bound on the loss function $J_L \hspace{-2pt} \left( A^{(L)}(1) \right)$ at time $t=1$, which in turn yields that the scaling assumptions \eqref{eq:scaling-assumptions} are verified for $t=1$. We continue this process until the upper bound on the loss is smaller than $\epsilon$. \\
Further, we prove that for $T_L$ satisfying \eqref{eq:lr_bounds}, if the (pointwise) limit
\begin{equation} \label{scaling-limit}
\overbar{A}^*_s \coloneqq \lim_{L \to\infty} A^{(L)}_{\floor{Ls}}(T_L)
\end{equation}
converges uniformly in $s\in\left[0,1\right]$ at a $\OO(L^{-1/2})$ rate, then $\overbar{A}^*$ is of finite $2$-variation, giving an implicit regularity to the solution found by gradient descent. The numerical experiments in Section \ref{sec:numerical-experiments} confirm that these effects are observable in settings relevant to practical supervised learning problems.

\subsection{Bounds on the hidden states, their Jacobians, and the loss gradients} \label{sec:bounds-h-M-l}

We start the analysis by computing bounds on the hidden states and their Jacobians \eqref{eq:def_jacobians}. To do so, we define the following norm on the weights: 
\begin{equation} \label{def:norm-weights}
\norm{\parms^{(L)}}_{F, \infty} \coloneqq \max_{k=1, \ldots, L} \norm{\parms^{(L)}_k}_F,
\end{equation} 
where $\parms^{(L)}\in\R^{L\times d\times d}$ is a generic weight vector. We check that the hidden states are uniformly bounded from above and below in $k$ and $L$, and we prove an upper bound on the Jacobians, uniformly in $k$ and $L$. We get explicit bounds when $L$ is large enough:
\[
\norm{x}_2 e^{-2\Ca} \leq \norm{h^{x, \, (L)}_k}_2 \leq \norm{x}_2 e^{1.1\Ca} \quad \mbox{and} \quad \norm{M^{x, \, (L)}_{k} e_m}_2 \leq e^{\Ca},
\]
given the assumption that $\norm{\parms^{(L)}}_{F, \infty} \leq \Ca L^{-1/2}$. The proof can be found in Appendix \ref{app:proof_forward _backward}. Note that the bounds are deterministic, unlike the probabilistic results from \cite{ALS2019, ACGH2019}. Next, we derive that the norm of the gradient of the objective function is bounded above by $J_L^{1/2}$, so that it ensures that the gradient updates \eqref{eq:A_dot} stay local. The precise result and its proof can be found in Appendix \ref{app:gradient_estimation}. 

More crucially, we also need a lower bound on the norm of the gradient as a function of the \textit{suboptimality} gap. We first establish a lower bound for the gradient of the loss with respect to the weights of the first layer.

\begin{lemma} \label{lemma:gradient-lower}
Under Assumption \ref{3_4_assumptions_1} \textit{(i)}--\textit{(iii)}, let $\parms^{(L)}\in\R^{L\times d\times d}$ such that $L \geq \max(5c_0, 4c_0^2)$ and $\norm{\parms^{(L)}}_{F, \infty} \leq c_0 L^{-1/2}$ hold. Then,  we have
\begin{equation*}
\norm{\nabla_{\parms_1} J_L\left(\parms^{(L)} \right)}_F^2 \geq 
\frac{1}{4N} e^{-2c_0} L^{-1} J_L\left(\parms^{(L)} \right).
\end{equation*}
\end{lemma}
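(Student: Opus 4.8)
The plan is to lower-bound the full gradient norm by isolating a single, carefully chosen component of $\nabla_{\parms_1} J_L$ and showing this component already captures a fixed fraction of the suboptimality $J_L$. The gradient with respect to the first-layer weights, by the chain rule through the ResNet recursion \eqref{resnets_wo_bias}, takes the form
\begin{equation*}
\nabla_{\parms_1} J_L\left(\parms^{(L)}\right) = -\frac{1}{N}\sum_{i=1}^N \de\, \diag\!\left(\sigma'_d\!\left(\parms_1^{(L)} x_i\right)\right) \left(M_1^{x_i,\,(L)}\right)^{\!\top}\!\left(y_i - \widehat{y}_L(x_i,\parms^{(L)})\right) x_i^\top,
\end{equation*}
where $M_1^{x_i,(L)}$ is the Jacobian of the output with respect to $h_1^{x_i,(L)}$ and $\de = L^{-1/2}$. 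First I would test this matrix against a suitable unit rank-one direction $v\, x_j^\top$ for a fixed training index $j$, using $\norm{\nabla_{\parms_1} J_L}_F \geq \langle \nabla_{\parms_1}J_L, v\,x_j^\top\rangle$ for any unit vectors $v, x_j$.

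The key steps, in order, are the following. (1) Pick $j$ to be the index attaining the largest residual $\norm{y_j - \widehat{y}_L(x_j,\parms^{(L)})}_2$, and choose $v$ aligned with the ``backpropagated residual'' $\diag(\sigma'_d(\parms_1^{(L)}x_j))(M_1^{x_j,(L)})^\top(y_j-\widehat{y}_L(x_j))$ so that the diagonal term $i=j$ in the sum contributes a positive quantity of order $\de\, e^{-\Ca}\norm{y_j-\widehat{y}_L(x_j)}_2\norm{x_j}_2^2$. (2) Use Assumption \ref{3_4_assumptions_1}\textit{(i)} to control $\sigma'$ (bounded by $1$, and close to $1$ near the origin since $\sigma'(0)=1$ and $\abs{\sigma''}\leq 1$, while $\norm{\parms_1^{(L)}x_j}_2 = \OO(L^{-1/2})$ is small) so that the diagonal term is bounded below, not merely above. (3) Bound the cross terms $i\neq j$ using the near-orthogonality $\abs{\langle x_i,x_j\rangle}\leq (8N)^{-1}e^{-4c_0}$ from Assumption \ref{3_4_assumptions_1}\textit{(iii)} together with the uniform Jacobian bound $\norm{M_1^{x_i,(L)}e_m}_2\leq e^{\Ca}$ from Section \ref{sec:bounds-h-M-l}; these terms are suppressed by the inner product and can be absorbed into a constant fraction of the diagonal term. (4) Relate $\norm{y_j-\widehat{y}_L(x_j)}_2^2$ to $J_L$: since $j$ maximizes the residual, $\norm{y_j-\widehat{y}_L(x_j)}_2^2 \geq \frac{2}{N}\sum_i \frac{1}{2}\norm{y_i-\widehat{y}_L(x_i)}_2^2 \cdot \frac{1}{1} $, i.e. $\norm{y_j-\widehat{y}_L(x_j)}_2^2\geq \frac{2}{N}\cdot N\cdot J_L = 2J_L$ is too strong, but the correct bound $\norm{y_j-\widehat{y}_L(x_j)}_2^2 \geq \frac{2N}{N}J_L = 2J_L/1$ follows simply from $\max_i a_i \geq \frac{1}{N}\sum_i a_i$, giving $\norm{y_j-\widehat{y}_L(x_j)}_2^2\geq 2J_L$. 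Squaring the lower bound on $\langle\nabla_{\parms_1}J_L, v\,x_j^\top\rangle$ and inserting $\de^2 = L^{-1}$ then produces the factor $L^{-1}$ and the residual-squared, yielding a bound of the form $c\,N^{-1}e^{-2c_0}L^{-1}J_L$.

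I expect the main obstacle to be Step (3): controlling the cross terms so that they cannot cancel the diagonal contribution. The near-orthogonality bound in Assumption \ref{3_4_assumptions_1}\textit{(iii)} carries the factor $e^{-4c_0}$ precisely so that, after it is amplified by the Jacobian bound $e^{\Ca}$ (and $\Ca$ is comparable to $c_0$), the net suppression still leaves the diagonal term dominant; tracking these competing exponentials and the factor $N$ from summing over $i\neq j$ is where the explicit constant $\frac{1}{4N}e^{-2c_0}$ is forged, and getting a clean inequality rather than a vacuous one requires the hypotheses $L\geq \max(5c_0,4c_0^2)$ to ensure $\norm{\parms_1^{(L)}x_j}_2$ is small enough that $\sigma'$ stays bounded away from zero on the relevant range. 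A secondary technical point is verifying that the chosen test direction $v$ genuinely aligns with the diagonal term after the $\sigma'$ and $M_1^\top$ factors are applied, which is why I would define $v$ directly as the normalized backpropagated residual rather than as $y_j-\widehat{y}_L(x_j)$ itself.
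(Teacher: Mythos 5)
Your ingredient list is essentially the right one --- the lower bound $\dot{\sigma}_{1,x_i,m}\geq \tfrac12$ coming from $\abs{\sigma''}\leq 1$ and $L\geq 4c_0^2$, a singular-value lower bound $\norm{(M_1^{x_i})^{\top}w}_2\gtrsim e^{-c_0}\norm{w}_2$ via a product-of-matrices estimate, and near-orthogonality to suppress the cross terms; these are exactly the ingredients of the paper's proof. However, the reduction in your Step (1) to a \emph{single} rank-one test direction $v\,x_j^{\top}$ loses a factor of $N$ and therefore cannot produce the stated constant $\frac{1}{4N}e^{-2c_0}L^{-1}$. Concretely: with $\norm{v\,x_j^{\top}}_F=1$, your pairing isolates only the $i=j$ term of $\frac{\de}{N}\sum_i(\cdots)\langle x_i,x_j\rangle$, which is of size $\frac{\de}{N}\cdot c\,e^{-c_0}\norm{y_j-\widehat{y}_j}_2$; squaring and using $\norm{y_j-\widehat{y}_j}_2^2\geq 2J_L$ (your Step (4)) yields a bound of order $N^{-2}e^{-2c_0}L^{-1}J_L$ --- one factor of $N$ short of the lemma. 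The paper instead expands the full Frobenius norm as the double sum $\frac{\de^2}{N^2}\sum_{i,j}\langle x_i,x_j\rangle\,\widetilde{M}_{1,i,j}$ and keeps \emph{all} $N$ diagonal terms, so the diagonal contribution is $\frac{\de^2}{N^2}\cdot\frac14 e^{-2c_0}\sum_i\norm{y_i-\widehat{y}_i}_2^2=\frac{1}{2N}e^{-2c_0}L^{-1}J_L$, of which the cross terms (bounded via $\abs{\langle x_i,x_j\rangle}\leq(8N)^{-1}e^{-4c_0}$ and $\abs{\widetilde{M}_{1,i,j}}\leq e^{2c_0}\norm{y_i-\widehat{y}_i}_2\norm{y_j-\widehat{y}_j}_2$) eat exactly half. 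To salvage your route you would need to test against an aggregated direction $\sum_j v_j x_j^{\top}$, whose Frobenius norm you would again control using near-orthogonality --- at which point you have reconstructed the paper's computation.

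A secondary point: the arithmetic in your Step (4) is garbled as written (``$2J_L$ is too strong \ldots $=2J_L/1$''), though the inequality you ultimately invoke, $\max_i\norm{y_i-\widehat{y}_i}_2^2\geq\frac1N\sum_i\norm{y_i-\widehat{y}_i}_2^2=2J_L$, is correct. Your Steps (2) and (3) are sound in outline, and your remark that the $e^{-4c_0}$ in Assumption \ref{3_4_assumptions_1} \textit{(iii)} is calibrated to survive amplification by the Jacobian bounds correctly identifies where the constant is forged; the only genuine gap is the lost factor of $N$ from the rank-one reduction.
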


\begin{proof}
Fix $L \geq \max(5c_0, 4c_0^2)$. In the proof, we omit the explicit dependence in $L$. Observe first that  
\begin{align*}
    \norm{\nabla_{\parms_k} J_L(\parms)}_F^2 &= \sum_{m,n=1}^d \left( \frac{1}{N} \sum_{i=1}^N \frac{\del \ell}{\del \parms_{k, mn}}\left( y_i, \widehat{y}\left(x_i, \parms \right)  \right) \right)^2 \\
    &= \sum_{m,n=1}^d \frac{\deltal^2}{N^2} \sum_{i,j=1}^N h_{k-1,n}^{x_i} h_{k-1,n}^{x_j} \dot{\sigma}_{k,x_i,m} \dot{\sigma}_{k,x_j,m} \left( \left( M_{k}^{x_i} \right)^{\top} \left( \widehat{y}\left(x_i, \parms \right) - y_i \right) \right)_m \\
    &\hspace{1.5cm} \left( \left( M_{k}^{x_j} \right)^{\top} \left( \widehat{y}\left(x_j, \parms \right) - y_j \right) \right)_m \\
    &= \frac{\deltal^2}{N^2} \sum_{i,j=1}^N \big\langle h_{k-1}^{x_i}, \, h_{k-1}^{x_j} \big\rangle \, \widetilde{M}_{k, i, j},
\end{align*}
where \[
\widetilde{M}_{k, i, j} =  \Big\langle \dot{\sigma}_{k,x_i} \odot \left( M_{k}^{x_i} \right)^{\top} \left( \widehat{y}\left(x_i, \parms \right) - y_i \right), \,\, \dot{\sigma}_{k,x_j} \odot \left( M_{k}^{x_j} \right)^{\top} \left( \widehat{y}\left(x_j, \parms \right) - y_j \right) \Big\rangle.
\]
We focus on the case $k=1$. We first estimate, by Cauchy-Schwarz and Lemma \ref{lem:3_4_bounded_forward}, 
\begin{align*}
\abs{ \widetilde{M}_{1, i, j} } &\leq  \norm{ M_{1}^{x_i} }_2 \norm{ M_{1}^{x_j} }_2 \norm{ \widehat{y}\left(x_i, \parms \right) - y_i }_2 \norm{ \widehat{y}\left(x_j, \parms \right) - y_j }_2 \\
&\leq e^{2c_0} \norm{ \widehat{y}\left(x_i, \parms \right) - y_i }_2 \norm{ \widehat{y}\left(x_j, \parms \right) - y_j }_2. 
\end{align*}
\paragraph{Lower bound when $i=j$} First, as $\abs{\sigma''} \leq 1$ and $L \geq 4c_0^2$, we have $\dot{\sigma}_{1,x_i,m} = \sigma'(\parms_1 x_i)_m \geq 1 - \norm{\parms_1}_F \norm{x_i}_2 \geq 1 - c_0 L^{-1/2} \geq \frac{1}{2}$. Hence,
\begin{align*}
\widetilde{M}_{1, i, i} &= \norm{ \dot{\sigma}_{1,x_i} \odot \left( M_{1}^{x_i} \right)^{\top} \left( \widehat{y}\left(x_i, \parms \right) - y_i \right) }_2^2 \\
&\geq \frac{1}{4} \norm{\widehat{y}\left(x_i, \parms \right) - y_i}_2^2 \prod_{k=1}^L \left(1 - \deltal \norm{ \diag( \dot{\sigma}_{k, x_i}) \parms_k }_2 \right)^2 \\
&\geq \frac{1}{4} \left(1-\frac{c_0}{L} \right)^{2L} \norm{\widehat{y}\left(x_i, \parms \right) - y_i}_2^2 \geq \frac{1}{4} e^{-2c_0} \norm{\widehat{y}\left(x_i, \parms \right) - y_i}_2^2,
\end{align*}
where we applied Lemma \ref{app:lemma_product_matrix} in the second line, and the fact that $\norm{\cdot}_2 \leq \norm{\cdot}_F$. By Assumption \ref{3_4_assumptions_1} \textit{(iii)}, $\abs{ \langle x_i, x_j \rangle} \leq (8N)^{-1} e^{-4c_0}$ for all $i\neq j$, so we deduce 
\begin{align*}
    \norm{ \nabla_{\parms_1} J_L(\parms) }_F^2 &= \frac{\deltal^2}{N^2} \left( \sum_{i=1}^N \widetilde{M}_{1, i, i} \norm{x_i}_2^2 + \sum_{i\neq j} \widetilde{M}_{1, i, j} \langle x_i, x_j \rangle \right) \\
    &\geq \frac{1}{LN^2} \left( \frac{N}{2} e^{-2c_0} J_L(\parms)  - \frac{1}{8N} e^{-4c_0} \sum_{i\neq j} \abs{ \widetilde{M}_{1, i, j} } \right) \\
    &\geq \frac{1}{LN^2} \left( \frac{N}{2} e^{-2c_0} J_L(\parms)  - \frac{1}{8N} e^{-2c_0} \sum_{i\neq j} \norm{ \widehat{y}\left(x_i, \parms \right) - y_i }_2 \norm{ \widehat{y}\left(x_j, \parms \right) - y_j }_2   \right) \\
    &\geq \frac{1}{LN^2} \left( \frac{N}{2} e^{-2c_0} J_L(\parms)  - \frac{1}{8N} e^{-2c_0} \left( \sum_{i=1}^N \norm{ \widehat{y}\left(x_i, \parms \right) - y_i }_2 \right)^2 \right) \\
    &\geq \frac{1}{LN^2} \left( \frac{N}{2} e^{-2c_0} J_L(\parms)  - \frac{N}{4} e^{-2c_0} J_L(\parms) \right) = \frac{1}{4N} e^{-2c_0} L^{-1} J_L(\parms).
\end{align*}
\end{proof}

Next, if we assume that the weights $\parms^{(L)}$ are close to each other in neighbouring layers, we can deduce that the gradient of the loss with respect to weights in neighbouring layers are also close to each other. Hence, if we couple this fact with Lemma \ref{lemma:gradient-lower}, we can prove a lower bound on the norm of the gradient of the loss with respect to the full weight vector $\parms^{(L)}$. 

\begin{lemma} \label{lemma:lower-bound-grad}
Under Assumption \ref{3_4_assumptions_1} \textit{(i)}--\textit{(iii)}, let $\parms^{(L)}\in\R^{L\times d\times d}$ such that $L \geq \max(5c_0, 4c_0^2)$, $\norm{\parms^{(L)}}_{F, \infty} \leq c_0 L^{-1/2}$, and $\norm{\parms^{(L)}_{k+1} - \parms^{(L)}_{k}}_{F} \leq 2^{-7/2} N^{-1/2} e^{-4.2c_0} L^{-1}$ for each $k$.  Then, \[
\norm{\nabla_{\parms^{(L)}} J_L \hspace{-2pt} \left( \parms^{(L)} \right)}_F^2 \geq \left( \frac{1}{16} N^{-1} e^{-2c_0}  - 17 d c_0^4 e^{6.4c_0} L^{-1} \right) J_L(\parms).
\]
%\begin{itemize}
%    \setlength\itemsep{0.6em}
%    \item[(i)] $\norm{ \nabla_{\parms_{k+1}} J_L \hspace{-2pt} \left( \parms^{(L)} \right) - \nabla_{\parms_{k}} J_L \hspace{-2pt} \left( \parms^{(L)} \right) }_F^2 \leq  \frac{1}{16} e^{-2c_0} N^{-1} L^{-3} J_L \hspace{-2pt} \left(\parms^{(L)} \right) + 17 c_0^4 e^{6.4c_0} c_J L^{-4}.$
%    \item[(ii)] $\norm{\nabla_{\parms^{(L)}} J_L \hspace{-2pt} \left( \parms^{(L)} \right)}_F^2 \geq \frac{1}{16} e^{-2c_0} N^{-1} J_L \hspace{-2pt} \left( \parms^{(L)} \right) - 17c_0^4 e^{6.4c_0}c_J L^{-1}.$
%\end{itemize}
\end{lemma}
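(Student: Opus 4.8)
The plan is to exploit the additive decomposition $\norm{\nabla_{\parms} J_L(\parms)}_F^2 = \sum_{k=1}^L \norm{\nabla_{\parms_k} J_L(\parms)}_F^2$ and to transfer the first-layer estimate of Lemma \ref{lemma:gradient-lower}, namely $\norm{\nabla_{\parms_1} J_L(\parms)}_F^2 \geq \frac{1}{4N} e^{-2c_0} L^{-1} J_L(\parms)$, to every layer. Since that bound carries a factor $L^{-1}$, if one can show $\norm{\nabla_{\parms_k} J_L(\parms)}_F \geq \tfrac12 \norm{\nabla_{\parms_1} J_L(\parms)}_F$ for each $k$, then squaring and summing over the $L$ layers cancels the $L^{-1}$ and yields $\sum_{k=1}^L \norm{\nabla_{\parms_k} J_L}_F^2 \geq \frac{L}{4} \norm{\nabla_{\parms_1} J_L}_F^2 \geq \frac{1}{16N} e^{-2c_0} J_L(\parms)$, which is exactly the claimed leading term. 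The role of the regularity hypothesis $\norm{\parms_{k+1}^{(L)} - \parms_k^{(L)}}_F = \OO(L^{-1})$ is to guarantee that neighbouring per-layer gradients are close enough for this transfer to hold up to an $\OO(L^{-1})$ correction.

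The key quantitative estimate is a bound on the layer-to-layer increment of the gradient. Writing the per-layer gradient in the rank-structured form $\nabla_{\parms_k} J_L = \frac{\deltal}{N} \sum_{i=1}^N \big( \dot\sigma_{k,x_i} \odot (M_k^{x_i})^\top (\widehat y(x_i,\parms) - y_i) \big) (h_{k-1}^{x_i})^\top$ already appearing in the proof of Lemma \ref{lemma:gradient-lower}, I would split $\nabla_{\parms_{k+1}} J_L - \nabla_{\parms_k} J_L$ into a contribution from the forward increment $h_k^{x_i} - h_{k-1}^{x_i}$ and a contribution from the change of the backpropagated vector $\dot\sigma_{k,x_i} \odot (M_k^{x_i})^\top (\widehat y(x_i,\parms) - y_i)$. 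The forward increment is $\deltal\, \sigma_d(\parms_k h_{k-1}^{x_i})$, of size $\OO(L^{-1})$ by the scaling of $\parms_k$ and the forward bounds of Section \ref{sec:bounds-h-M-l}; the change of $\dot\sigma_{k,x_i}$ is tied to the weight increment through $\abs{\sigma''} \leq 1$ and $\parms_{k+1} h_k^{x_i} - \parms_k h_{k-1}^{x_i} = \parms_{k+1}(h_k^{x_i} - h_{k-1}^{x_i}) + (\parms_{k+1} - \parms_k) h_{k-1}^{x_i}$; and the change of the Jacobian is controlled through the backward recursion $(M_k^{x_i})^\top = (I + \deltal\, \parms_{k+1}^\top \diag(\dot\sigma_{k+1,x_i}))(M_{k+1}^{x_i})^\top$. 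Inserting the uniform bounds $\norm{M_k^{x_i} e_m}_2 \leq e^{c_0}$, $\norm{h_k^{x_i}}_2 \leq e^{1.1c_0}$ from Section \ref{sec:bounds-h-M-l} together with $\frac1N \sum_i \norm{\widehat y(x_i,\parms) - y_i}_2 \leq (2 J_L(\parms))^{1/2}$ produces a per-layer increment of order $L^{-3/2} J_L(\parms)^{1/2}$, with an explicit constant polynomial in $c_0$ times $e^{\OO(c_0)}$ and $\sqrt d$.

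I would then sum these increments, $\norm{\nabla_{\parms_k} J_L - \nabla_{\parms_1} J_L}_F \leq \sum_{j=1}^{k-1} \norm{\nabla_{\parms_{j+1}} J_L - \nabla_{\parms_j} J_L}_F$, and combine with the reverse triangle inequality $\norm{\nabla_{\parms_k} J_L}_F \geq \norm{\nabla_{\parms_1} J_L}_F - \norm{\nabla_{\parms_k} J_L - \nabla_{\parms_1} J_L}_F$ and Lemma \ref{lemma:gradient-lower}. The precise prefactor $2^{-7/2} N^{-1/2} e^{-4.2 c_0}$ of the regularity hypothesis is calibrated against the $e^{\OO(c_0)}$ forward and Jacobian bounds so that the accumulated increment stays below $\tfrac12 \norm{\nabla_{\parms_1} J_L}_F$, yielding the factor $\tfrac14$ and hence the leading coefficient $\frac{1}{16} N^{-1} e^{-2c_0}$, while the cross terms produced by the squaring leave the negative remainder $-17 d\, c_0^4 e^{6.4 c_0} L^{-1} J_L(\parms)$.

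The step I expect to be the main obstacle is the control of the change of the backpropagated vector across layers. The backward recursion shows that $(M_k^{x_i})^\top$ changes from one layer to the next by the factor $I + \deltal\, \parms_{k+1}^\top \diag(\dot\sigma_{k+1,x_i})$, whose deviation from the identity is governed by the \emph{absolute} weight scale $\deltal \norm{\parms_{k+1}}_F = \OO(L^{-1})$ and not by the weight increment $\norm{\parms_{k+1} - \parms_k}_F$. A naive triangle-inequality accumulation of this effect over all $L$ layers would produce a change of $\nabla_{\parms_k} J_L$ comparable to $\norm{\nabla_{\parms_1} J_L}_F$ itself, which would destroy the leading constant; the delicate part of the argument is therefore to organize the estimate — contracting the Jacobian change with $h_k^{x_i}$, summing over the data, and absorbing the $\deltal$ prefactor — so that this backpropagation drift contributes only at order $L^{-1} J_L(\parms)$ to the summed squared norm. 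The thresholds $L \geq \max(5 c_0, 4 c_0^2)$ and the exact exponential factors in Assumption \ref{3_4_assumptions_1} are what make this bookkeeping close.
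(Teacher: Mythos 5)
Your proposal follows essentially the same route as the paper: lower-bound the first-layer gradient via Lemma \ref{lemma:gradient-lower}, control the layer-to-layer increment of $\nabla_{\parms_k} J_L$ at order $L^{-3/2} J_L(\parms)^{1/2}$ using the backward recursion for $M_k$ together with the regularity hypothesis (this is exactly the paper's Lemma \ref{lem:neighbour-grad}), and transfer the bound to every layer by a reverse triangle inequality before summing over $k$ to cancel the $L^{-1}$. The obstacle you flag --- that the Jacobian drift is governed by the absolute weight scale rather than the increment --- is resolved just as you anticipate, via the extra $\deltal$ prefactor that relegates it to the $\OO(L^{-1})$ remainder.
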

\vspace{0.2cm}
\begin{proof}
Fix $L \geq \max(5c_0, 4c_0^2)$. In the proof, we omit the explicit dependence in $L$. We use Lemma \ref{lem:neighbour-grad} to estimate the difference of neighbouring gradients:
\begin{align*}
\frac{\del J_L}{\del \parms_{k, mn}} - \frac{\del J_L}{\del \parms_{k+1, mn}} &= \frac{\deltal}{N} \sum_{i=1}^N h_{k-1, n}^{x_i} \left( \dot{\sigma}_{k, x_i, m} - \dot{\sigma}_{k+1, x_i, m} \right) \nabla_{\widehat{y}} \, \ell\left(y_i, \widehat{y}(x_i, \parms) \right)^{\top} M_{k+1}^{x_i} e_m  \\
    &+ \frac{\deltal^2}{N} \sum_{i=1}^N \nabla_{\widehat{y}} \, \ell\left(y_i, \widehat{y}(x_i, \parms) \right)^{\top} M_{k+1}^{x_i} \xi^{x_i, \, (L)}_{k, mn},
\end{align*}
where $\xi^{x, \, (L)}_{k, mn}$ satisfies
\[
\norm{ \xi^{x, \, (L)}_{k, mn} }_2^2 \leq 2\left( h^x_{k-1,n} \right)^2 \norm{\parms_{k+1} - \parms_k}_F^2 + 2 \norm{\parms_{k,n}}_2^4 \norm{h_{k-1}^x}_2^4.
\]
By Lemma \ref{lem:3_4_bounded_forward} and the fact that $\sigma'$ is $1-$Lipschitz by Assumption \ref{3_4_assumptions_1} \textit{(i)}, we bound further:
\begin{align*}
&\norm{ \nabla_{\parms_{k+1}} J_L\left( \parms \right) - \nabla_{\parms_{k}} J_L\left( \parms \right) }_F^2 = \sum_{m,n=1}^d \left( \frac{\del J_L}{\del \parms_{k, mn}} - \frac{\del J_L}{\del \parms_{k+1, mn}} \right)^2 \\
&\leq 4 \sum_{m,n=1}^d \frac{1}{LN} \sum_{i=1}^N \left( h_{k-1, n}^{x_i} \right)^2 \norm{M^{x_i}_{k} e_m}_2^2 \left( \parms_k h^{x_i}_{k-1} - \parms_{k+1} h^{x_i}_k \right)_m^2 \ell(y_i, \widehat{y}(x_i, \parms)) \\
&\qquad + \frac{4}{L^2N} \sum_{i=1}^N e^{2c_0} \left( 2d e^{2.2c_0} \norm{\parms_{k+1} - \parms_k}_F^2 + 2d c_0^4 e^{4.4c_0} L^{-2} \right) \ell(y_i, \widehat{y}(x_i, \parms)) \\
&\leq  e^{4.2c_0} \frac{4}{LN}\sum_{i=1}^N \norm{ \parms_k h^{x_i}_{k-1} - \parms_{k+1} h^{x_i}_k }_2^2 \ell(y_i, \widehat{y}(x_i, \parms)) + 9d c_0^4 e^{6.4c_0} L^{-4} J_L(\parms).
\end{align*}
Then, simply note that 
\begin{align*}
\norm{ \parms_k h^{x_i}_{k-1} - \parms_{k+1} h^{x_i}_k }_2^2 &\leq 2\norm{ \left( \parms_{k+1} - \parms_{k} \right) h^{x_i}_k}_2^2 +  2\norm{ \parms_k \left(h^{x_i}_{k} - h^{x_i}_{k-1} \right)}_2^2 \\
&\leq \frac{1}{64} N^{-1} e^{-6.2c_0}  L^{-2} + 2c_0^4 e^{2.2c_0} L^{-3}.  
\end{align*}
Hence, 
\begin{equation*}
    \norm{ \nabla_{\parms_{k+1}} J_L\left( \parms \right) - \nabla_{\parms_{k}} J_L\left( \parms \right) }_F^2 \leq \left( \frac{1}{16} N^{-1} e^{-2c_0}  + 17d c_0^4 e^{6.4c_0} L^{-1} \right) L^{-3} J_L(\parms). 
\end{equation*}
Finally, we use the reverse triangle inequality and Cauchy-Schwarz inequality:
\begin{align*}
    \norm{ \nabla_{\parms_k} J_L\left( \parms \right) }_F^2 &\geq \frac{1}{2} \norm{ \nabla_{\parms_1} J_L\left( \parms \right) }_F^2 - (k-1) \sum_{k'=1}^{k-1} \norm{ \nabla_{\parms_{k+1}} J_L\left( \parms \right) - \nabla_{\parms_{k}} J_L\left( \parms \right)}_F^2 \\
    &\geq \frac{1}{8} N^{-1} e^{-2c_0} L^{-1} J_L(\parms) - \frac{(k-1)^2}{L^3} \left( \frac{1}{16} N^{-1} e^{-2c_0} + 17d c_0^4 e^{6.4c_0} L^{-1} \right) J_L(\parms)  \\
    &\geq \left( \frac{1}{16} N^{-1} e^{-2c_0} - 17d c_0^4 e^{6.4c_0} L^{-1} \right) L^{-1} J_L(\parms).
\end{align*}
The second inequality holds by Lemma \ref{lemma:gradient-lower} and \textit{(i)} above. Hence, \[
\norm{\nabla_{\parms} J_L \hspace{-2pt} \left( \parms^{(L)}\right) }_F^2 = \sum_{k=1}^L \norm{\nabla_{\parms_k} J_L(\parms^{(L)})}_F^2 \geq \left( \frac{1}{16} N^{-1} e^{-2c_0}  - 17d c_0^4 e^{6.4c_0} L^{-1} \right) J_L(\parms).
\]
\end{proof}

It guarantees that for $L\gg 1$, every critical point close to the origin is a global minimum of the objective function, similarly to what is known for linear residual networks \cite{HM2017, K2016, LB2018, LK2017}. 
%Precise statements are  proofs may be found in Appendix \ref{app:lower_bound_gradient}. 

%Finally, we obtain a decomposition of  the Hessian of $J_L$ as $$\nabla^2 J_L\left(A^{(L)} \right) = H^{(L)}_{\psd} + H^{(L)} + \widetilde{H}^{(L)} + \OO(L^{-1})$$ where $H^{(L)}_{\psd}$ is positive semi-definite, with a trace bounded uniformly  in $L$ and \[
%\norm{H^{(L)}}_F, \,\, \norm{\widetilde{H}^{(L)}}_F = \OO\left(J_L\left(A^{(L)}\right)^{1/2} \right).
%\]
%The precise statement is given in Proposition \ref{app:hessian}.

\subsection{Behaviour of weight norms along the gradient descent path} \label{sec:behaviour-norms}

%Using the bounds obtained in Section \ref{sec:bounds-h-M-l}, we can deduce that if the norm of the weights \eqref{def:norm-weights} are well-behaved along the training path, that is, if 
% \begin{eqnarray*}\label{eq:boundedness_condition}
%\norm{ A^{(L)} }_{F, \infty} = \OO(L^{-1/2}) \quad \text{and} \quad \max_k \norm{A^{(L)}_{k+1} - A^{(L)}_{k}}_{F} = \OO( N^{-1/2} L^{-1}),
%\end{eqnarray*}
%then the loss function satisfies a \textit{discrete Gronwall} estimate. %Appendix \ref{app:behaviour-norms} contains the exact result and its proof. Note that Lemma \ref{bound-loss-function} is an \textit{a posteriori} result, that is, we have to know the behaviour of various norms along the entire training path to be able to deduce that the loss function locally decreases. Fortunately, there is a way to study the norms along the gradient descent path using the following lemmas.
%{Before proving this result formally in Lemma \ref{bound-loss-function}, we provide in Lemma \ref{lemma-f} and Lemma \ref{lemma-g} some properties  of various norms along the entire training path.}

In Section \ref{sec:bounds-h-M-l}, we establish bounds on the gradient of the loss function evaluated at a generic weight vector $\parms^{(L)}\in\R^{L\times d\times d}$. We now proceed to understand how $\parms^{(L)}$ changes under a gradient descent update. To do so, we study the local version of the weight norms defined in \eqref{eq:weight-norms}.
Define for $x,y\in\R^d$ and $k = 0, \ldots, L$: 
\begin{equation} \label{def_G}
G^{\, x, y, \, (L)}_{k} \hspace{-2pt} \left( \parms^{(L)} \right) \coloneqq \frac{\del \ell(y, \,\cdot\,)}{\del h^{(L)}_k}\left( \widehat{y}\big( x, \parms^{(L)} \big) \right) \in \R^d.
\end{equation}
Also, for clarity, denote $h_k^{x, (L)} \hspace{-2pt} \left( \parms^{(L)} \right) \in \R^d$ for the hidden state of the $k^{th}$ layer using input $x\in\R^d$ and network weights $\parms^{(L)}\in\R^{L\times d \times d}$.

\begin{lemma} \label{lemma-f}
Let $\parms^{(L)}\in\R^{L\times d\times d}$ and define $\widetilde{\parms}^{(L)} \coloneqq \parms^{(L)} - \lr_L \nabla_{\parms} J_L\left( \parms^{(L)} \right)$. Define further $f^{(L)}_{k,m}\left( \parms^{(L)} \right) \coloneqq \frac{1}{2} L \norm{\parms^{(L)}_{k, m}}_2^2$  Under Assumption \ref{3_4_assumptions_1} \textit{(i)}--\textit{(ii)}, we have
\begin{equation*}
    f^{(L)}_{k,m}\left(\widetilde{\parms}^{(L)}\right)^{1/2} \leq f^{(L)}_{k,m}\left(\parms^{(L)}\right)^{1/2} + \frac{1}{\sqrt{2}} \lr_L \left( \frac{1}{N} \sum_{i=1}^N \norm{h_{k-1}^{x_i,\,(L)} \hspace{-2pt} \left( \parms^{(L)} \right) }_2^2 \norm{ G^{\, x_i, y_i, \, (L)}_{k}\hspace{-2pt}\left( \parms^{(L)} \right) }_{\infty}^2 \right)^{1/2}.
\end{equation*}
\end{lemma}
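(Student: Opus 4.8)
The plan is to work one row at a time. Since $f^{(L)}_{k,m}\left(\parms^{(L)}\right)^{1/2} = \sqrt{L/2}\,\norm{\parms^{(L)}_{k,m}}_2$ and the gradient step acts on each row as $\widetilde{\parms}^{(L)}_{k,m} = \parms^{(L)}_{k,m} - \lr_L \nabla_{\parms_{k,m}} J_L\left(\parms^{(L)}\right)$, I would first apply the triangle inequality in $\R^d$ to obtain
\begin{equation*}
f^{(L)}_{k,m}\left(\widetilde{\parms}^{(L)}\right)^{1/2} \leq f^{(L)}_{k,m}\left(\parms^{(L)}\right)^{1/2} + \sqrt{\tfrac{L}{2}}\,\lr_L \norm{\nabla_{\parms_{k,m}} J_L\left(\parms^{(L)}\right)}_2.
\end{equation*}
The entire statement then reduces to the pointwise bound $\sqrt{L}\,\norm{\nabla_{\parms_{k,m}} J_L}_2 \leq \left( \frac{1}{N} \sum_{i=1}^N \norm{h_{k-1}^{x_i}}_2^2 \norm{G_k^{x_i,y_i}}_\infty^2 \right)^{1/2}$, since then $\sqrt{L/2}\,\lr_L\norm{\nabla_{\parms_{k,m}}J_L}_2 = \frac{1}{\sqrt 2}\lr_L\cdot\sqrt L\,\norm{\nabla_{\parms_{k,m}}J_L}_2$ gives exactly the claimed constant $1/\sqrt 2$.

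Next I would compute the row-gradient explicitly. Changing the entry $\parms_{k,mn}$ affects the hidden state only through its $m$-th coordinate, since by \eqref{resnets_wo_bias} the coordinate $(h_k)_{m'} = (h_{k-1})_{m'} + \deltal \sigma\big((\parms_k h_{k-1})_{m'}\big)$ depends on row $m$ only when $m'=m$, while $h_{k-1}$ is independent of $\parms_k$. The chain rule, together with $\del \ell_i / \del (h_k)_m = (G_k^{x_i,y_i})_m$ from the definition \eqref{def_G} of $G$ and $\del (h_k)_m / \del \parms_{k,mn} = \deltal\, \dot{\sigma}_{k,x_i,m}\, h_{k-1,n}^{x_i}$, then yields
\begin{equation*}
\frac{\del J_L}{\del \parms_{k,mn}}\left(\parms^{(L)}\right) = \frac{\deltal}{N} \sum_{i=1}^N \dot{\sigma}_{k,x_i,m}\, h_{k-1,n}^{x_i}\, \left( G_k^{x_i,y_i}\right)_m.
\end{equation*}
Collecting the $n$-components identifies the row-gradient as the single vector $\nabla_{\parms_{k,m}} J_L\left(\parms^{(L)}\right) = \frac{\deltal}{N}\sum_{i=1}^N \dot{\sigma}_{k,x_i,m}\, \left(G_k^{x_i,y_i}\right)_m\, h_{k-1}^{x_i}$.

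To finish, I would bound this vector. By the triangle inequality in $\R^d$, followed by $\abs{\dot{\sigma}_{k,x_i,m}} \leq 1$ (Assumption \ref{3_4_assumptions_1} \textit{(i)}) and $\abs{(G_k^{x_i,y_i})_m} \leq \norm{G_k^{x_i,y_i}}_\infty$,
\begin{equation*}
\norm{\nabla_{\parms_{k,m}} J_L}_2 \leq \frac{\deltal}{N}\sum_{i=1}^N \norm{h_{k-1}^{x_i}}_2 \norm{G_k^{x_i,y_i}}_\infty.
\end{equation*}
A Cauchy-Schwarz step turns this empirical average into its quadratic mean, $\frac{1}{N}\sum_i b_i \leq (\frac{1}{N} \sum_i b_i^2)^{1/2}$ with $b_i = \norm{h_{k-1}^{x_i}}_2 \norm{G_k^{x_i,y_i}}_\infty$, and substituting $\deltal = L^{-1/2}$ from Assumption \ref{3_4_assumptions_1} \textit{(ii)} cancels the $\sqrt L$ and delivers the required pointwise bound.

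I expect no serious obstacle: the argument is a clean chain-rule computation followed by two elementary inequalities. The only point requiring care is the bookkeeping in the gradient, namely recognizing that the $m$-th row couples solely to the $m$-th coordinate of $h_k$ (hence to $(G_k)_m$ and $\dot{\sigma}_{k,x,m}$), which is exactly what lets the $\ell^\infty$ norm of $G_k$ and the bound $\abs{\dot{\sigma}} \leq 1$ enter without loss, and keeping the factor $\sqrt{L/2}$ aligned with $\deltal = L^{-1/2}$ so that the scaling produces the advertised constant $1/\sqrt 2$.
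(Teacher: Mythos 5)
Your proof is correct and is essentially the paper's argument in streamlined form: the paper expands $f_{k,m}(\widetilde{\parms})-f_{k,m}(\parms)$ via the identity $\tfrac{1}{2}(A^2-B^2)=B(A-B)+\tfrac{1}{2}(A-B)^2$ and completes the square, which is exactly your triangle inequality on $\norm{\parms_{k,m}-\lr_L\nabla_{\parms_{k,m}}J_L}_2$, and both proofs then reduce to the same row-gradient bound obtained from \eqref{app:eq:gradients}, $\abs{\sigma'}\leq 1$, Cauchy--Schwarz over $i$, and $\deltal=L^{-1/2}$. No gaps.
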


%In Lemma \ref{bound-loss-function}, we also need a lower bound on the loss gradients, which is proven in Lemma \ref{lemma:diff-grad} given that neighbouring layers are close to each other. Hence, we also need to study the following norm along the gradient paths.

\begin{lemma} \label{lemma-g}
Let $\parms^{(L)}\in\R^{L\times d\times d}$ and $\Ca > 0$ such that $L \geq 5 \Ca$ and $\norm{\parms^{(L)}}_{F, \infty} \leq \Ca L^{-1/2}$. Define $\widetilde{\parms}^{(L)} \coloneqq \parms^{(L)} - \lr_L \nabla_{\parms} J_L\left( \parms^{(L)} \right)$, and let $g^{(L)}_{k}\left( \parms^{(L)} \right) \coloneqq \frac{1}{2} L^2 \norm{ \parms^{(L)}_{k+1} - \parms^{(L)}_{k} }_F^2$. Under Assumption \ref{3_4_assumptions_1} \textit{(i)}--\textit{(ii)}, we have
\begin{align*}
 g^{(L)}_{k}\left(\widetilde{\parms}^{(L)}\right) &\leq g^{(L)}_{k} \left(\parms^{(L)}\right) \left( 1 + L^{-1/2} \lr_L \frac{1}{N} \sum_{i=1}^N \norm{h_{k-1}^{x_i,\,(L)}\left(\parms^{(L)}\right)}_2^2 \norm{ G^{x_i, y_i,\,(L)}_{k+1}\left(\parms^{(L)}\right) }_{\infty} \right)^2  \\
 & + \OO\Big( \Ca e^{2.1\Ca} \left( \Ca e^{1.1\Ca} L^{-1} + 2 L^{-1/2} \right) \lr_L g_{k}\big(\parms^{(L)}\big)^{1/2}  J_L\big(\parms^{(L)}\big)^{1/2} \Big),
\end{align*}
where the Big-O constant is also independent of $\Ca$.
\end{lemma}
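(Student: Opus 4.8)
The plan is to control how one gradient step perturbs the successive difference $\parms^{(L)}_{k+1}-\parms^{(L)}_k$. Set $\Delta\parms_k := \parms^{(L)}_{k+1}-\parms^{(L)}_k$ and $\Gamma_k := \nabla_{\parms_{k+1}}J_L(\parms^{(L)}) - \nabla_{\parms_k}J_L(\parms^{(L)})$, so that the update $\widetilde\parms^{(L)} = \parms^{(L)} - \lr_L\nabla_\parms J_L(\parms^{(L)})$ gives $\widetilde\parms^{(L)}_{k+1}-\widetilde\parms^{(L)}_k = \Delta\parms_k - \lr_L\Gamma_k$ and hence $g^{(L)}_k(\widetilde\parms^{(L)}) = \tfrac12 L^2\norm{\Delta\parms_k - \lr_L\Gamma_k}_F^2$; the whole lemma reduces to an upper bound on $\norm{\Delta\parms_k-\lr_L\Gamma_k}_F$. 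I would reuse the neighbouring-gradient identity from the proof of Lemma \ref{lemma:lower-bound-grad} (Lemma \ref{lem:neighbour-grad}), which writes $-\Gamma_k$ as a first term proportional to the activation-derivative difference $\dot\sigma_{k,x_i,m}-\dot\sigma_{k+1,x_i,m}$ paired against $h^{x_i}_{k-1}$ and $(G^{x_i,y_i,(L)}_{k+1})_m$, plus a second-order term carrying $\xi^{x_i}_{k,mn}$ with $\norm{\xi^{x_i}_{k,mn}}_2^2\le 2(h^{x_i}_{k-1,n})^2\norm{\Delta\parms_k}_F^2 + 2\norm{\parms_{k,n}}_2^4\norm{h^{x_i}_{k-1}}_2^4$.

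The multiplicative factor comes out of this first term. Applying the mean value theorem to $\sigma'$ together with $\parms_k h_{k-1}-\parms_{k+1}h_k = -\Delta\parms_k\, h_{k-1} - \parms_{k+1}(h_k-h_{k-1})$, I split the first term into $P^{(1)}$ (the part carrying $\Delta\parms_k\, h_{k-1}$) and $P^{(2)}$ (the part carrying $\parms_{k+1}(h_k-h_{k-1})$), and denote the second-order contribution $Q$. The key observation is that, for each datum, $P^{(1)}$ is the rank-one matrix $u^{x_i}(h^{x_i}_{k-1})^\top$ with $u^{x_i}_m = -\sigma''(\zeta)(\Delta\parms_k\, h^{x_i}_{k-1})_m(G^{x_i,y_i,(L)}_{k+1})_m$; since $\abs{\sigma''}\le1$ and $\norm{\Delta\parms_k\, h_{k-1}}_2\le\norm{\Delta\parms_k}_F\norm{h_{k-1}}_2$, one gets $\norm{u^{x_i}}_2\le\norm{G^{x_i,y_i,(L)}_{k+1}}_\infty\norm{\Delta\parms_k}_F\norm{h^{x_i}_{k-1}}_2$, and Minkowski over $i$ with $\deltal = L^{-1/2}$ yields $\norm{P^{(1)}}_F\le L^{-1/2}\norm{\Delta\parms_k}_F A$, where $A := \frac1N\sum_i\norm{h^{x_i,(L)}_{k-1}}_2^2\norm{G^{x_i,y_i,(L)}_{k+1}}_\infty$ is exactly the sum in the lemma. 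The triangle inequality then gives $\norm{\Delta\parms_k-\lr_L\Gamma_k}_F\le\norm{\Delta\parms_k}_F(1+L^{-1/2}\lr_L A) + \lr_L(\norm{P^{(2)}}_F+\norm{Q}_F)$, and squaring reproduces the stated leading factor $g^{(L)}_k(\parms^{(L)})(1+L^{-1/2}\lr_L A)^2$ exactly; the $\OO$-remainder is then $L^2$ times the resulting cross term plus $\tfrac12 L^2\lr_L^2(\norm{P^{(2)}}_F+\norm{Q}_F)^2$.

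It remains to bound $\norm{P^{(2)}}_F$ and $\norm{Q}_F$ using the forward/backward estimates of Lemma \ref{lem:3_4_bounded_forward}, namely $\norm{h^{x_i,(L)}_{k-1}}_2\le e^{1.1\Ca}$ and $\abs{(G^{x_i,y_i,(L)}_{k+1})_m}\le\norm{M^{x_i,(L)}_{k+1}e_m}_2\norm{\widehat{y}(x_i,\parms^{(L)})-y_i}_2\le e^{\Ca}\norm{\widehat{y}(x_i,\parms^{(L)})-y_i}_2$, together with $\norm{\parms^{(L)}}_{F,\infty}\le\Ca L^{-1/2}$. Both $P^{(2)}$ and the $\norm{\parms_{k,n}}_2^4$-part of $Q$ come out $\OO(\Ca^2 L^{-2}A)$, so after multiplying by $L^2\norm{\Delta\parms_k}_F = \sqrt2\,L\,g^{(L)}_k(\parms^{(L)})^{1/2}$ and using $A = \OO(e^{3.2\Ca}J_L^{1/2})$ they produce the $\Ca e^{2.1\Ca}\cdot\Ca e^{1.1\Ca}L^{-1}$ piece of the claimed constant. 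The delicate piece is the $\norm{\Delta\parms_k}_F^2$-part of $Q$ inherited from $\xi$: it scales like $\norm{\Delta\parms_k}_F^2 = 2L^{-2}g^{(L)}_k(\parms^{(L)})$, that is, like $g_k$ rather than $g_k^{1/2}$, so to recover the advertised $2L^{-1/2}$ term I would invoke the hypothesis $\norm{\parms^{(L)}}_{F,\infty}\le\Ca L^{-1/2}$ in the form $g^{(L)}_k(\parms^{(L)})^{1/2}\le\sqrt2\,\Ca L^{1/2}$, converting one factor of $g_k^{1/2}$ into $\Ca L^{1/2}$ and hence $L^{-1}g_k$ into $\Ca L^{-1/2}g_k^{1/2}$. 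I expect this accounting — carrying every exponential $e^{\beta\Ca}$ through and performing the $g_k\mapsto g_k^{1/2}$ conversion so that the constant collapses exactly to $\Ca e^{2.1\Ca}(\Ca e^{1.1\Ca}L^{-1}+2L^{-1/2})$ with a $\Ca$-independent $\OO$-constant — to be the main obstacle; the conceptual steps (the rank-one bound for $P^{(1)}$, the triangle inequality, and the squaring) are routine.
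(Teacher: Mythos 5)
Your proposal is correct and follows essentially the same route as the paper's proof: both rest on Lemma \ref{lem:neighbour-grad}, the same three-way split of the neighbouring-gradient difference (your $P^{(1)}$, $P^{(2)}$, $Q$ correspond to the paper's three terms in $S_{1,m}$), the rank-one/Cauchy--Schwarz bound producing the factor $L^{-1/2}\lr_L A$, and the same use of the hypothesis $\norm{\parms^{(L)}}_{F,\infty}\leq \Ca L^{-1/2}$ to convert the $\norm{\parms_{k+1}-\parms_k}_F^2$ piece of $\xi$ into the $2L^{-1/2}$ contribution. The only cosmetic difference is that you bound $\norm{\Delta\parms_k-\lr_L\Gamma_k}_F$ by the triangle inequality and then square, whereas the paper expands the square exactly into first- and second-order terms and works row by row in $m$; both yield the same leading factor and the same $\OO$-remainder.
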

The proofs of Lemmas \ref{lemma-f} and \ref{lemma-g} can be found in Appendix \ref{app:behaviour-norms}. %\alain{ We now provide a recursive estimate of the loss function: let $\parms^{(L)}\in\R^{L\times d \times d}$ any weight initialization. Define recursively $\parms^{(L)}(t+1) = \parms^{(L)}(t) - \lr_L(t) \nabla_{\parms} J_L \hspace{-2pt} \left( \parms^{(L)}(t) \right)$ for $t = 0, \ldots, T-1$. Assume that the weights $\parms^{(L)}(t)$ satisfy some scaling assumptions for $t=0, \ldots, T-1$. Then, Lemma \ref{bound-loss-function} upper bounds the loss function $J_L\left( \parms^{(L)}(T) \right)$ at time $T$. This estimate of the loss function at time $T$ will help us prove that the weights $\parms^{(L)}(T+1)$ satisfy the \textbf{same} scaling assumptions thanks to Lemma \ref{lemma-f} and \ref{lemma-g}. The precise scaling and recursion argument are given in Theorem \ref{thm:global-convergence}.}

% The results in Lemmas \ref{lemma:lower-bound-grad}, \ref{lemma-f} and \ref{lemma-g} are crucial to verify the hypotheses of Lemma \ref{bound-loss-function}, and will be used extensively to prove the central result of this paper, Theorem \ref{thm:global-convergence}. 
%Now we bound the effect of a gradient descent step on the loss function $J$, using first and second order information about $J$.

\subsection{Local convergence of gradient descent}\label{sec:local-convergence}
% Rama
%{\bf What justifies this title? gradient descent converges to the nearest local minimum in general, not a global minimizer. What justifies this claim? Corollary 5.6 seems to be a sleight of hand.}

In this section, we initialize the weight vector $A^{(L)}(0)$ according to Assumption \ref{3_4_assumptions_1} \textit{(iv)} and we let the weights $A^{(L)}(t)$ evolve according to the gradient descent dynamics \eqref{eq:A_dot}. We show that under some \textit{a priori} conditions on the initial parameters, the initial loss, and the learning rates, we are able to prove a practical upper bound on the loss function along the gradient descent path.

\begin{theorem} \label{thm:global-convergence}
Let $L$ be large enough. Under Assumption \ref{3_4_assumptions_1} \textit{(i)}--\textit{(v)}, let the parameter $A^{(L)}(t)$ evolve according to the gradient descent dynamics \eqref{eq:A_dot} with learning rates $\lr_L(t)$ until time $T_L \in \N$, chosen in such a way that for each $t = 0, \ldots, T_L-1$, we have 
\begin{equation} \label{eq:lr_bounds}
    \lr_L(t) \leq \frac{1}{160} N^{-1} d^{-1} e^{-10.5c_0} \quad \mbox{and} \quad \sum_{t=0}^{T_L-1} \lr_L(t) \leq d^{-1} \log L.
\end{equation}
Then, for each $t = 0, \ldots, T_L$, we have
\[
J_L(A(t)) \leq \exp\left( - \frac{1}{32} N^{-1} e^{-2c_0} \sum_{t'=0}^{t-1} \eta_L(t') \right) J_0 + 34 d c_0^4 e^{6.4c_0} \left( \sum_{t'=0}^{t-1} \lr_L(t') \right) L^{-1} J_0.
\]
%$\norm{A^{(L)}(t)}_{F, \infty} \leq c_0 L^{-1/2}$ and $\norm{ \nabla_A J_L(A^{(L)}(t)) }_F^2 \geq ??? J_L(A^{(L)}(t)) - \OO($ for each $t\in\left[0, T\right]$. Furthermore, ...
%Furthermore, $J_L\left( A^{(L)}(t) \right) \to 0$ as $t\to\infty$ at the same rate as $\exp\left(-\sum_{t'=0}^{t-1} \lr(t') \right) \to 0$. 
\end{theorem}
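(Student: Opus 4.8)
The plan is to prove the estimate by a simultaneous induction on $t$ that carries along \emph{both} the loss bound and the scaling assumptions \eqref{eq:scaling-assumptions}. These must be coupled because the gradient lower bound of Lemma \ref{lemma:lower-bound-grad}, which produces the exponential decay, is only valid while $\norm{A^{(L)}(t)}_{F,\infty} \leq c_0 L^{-1/2}$ and $\norm{A^{(L)}_{k+1}(t) - A^{(L)}_{k}(t)}_F = \OO(e^{-4.2c_0}L^{-1})$ hold, whereas controlling the growth of these two norms requires an upper bound on the loss at each step. At $t=0$ the loss bound is the trivial equality $J_L(A(0)) = J_0$ (both sums are empty), and \eqref{eq:scaling-assumptions} follows from Assumption \ref{3_4_assumptions_1} \textit{(iv)} for $L$ large.

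\textbf{One-step loss recursion.} For the inductive step, suppose \eqref{eq:scaling-assumptions} holds at time $t$. Writing $\widetilde A \coloneqq A(t) - \lr_L(t)\nabla_{\parms}J_L(A(t))$, I would first derive a single-step descent inequality from a second-order expansion of $J_L$ along the update, using the uniform Jacobian bounds of Section \ref{sec:bounds-h-M-l} as a local smoothness estimate and the $J_L^{1/2}$-type upper bound on $\nabla_{\parms}J_L$ of Appendix \ref{app:gradient_estimation} to keep the step local; the first constraint in \eqref{eq:lr_bounds} is exactly the small-step condition that makes the second-order term harmless, yielding
\[
J_L(A(t+1)) \leq J_L(A(t)) - \tfrac{1}{2}\lr_L(t)\,\norm{\nabla_{\parms}J_L(A(t))}_F^2.
\]
Inserting the lower bound of Lemma \ref{lemma:lower-bound-grad} gives the recursion
\[
J_L(A(t+1)) \leq \Big(1 - \tfrac{1}{32}N^{-1}e^{-2c_0}\lr_L(t)\Big)J_L(A(t)) + \tfrac{17}{2}d c_0^4 e^{6.4c_0}L^{-1}\lr_L(t)\,J_L(A(t)).
\]
Unrolling, bounding the product of contracting factors by $\exp\big(-\tfrac{1}{32}N^{-1}e^{-2c_0}\sum_{t'}\lr_L(t')\big)$ and each contracting factor in the error sum by $1$, and using a crude a priori bound $J_L(A(s)) \leq C J_0$ along the path (supplied by the induction, with $C$ absorbed into the error coefficient) produces the claimed two-term estimate.

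\textbf{Propagating the scaling assumptions.} To reuse Lemma \ref{lemma:lower-bound-grad} at time $t+1$ I must check that \eqref{eq:scaling-assumptions} survives the update. Here I would sum Lemma \ref{lemma-f} over the rows $m$ and Lemma \ref{lemma-g} over the layers $k$ to obtain recursions for $\overbar{f}^{(L)}(A(t))$ and $\overbar{g}^{(L)}(A(t))$ defined in \eqref{eq:weight-norms}. In both lemmas the driving term is $\tfrac1N\sum_i \norm{h^{x_i}_{k-1}}_2^2\norm{G^{x_i,y_i}_{k}}_\infty^2$, which I bound using the uniform hidden-state and Jacobian bounds of Section \ref{sec:bounds-h-M-l} together with the identification of $G$ with the back-propagated output error, giving a factor proportional to $J_L(A(t))^{1/2}$. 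Summing the per-step increments over $t'=0,\dots,T_L-1$ and inserting the loss bound just established, the total growth of $\overbar{f}$ and $\overbar{g}$ is controlled by $\sum_{t'}\lr_L(t') \leq d^{-1}\log L$; the numerical constants in Assumption \ref{3_4_assumptions_1} \textit{(iv)}--\textit{(v)} and in \eqref{eq:lr_bounds} are calibrated precisely so that the resulting quantities remain below $c_0 L^{-1/2}$ and $2^{-7/2}N^{-1/2}e^{-4.2c_0}L^{-1}$, which closes the induction.

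\textbf{Main obstacle.} The delicate part is not the loss recursion, which is a standard Polyak--{\L}ojasiewicz contraction, but the self-consistency of this bootstrap. The neighbouring-layer condition $\norm{A_{k+1}-A_k}_F = \OO(L^{-1})$ is the tighter of the two scaling constraints and must be maintained uniformly in $k$ across all $T_L = \Theta(\lr_0^{-1}\log L)$ iterations. Keeping it within its $L^{-1}$ budget while the learning-rate sum is permitted to grow like $\log L$ is exactly where the multiplicative (rather than additive) structure of Lemma \ref{lemma-g} and the calibration of the constants in Assumption \ref{3_4_assumptions_1} are essential; verifying that the accumulated error never overflows this budget is the crux of the argument.
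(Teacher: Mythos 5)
Your overall architecture coincides with the paper's: a joint induction on $t$ that propagates the loss bound together with the two scaling constraints, a one-step descent inequality from a second-order expansion (the Hessian bound of Lemma \ref{app:hessian} plus the gradient upper bound of Lemma \ref{lemma:gradient-upper}, with the first condition in \eqref{eq:lr_bounds} making the quadratic term harmless), the contraction supplied by Lemma \ref{lemma:lower-bound-grad}, and norm propagation via Lemmas \ref{lemma-f} and \ref{lemma-g}. This is exactly how the paper proceeds (the descent step is packaged there as Lemma \ref{bound-loss-function}, combined with the discrete Gr\"onwall inequalities).

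There is, however, one step that fails as you describe it. You assert that, after inserting the loss bound, ``the total growth of $\overbar f$ and $\overbar g$ is controlled by $\sum_{t'}\eta_L(t')\le d^{-1}\log L$.'' The per-step increment of $f_{k,m}^{1/2}$ is $e^{2.1c_0}\eta_L(t)J_L(A(t))^{1/2}$, so the accumulated growth is $R_L(T)\coloneqq\sum_{t'}\eta_L(t')J_L(A(t'))^{1/2}$. If you control this only via the crude a priori bound $J_L(A(t'))\le 2J_0$ together with $\sum_{t'}\eta_L(t')\le d^{-1}\log L$, you obtain growth of order $J_0^{1/2}\log L$, which diverges as $L\to\infty$, whereas the induction requires $f_{k,m}^{1/2}\le c_0 d^{-1/2}/\sqrt2$, a constant independent of $L$. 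The missing idea is that $R_L(T)$ must be shown to be bounded \emph{uniformly in $L$ and $T_L$} by exploiting the geometric decay of the loss: splitting $J_L(A(t'))^{1/2}$ according to the two-term loss bound, the dominant piece $\sum_{t'}\eta_L(t')\exp\left(-\tfrac14\underline{c}\sum_{t''<t'}\eta_L(t'')\right)J_0^{1/2}$ is a telescoping sum bounded by $8\,\underline{c}^{-1}J_0^{1/2}$, with no $\log L$ factor; only the secondary piece $\overbar{c}^{1/2}L^{-1/2}\left(\sum_{t'}\eta_L(t')\right)^{3/2}$ is controlled by the $d^{-1}\log L$ budget together with $L$ large. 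The same uniform bound on $R_L$ is what drives the Gr\"onwall estimate for $g_k$ and keeps the neighbouring-layer condition within its $L^{-1}$ budget. Without this decay-based estimate the bootstrap on $\norm{A^{(L)}(t)}_{F,\infty}\le c_0L^{-1/2}$ cannot be closed, so you should make this telescoping argument explicit rather than attributing the control to the bound on $\sum_{t'}\eta_L(t')$ alone.
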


Theorem \ref{thm:global-convergence} is a local convergence result since we assume that the initial loss lies below a certain level by Assumption \ref{3_4_assumptions_1} \textit{(v)}. We are able to show convergence as $L\to\infty$ of the loss to zero when the horizon $T_L$ depends explicitly on the depth while satisfying \eqref{eq:lr_bounds}.
%Furthermore, note that for a fixed $L\in\N$, we cannot simply take $T\to\infty$ and deduce convergence of $J_L$ to zero, as conditions \eqref{eq:lr_bounds} would imply $L\to\infty$ as well. }

\begin{proof}
We choose $L$ big enough so that 
\begin{equation} \label{app:eq:bound_L}
    \frac{3}{64} N^{-1} d^{-1} c_0^2 e^{2.2c_0} (\log L)^{3/2} \leq L^{1/2}, \quad 34c_0^4 e^{6.4c_0} \log L \leq L.
\end{equation}
Note that it trivially implies that $L \geq \max(4c_0^2, 5c_0)$. In the proof, we omit the explicit dependence in $L$. Denote $J_0 \coloneqq J_L(A(0))$ the initial loss. We first prove jointly that 
\begin{equation} \label{induction-hyp}
    \begin{aligned}
        J_L(A(t)) &\leq 2 J_0, \\
        \max_k \norm{A_k(t)}_{F} &\leq c_0 L^{-1/2}, \\
        \max_{k} \norm{A_{k+1}(t) - A_k(t)}_F &\leq 2^{-7/2} N^{-1/2} e^{-4.2c_0} L^{-1}. \\
    \end{aligned}
\end{equation}
for $t=0, \ldots, T_L$ by induction on $t$. For $t=0$, by Assumption \ref{3_4_assumptions_1} \textit{(iv)}, we directly have 
\begin{equation} \label{eq:bound-A0}
    \begin{aligned}
        \max_k \norm{A_k(0)}_{F} &\leq d^{1/2} \sup_{k,m} \norm{A_{k, m}(0)}_{2} \leq L^{-1} < c_0 L^{-1/2}, \\
        \norm{A_{k+1}(0) - A_{k}(0)}_F &\leq d^{1/2}  \sup_{m} \norm{A_{k+1, m}(0) - A_{k, m}(0)}_{2} < 2^{-7/2} N^{-1/2} e^{-4.2c_0}  L^{-1}.
    \end{aligned}
\end{equation}
Let $t\geq 0$. Assume that \eqref{induction-hyp} holds true for all $t'\leq t < T_L$. We prove that \eqref{induction-hyp} holds for $t+1$. Define $f_{k,m}(t) \coloneqq f_{k,m}\left(A^{(L)}(t)\right)$ as in Lemma \ref{lemma-f} and $g_{k}(t) \coloneqq g_{k}\left( A^{(L)}(t) \right)$ as in Lemma \ref{lemma-g}. As $L \geq \max(4 c_0^2, 5c_0)$, we can apply Lemma \ref{lemma-f} and Lemma \ref{lem:3_4_bounded_forward} with the induction hypothesis.
\begin{align}
    f_{k,m}(t+1)^{1/2} &\leq f_{k,m}(t)^{1/2} + \frac{1}{\sqrt{2}} e^{2.1 c_0} \lr_L(t) \left( \frac{2}{N} \sum_{i=1}^N  \ell\left(y_i, \widehat{y}(x_i, A(t)) \right) \right)^{1/2} \nonumber \\
    &= f_{k,m}(t)^{1/2} + e^{2.1 c_0} \lr_L(t) J_L(A(t))^{1/2}. \label{eq:recurrence-f}
\end{align}
Similarly, we apply Lemma \ref{lemma-g} with $\Ca = c_0$ and Lemma \ref{lem:3_4_bounded_forward} with the induction hypothesis.
\begin{align} 
    g_{k}(t+1) &\leq g_{k}(t) \left( 1 + e^{3.2 c_0} \lr_L(t) L^{-1/2} J_L(A(t))^{1/2} \right)^2 \nonumber \\
    &\quad + \OO\Big( c_0 e^{2.1 c_0} \left( c_0 e^{1.1 c_0} L^{-1} + 2 L^{-1/2} \right) \lr_L g_{k}(t)^{1/2}  J_L\big(A(t)\big)^{1/2} \Big). \label{eq-recurrence-g}
\end{align}
Now, we want to apply Lemma \ref{bound-loss-function} to bound $J_L(A(t))$. We check that using Lemma \ref{lemma:lower-bound-grad}, the assumptions of Lemma \ref{bound-loss-function} are verified for 
\begin{equation*}
\Ca(t')=c_0, \quad \underline{c} \equiv \underline{c}(t') = \frac{1}{16} N^{-1} e^{-2c_0}, \quad \overbar{c} \equiv \overbar{c}(t') = 34dc_0^4 e^{6.4c_0} J_0 \quad \text{for} \,\, t'\leq t.
\end{equation*}
Thus, as $\eta_L(t) < 2^{-5} 5^{-1} N^{-1} d^{-1} e^{-10.5 c_0} < 2^{-1} c_0 e^{-3.2 c_0}$, we deduce the following bound on the loss function at all times $t'=0, \ldots, t+1$.
\begin{equation} \label{eq:bound-loss-fct}
J_L(A(t')) \leq \exp\left( -\frac{1}{2} \underbar{c}  \sum_{t''=0}^{t'-1} \lr_L(t'')  \right) J_0 + \overbar{c}  L^{-1} \sum_{t''=0}^{t'-1} \lr_L(t'') 
\end{equation}
\underline{Bound on $J_L(A(t+1))$}: Plugging in \eqref{eq:lr_bounds} and \eqref{app:eq:bound_L} into \eqref{eq:bound-loss-fct}, we verify that
\begin{equation*}
J_L(A(t+1)) \leq \left(1 + 34 c_0^4 e^{6.4c_0} L^{-1} \log L \right) J_0 \leq 2 J_L(A(0)).
\end{equation*}
\underline{Bound on $f_{k,m}(t+1)$}:  We plug \eqref{eq:bound-loss-fct} into \eqref{eq:recurrence-f} and sum over $t$ to deduce
\begin{align} 
f_{k,m}(t+1)^{1/2} &\leq f_{k,m}(0)^{1/2} +  e^{2.1 c_0} \sum_{t'=0}^{t} \lr_L(t') J_L(A(t'))^{1/2} \nonumber \\
&\leq \frac{1}{3\sqrt{2}} d^{-1/2} c_0 L^{-1/2} + e^{2.1 c_0} R_L(t), \label{eq:recurrence-f-2}
\end{align}
where we use \eqref{eq:bound-A0} for the second inequality and \[
R_L(t) \coloneqq \sum_{t'=0}^{t} \eta_L(t') J_L(A(t'))^{1/2}.
\]
To find an upper bound to $R_L(t)$, we use the inequality $\sqrt{x+y} \leq \sqrt{x} + \sqrt{y}$ in \eqref{eq:bound-loss-fct}, with the help of \eqref{app:eq:bound_L}:
\begin{align*}
    R_L(t) &\leq \sum_{t'=0}^{t} \lr_L(t') \exp\left( - \frac{1}{4} \underbar{c}  \sum_{t''=0}^{t'-1} \lr_L(t'')  \right) J_0^{1/2} + \overbar{c}^{1/2} L^{-1/2} \sum_{t'=0}^{t} \lr_L(t') \left( \sum_{t''=0}^{t'-1} \lr_L(t'') \right)^{1/2} 
\end{align*}
Now, we estimate the following quantity using \eqref{eq:lr_bounds}:
\begin{align*}
    \sum_{t'=0}^{t} \lr_L(t') \left( \sum_{t''=0}^{t'-1} \lr_L(t'') \right)^{1/2} &\leq \left( \sum_{t'=0}^t \eta_L(t') \right)^{3/2} \leq d^{-3/2} (\log L)^{3/2}. 
\end{align*}
Next, we use the fact $\lr_L(t) < \overbar{\lr} = 2^{-5} 5^{-1} N^{-1} d^{-1} e^{-10.5c_0} $ and \[
\left( 1 - \exp\left( -\frac{1}{4} \underbar{c} \overbar{\lr} \right) \right) x \leq \overbar{\lr} \left( 1 - \exp\left(-\frac{1}{4}\underbar{c} x \right) \right)  \,\,\, \mbox{for all} \,\, x\in\left[0, \overbar{\lr} \right]
\]
to deduce that the following sum is telescoping:
\begin{align*}
\sum_{t'=0}^{t} \lr_L(t') \exp\left( - \frac{1}{4} \underbar{c} \sum_{t''=0}^{t'-1} \lr_L(t'')  \right) &\leq \frac{ 1 - \exp\left( -\frac{1}{4} \underbar{c} \sum_{t'=0}^{t} \lr_L(t') \right)}{1 - \exp(-\frac{1}{4} \underbar{c} \overbar{\lr})} \overbar{\lr} \\
&\leq 8 \underbar{c}^{-1}.
\end{align*}
Hence, by Assumption \ref{3_4_assumptions_1} \textit{(v)} and \eqref{app:eq:bound_L},
\begin{align}
R_L(t) &\leq 128N e^{2c_0} J_0^{1/2} + 6 d^{1/2} c_0^2 e^{3.2c_0} (\log L)^{3/2} L^{-1/2}  J_0^{1/2} \nonumber \\
&\leq \frac{2c_0}{3\sqrt{2}} d^{-1/2} e^{-2.1c_0}  \label{eq:bound-rl}
\end{align}
Plugging it in \eqref{eq:recurrence-f-2}, we obtain \[
f_{k,m}(t+1)^{1/2} \leq \frac{c_0}{3\sqrt{2}} d^{-1/2} + \frac{2c_0}{3\sqrt{2}} d^{-1/2}  = \frac{c_0}{\sqrt{2}} d^{-1/2}.
\]
Hence, this completes the induction step for the norm of $A$:
\[
\norm{A^{(L)}(t+1)}_{F, \infty} \leq \sqrt{2} d^{1/2}  L^{-1/2}  \sup_{k,m} f_{k,m}(t+1)^{1/2} \leq c_0 L^{-1/2}.
\]
\underline{Bound on $g_{k}(t+1)$}:
By \eqref{app:eq:bound_L}, $L^{1/2} \geq c_0 e^{1.1c_0}$, so we can rewrite \eqref{eq-recurrence-g}:
\[
    g_{k}(t+1) \leq g_{k}(t) u_L(t)  + g_{k}(t)^{1/2} \, \OO\left( c_0 e^{2.1c_0} \lr_L(t) L^{-1/2} J_L(A(t))^{1/2} \right), 
\]
where 
\begin{equation*}
u_L(t) \coloneqq \left(1 + \lr_L(t) L^{-1/2} e^{3.2 c_0} J_L(A(t))^{1/2} \right)^2.
\end{equation*}
%Observe that we implicitly assumed that $g_{k,m}(t) = \Omega(L^{-1})$, otherwise an upper bound on $g_{k,m}(t+1)$ follows directly. 
We can thus apply Lemma \ref{lemma:gronwall} \textit{(ii)}, together with the identity $1+x\leq \exp(x)$ and \eqref{eq:bound-rl} to deduce that 
\begin{align*}
    g_{k}(t+1)^{1/2} &\leq \exp\left(  e^{3.2c_0} R_L(t) L^{-1/2} \right) \left(  g_{k}(0)^{1/2} + \OO\left(c_0 e^{2.1c_0} R_L(t) L^{-1/2} \right) \right) \\
    &\leq \exp\left( c_0 e^{1.1c_0} L^{-1/2} \right) \left(  1 + \OO( c_0^2 L^{-1/2}) \right) g_{k}(0)^{1/2} \\
    &\leq \sqrt{2} g_{k}(0)^{1/2}.
\end{align*}
The last inequality is derived with the help of \eqref{app:eq:bound_L}. We finish the induction step by observing that $g_{k,m}(0)^{1/2} \leq 2^{-9/2} N^{-1/2} d^{-1/2} e^{-4.2c_0} $ by Assumption \ref{3_4_assumptions_1} \textit{(iv)}. \\

\noindent \underline{Convergence of $J_L(A(T_L)) \to 0$}: We now have all the tools to deduce the rate of convergence of $J_L(A(T_L))$ to zero. We observe from the induction result above that the assumptions of Lemma \ref{bound-loss-function} are verified for $\Ca(t) = c_0$, $\underline{c}(t) = 2^{-4} N^{-1} e^{-2c_0}$ and $\overbar{c}(t) = 34 d c_0^4 e^{6.4c_0} J_0$ by Lemma \ref{lemma:lower-bound-grad}, for each $t\in\left[0, T\right)$. In particular, we have  
\begin{equation}
J_L(A(T_L)) \leq \exp\left( - \frac{1}{32} N^{-1} e^{-2c_0} \sum_{t=0}^{T_L-1} \eta_L(t) \right) J_0 + 34 d c_0^4 e^{6.4c_0} \left( \sum_{t=0}^{T_L-1} \lr_L(t) \right) L^{-1} J_0.
\end{equation}

\end{proof}

\begin{remark} \label{rmk:learning-rates}
%We fix the number of samples $N$ and the width of the network $d$. 
Let $\lr_0 > 0$ be a fixed learning rate, independent of $k,L$ and $t$, and let $J_0 \coloneqq J_L\hspace{-2pt} \left(A^{(L)}(0)\right)$ be the initial loss. Observe that from Theorem~\ref{thm:global-convergence}, if we choose 
\begin{itemize}
    \item $\lr_L(t) = \lr_0$ and $T_L^{\const} = \Theta(\lr_0^{-1} \log L)$, then conditions \eqref{eq:lr_bounds} are satisfied, so we deduce \[
    J_L\hspace{-2pt} \left(A^{(L)}(T_L^{\const})\right) \leq \exp(-\underline{c}\lr_0 T_L^{\const}) J_0 + \OO(\lr_0 T_L^{\const} L^{-1}).
    \]
    Hence, for an error level $\epsilon > 0$, gradient descent with constant learning rate for a network of depth $L = \Omega(1/\epsilon)$ reaches $J_L\hspace{-2pt}\left(A^{(L)}(T_L^{\const})\right) < \epsilon$ in $\Theta( \lr_0^{-1} \log 1/\epsilon)$ iterations. 
    \item $\lr_L(t) = \lr_0 (t+1)^{-1}$ and $T_L^{\decay} = \Theta(\exp(\lr_0^{-1} \log L))$, then conditions \eqref{eq:lr_bounds} are satisfied. We deduce that \[
    J_L\hspace{-2pt} \left(A^{(L)}(T_L^{\decay})\right) \leq \exp(-\underline{c}\lr_0 \log T_L^{\decay}) J_0 + \OO(\lr_0 \log T_L^{\decay} L^{-1} ).
    \]
    Hence, for an error level $\epsilon > 0$, gradient descent with decaying learning rate for a network of depth $L = \Omega(1/\epsilon)$ reaches $J_L\hspace{-2pt}\left(A^{(L)}(T_L^{\decay})\right) < \epsilon$ in $\Theta(\exp( \lr_0^{-1} \log 1/\epsilon))$ iterations.
\end{itemize}
\end{remark}
The above convergence rates above are confirmed by our experiments in Section \ref{sec:numerical-experiments}. Note that gradient descent converges exponentially faster when using constant learning rates rather than decaying ones. This is because the parameters $A^{(L)}(t)$ and the gradients $\nabla_A J_L(A^{(L)}(t))$ are already on the same scale $\OO(L^{-1/2})$. Note also that Theorem~\ref{thm:global-convergence} is not in contradiction with \cite[Theorem 6]{BEL2018} stating that gradient descent might get stuck at the critical point $\left(\de, A^{(L)}\right) = (0,0)$ that is usually not a global minimizer. Indeed, we force $\de$ to have a non-trivial scaling by Assumption \ref{3_4_assumptions_1} \textit{(iv)}, so that $(0,0)$ is simply not a point in the parameter space.

\subsection{Scaling limit of  trained weights}
In many cases the trained weights, viewed as a function of the layer index $k/L$, have a scaling limit which is a function defined on $[0,1]$. 
We   show that such a limit then admits  finite $p$-variation with $p=2$.
\begin{proposition} \label{prop:tanh_continuity}
Let $\left( A^{(L)}(t) \colon t=1, \ldots, T_L\right)$ follow the gradient descent dynamics \eqref{eq:A_dot}, where the assumptions of Theorem \ref{thm:global-convergence} are satisfied for $T=T_L$. Assume  there exists $\overbar{A}^* \coloneqq \left[0,1\right] \to \R^{d\times d}$ such that
\begin{equation} \label{ass:scaling-limit}
    \begin{aligned}
        \mathop{\sup}_{s\in [0,1]}L^{1/2} \norm{ L^{1/2} A_{\floor{Ls}}^{(L)}(T_L) - \overbar{A}^*_{s}}_F &\mathop{\longrightarrow}^{L\to\infty} 0
    \end{aligned}
\end{equation}
Then, the scaling limit $\overbar{A}^*$ has finite $p$-variation with $p=2$.
\end{proposition}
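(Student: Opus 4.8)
The plan is to show that the scaling limit $\overbar{A}^*$ is $1/2$-H\"older continuous on $[0,1]$ and then invoke the elementary fact that $\alpha$-H\"older continuity with $\alpha = 1/p$ implies finite $p$-variation. Throughout I write $z^{(L)}_k := L^{1/2} A^{(L)}_k(T_L)$ for the rescaled weights and $\eps_L := \sup_{s\in[0,1]} \norm{z^{(L)}_{\floor{Ls}} - \overbar{A}^*_s}_F$; by the scaling hypothesis \eqref{ass:scaling-limit} we have $\eps_L = o(L^{-1/2})$, which will be the crucial quantitative input. Since the hypotheses of Theorem \ref{thm:global-convergence} hold at $T = T_L$, the induction bounds \eqref{induction-hyp} are available at time $t = T_L$ for all large $L$: uniformly in $k$ we have $\norm{A^{(L)}_k(T_L)}_F \leq c_0 L^{-1/2}$ and $\norm{A^{(L)}_{k+1}(T_L) - A^{(L)}_k(T_L)}_F \leq 2^{-7/2} N^{-1/2} e^{-4.2 c_0} L^{-1}$. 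Rescaling, this gives the two facts I use repeatedly: $\norm{\overbar{A}^*_s}_F \leq c_0$ for every $s$ (so $\overbar{A}^*$ is bounded), and the per-layer increment bound $\norm{z^{(L)}_{k+1} - z^{(L)}_k}_F \leq C L^{-1/2}$ with $C := 2^{-7/2} N^{-1/2} e^{-4.2 c_0}$.

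The heart of the argument is the local H\"older estimate. Fix $s < t$ in $[0,1]$ with $t - s$ small. For any admissible depth $L$, inserting $z^{(L)}$ and using the triangle inequality gives
\[
\norm{\overbar{A}^*_t - \overbar{A}^*_s}_F \leq 2\eps_L + \norm{z^{(L)}_{\floor{Lt}} - z^{(L)}_{\floor{Ls}}}_F \leq 2\eps_L + \big(\floor{Lt} - \floor{Ls}\big) C L^{-1/2},
\]
where the second inequality telescopes the middle term over the $\floor{Lt} - \floor{Ls} \leq L(t-s) + 1$ intermediate layers and applies the per-layer bound. The key idea is that the left-hand side does not depend on $L$, so I am free to optimise the right-hand side over $L$; the bound $C\big(L(t-s)+1\big)L^{-1/2}$ is minimised at $L \asymp (t-s)^{-1}$. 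Choosing $L := \lceil (t-s)^{-1}\rceil$ makes the interval span only $O(1)$ layers, so that $\floor{Lt} - \floor{Ls} = O(1)$ and $L^{-1/2} \leq (t-s)^{1/2}$, yielding $\norm{z^{(L)}_{\floor{Lt}} - z^{(L)}_{\floor{Ls}}}_F = O\big((t-s)^{1/2}\big)$; simultaneously $2\eps_L = o(L^{-1/2}) = o\big((t-s)^{1/2}\big)$ because $L \to \infty$ as $t - s \to 0$. Combining, there is a constant $K$ and a threshold $\delta_0 > 0$ with $\norm{\overbar{A}^*_t - \overbar{A}^*_s}_F \leq K (t-s)^{1/2}$ whenever $0 < t - s \leq \delta_0$.

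To pass from local to global H\"older regularity I use boundedness: for $t - s > \delta_0$ the bound $\norm{\overbar{A}^*_t - \overbar{A}^*_s}_F \leq 2c_0 \leq 2c_0 \delta_0^{-1/2} (t-s)^{1/2}$ holds trivially, so $\overbar{A}^*$ is $1/2$-H\"older on all of $[0,1]$ with constant $K' := \max(K, 2c_0\delta_0^{-1/2})$. Finally, $\omega(s,t) := K'^2 (t-s)$ is an additive (hence superadditive) control satisfying $\norm{\overbar{A}^*_t - \overbar{A}^*_s}_F^2 \leq \omega(s,t)$; for any partition $0 = s_0 < \cdots < s_n = 1$, superadditivity and telescoping give $\sum_i \norm{\overbar{A}^*_{s_i} - \overbar{A}^*_{s_{i-1}}}_F^2 \leq \sum_i \omega(s_{i-1}, s_i) = \omega(0,1) = K'^2$, uniformly in the partition. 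Hence $\overbar{A}^*$ has finite $2$-variation.

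I expect the main obstacle to be precisely the depth-matching step above, and in particular seeing why the naive estimates fail. Working at a single fixed large $L$ and bounding $\norm{z^{(L)}_{\floor{Lt}} - z^{(L)}_{\floor{Ls}}}_F$ over the $\Theta\big(L(t-s)\big)$ intervening layers --- whether by the triangle inequality or by Cauchy--Schwarz against $\overbar{g}^{(L)} = O(1)$ --- produces bounds of order $L^{1/2}(t-s)$ that diverge, reflecting the fact that bounded weights with bounded $\overbar{g}^{(L)}$ (a discrete quadratic variation) are by themselves compatible with \emph{infinite} $2$-variation, as the discretised Brownian path shows. What rescues the estimate is tying the depth to the scale of the increment, $L \asymp (t-s)^{-1}$, together with the hypothesis that the convergence in \eqref{ass:scaling-limit} holds at the sharp rate $o(L^{-1/2})$: this is exactly the rate needed to keep the approximation error $\eps_L$ below the $(t-s)^{1/2}$ H\"older scale at the matched depth. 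Verifying that the optimisation is admissible --- that $\lceil (t-s)^{-1}\rceil$ is large enough for \eqref{induction-hyp} to apply and that $\sup_{L\geq L_0} L^{1/2}\eps_L$ is finite --- is the routine bookkeeping that remains.
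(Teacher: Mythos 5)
Your proof is correct, and its engine is the same as the paper's: both arguments rest on the uniform per-layer bound $\norm{A^{(L)}_{k+1}(T_L) - A^{(L)}_k(T_L)}_F = O(L^{-1})$ extracted from the control of $g^{(L)}_k$ in the proof of Theorem \ref{thm:global-convergence}, the $o(L^{-1/2})$ approximation rate in \eqref{ass:scaling-limit}, and the depth-matching choice $L \asymp (t-s)^{-1}$. The difference is in the packaging. The paper fixes a partition $\pi$ of small mesh, assigns a separate depth $L_i = \Theta(\abs{s_{i+1}-s_i}^{-1})$ to each subinterval, and bounds $\sum_i \norm{\overbar{A}^*_{s_{i+1}} - \overbar{A}^*_{s_i}}_F^2$ directly. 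You instead extract a pointwise modulus of continuity --- $\overbar{A}^*$ is $1/2$-H\"older --- and then conclude via the additive control $\omega(s,t) = K'^2(t-s)$. Your route proves a strictly stronger intermediate statement (H\"older regularity rather than only finite $2$-variation) and disposes of arbitrary partitions more cleanly: the paper restricts to partitions of small mesh, and for $p=2$ one cannot reduce a coarse partition to a fine one by refinement since $\norm{\sum_j \delta_j}_F^2 \leq m \sum_j \norm{\delta_j}_F^2$ degrades with the number of pieces $m$, whereas your global H\"older bound (local estimate patched with the uniform bound $\norm{\overbar{A}^*_s}_F \leq c_0$) covers every increment regardless of its length. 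Your closing remark on why a fixed-$L$ estimate must fail --- bounded $\overbar{g}^{(L)}$ alone is compatible with infinite $2$-variation --- is exactly the reason the paper couples $L_i$ to the partition, so the two proofs should be regarded as the same argument organised in different order, with yours the slightly sharper write-up.
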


Conditions \ref{ass:scaling-limit} may seem strong, but they are related to the norm $f^{(L)}_{k,m}(T_L)$ defined in Lemma \ref{lemma-f} having a limit as $L\to\infty$. Under the hypothesis of Theorem \ref{thm:global-convergence}, we have shown in the proof of Theorem \ref{thm:global-convergence} that the norm $f^{(L)}_{\floor{Ls}, m}(T_L)$ stay uniformly bounded (in $s$ and $m$) as $L\to\infty$. Condition \ref{ass:scaling-limit} has also been verified in numerical experiments, see Section \ref{sec:regularity-limit}.

\begin{proof}
Fix a partition $\pi = \left\{ 0 = s_0 < s_1 < \ldots < s_K = 1 \right\}$, where the mesh of the partition $\norm{\pi}$ is small enough. In the following, $c>0$ denotes a constant independant of $s$ and $L$. For $i=1, \ldots, K-1$, let $L_i \in \N$ big enough so that Theorem \ref{thm:global-convergence} applies. We estimate directly 
\begin{align*}
\norm{ \overbar{A}^*_{s_{i+1}} - \overbar{A}^*_{s_i}}_F &\leq \norm{ L_i^{1/2} A_{\floor{L_i s_{i+1}}}^{(L_i)}(T_{L_i}) - \overbar{A}^*_{s_{i+1}}}_F + \norm{ L_i^{1/2} A_{\floor{L_i s_i}}^{(L_i)}(T_{L_i}) - \overbar{A}^*_{s_i}}_F \\
&+ L_i^{1/2} \norm{ A^{(L_i)}_{\floor{L_i s_{i+1}}} (T_{L_i}) - A^{(L_i)}_{\floor{L_i s_{i}}}(T_{L_i})}_F \\
&\leq c L_i^{-1/2} + L_i^{1/2} \norm{ A^{(L_i)}_{\floor{L_i s_{i+1}}} (T_{L_i}) - A^{(L_i)}_{\floor{L_i s_{i}}}(T_{L_i})}_F
\end{align*}
We now use the proof of Theorem \ref{thm:global-convergence} to deduce a uniform bound (in $k$ and $L$) on the quantity $g^{(L)}_{k}(T_L)$ defined in Lemma \ref{lemma-g}. That means, $L \norm{ A^{(L)}_{\floor{Ls}} (T_L) - A^{(L)}_{\floor{Ls}+1}(T_L)}_F < c < \infty$. We can apply the triangle inequality to deduce 
\begin{equation*}
\norm{ \overbar{A}^*_{s_{i+1}} - \overbar{A}^*_{s_i}}_F \leq c L_i^{-1/2} + c L_i^{-1/2} \left( \floor{L_i s_{i+1}} - \floor{L_i s_i} \right) \leq c L_i^{-1/2} + c L_i^{1/2} \abs{s_{i+1} - s_i}.  
\end{equation*}
Hence,
\begin{equation} \label{eq:bound-qv}
\sum_{i=0}^{K-1} \norm{ \overbar{A}^*_{s_{i+1}} - \overbar{A}^*_{s_i}}_F^2 \leq c \sum_{i=0}^{K-1} L_i^{-1} + L_i \abs{s_{i+1} - s_i}^2.  
\end{equation}
As $\norm{\pi}$ is small enough, we can choose $L_i = \Theta(\abs{s_{i+1} - s_i}^{-1})$ to deduce that the RHS of \eqref{eq:bound-qv} is bounded uniformly in $\pi$. Taking a supremum over all such partitions  then show that $\overbar{A}^*$ has finite $p$-variation with $p=2$.
\end{proof}

\section{Numerical experiments} \label{sec:numerical-experiments}

To illustrate the results of Section \ref{sec:results}, we design numerical experiments with the following set-up. We have a fixed training set $\left\{ (x_i, y_i) : i=1, \ldots, N \right\}$ in $\R^d \times \R^d$, where $d$ is the dimension of the inputs and outputs and $N$ is the size of the dataset. For any depth $L\in\N$, we initialize the weights of the network \eqref{resnets_wo_bias} with $\deltal = L^{-\alpha_0}$ and each entry of $A_k^{(L)}$ is independent and normally distributed with standard deviation $d^{-1}L^{-\beta_0}$, where $\alpha_0, \beta_0 \in \left[0,1\right]$. The weights are trained using gradient descent on the (unregularized) mean squared error $J_L$ defined in \eqref{mean_squared_error} with a fixed learning rate $\eta_0$ independent of $d,k,L$ and the training time $t$. We perform a fixed number $T\in\N$ of gradient updates, with no early stopping. 

\subsection{Identification of scaling behavior} \label{sec:experiments-scaling}
We run two experiments to discover the best scaling for $\deltal$. Denote $\alpha_t$ the scaling of $\deltal$ at time $t$, i.e. $\alpha_t \propto L^{-\alpha_t}$, and denote $\beta_t$ the scaling of the weights $A^{(L)}(t)$ at time $t$, i.e. $A^{(L)}(t) \propto L^{-\beta_t}$.
The first experiment is to let $\deltal$ trainable with gradient descent with learning rate $\eta_0$, and observe the resulting scaling $\alpha_t$.
\begin{figure}[!ht]
    \centering
    \includegraphics[width=0.48\textwidth]{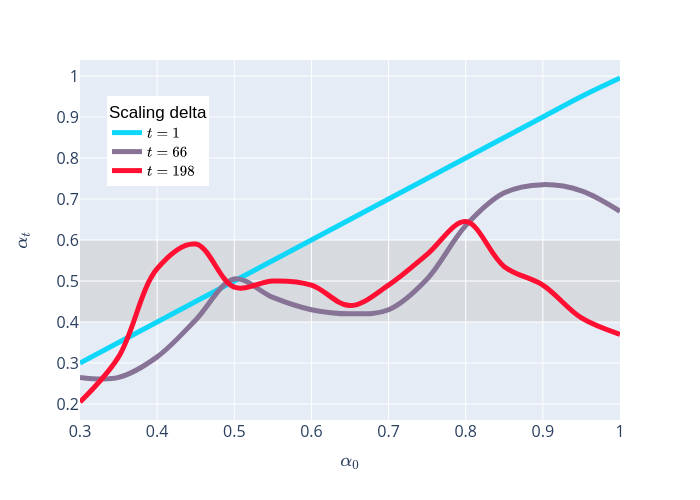}
    \includegraphics[width=0.48\textwidth]{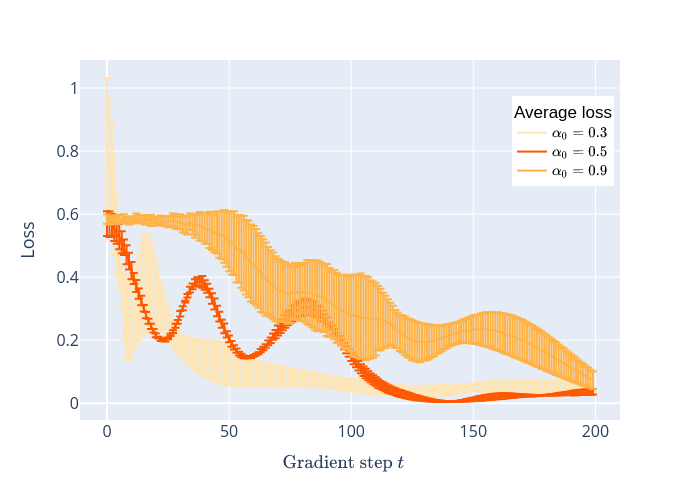}
    \caption{Left: scaling $\alpha_t$ of $\deltal$ against the initial scaling $\alpha_0$ for different training times. Right: Average loss value across depths $L\in\left\{2^k : k \in \left[3, 12\right] \right\}$ for different initializations $\alpha_0$, as a function of the number of gradient steps $t$.}
    \label{fig:trainable-delta-scaling}
\end{figure}
We observe in Figure~\ref{fig:trainable-delta-scaling} (left) that $\alpha_t$ tend to get closer to $1/2$ as $t$ increases. However, this is far from being exact, even though the networks have all converged, see Figure~\ref{fig:trainable-delta-scaling} (right). It is interesting to note that $\alpha_0 = 1/2$ is a \textit{fixed point}, meaning that the networks initialized with this scaling will keep $\alpha_t \approx 1/2$ during the entire training.
The second experiment is to let $\deltal = L^{-\alpha_0}$ at initialization and keep it fixed during training, i.e. $\alpha_t = \alpha_0$ for each $t$. We thus have weights $A^{(L)}(0)$ that scale like $L^{-\beta_0}$ initially, and that are updated with $\lr_L(t) \nabla_{A_k} J_L(A^{(L)}(t)) \propto L^{-\alpha_0} J_L(A^{(L)}(t))^{1/2}$ by Lemma \ref{lemma:gradient-upper}. Thus, it is reasonable to expect that if $\beta_0 > \alpha_0$, and the loss $J_L$ at small times $t$ is independent of the depth, then $\beta_t \approx \alpha_0$ for small times $t$.
\begin{figure}[!ht]
    \centering
    \includegraphics[width=0.48\textwidth]{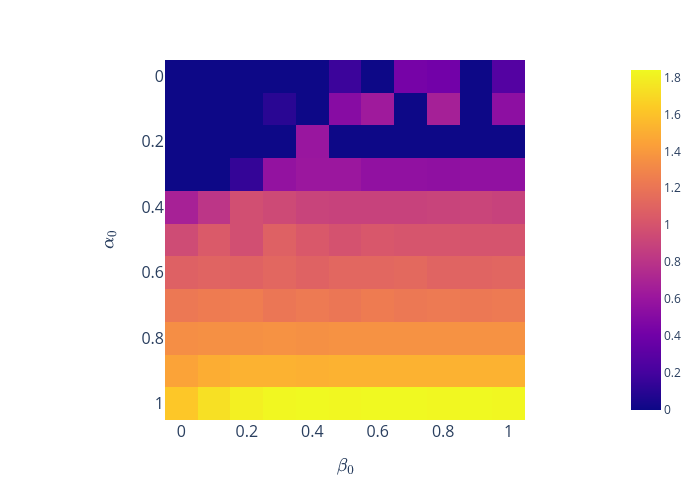}
    \includegraphics[width=0.48\textwidth]{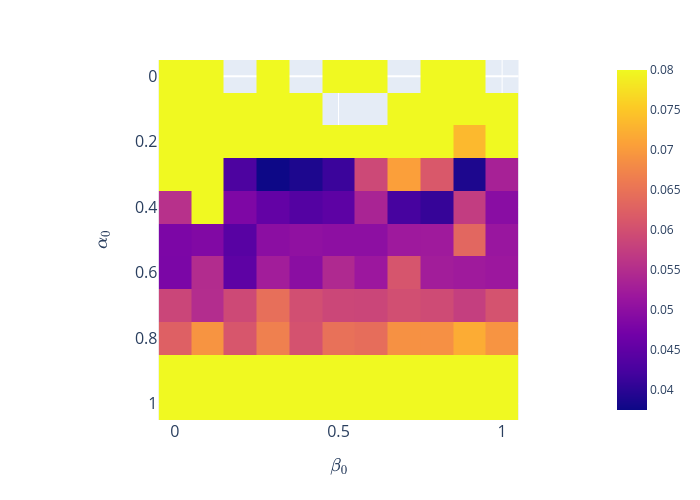}
    \caption{Both figures: horizontal axis is the initial scaling $\beta_0$ of the weights $A$, and the vertical axis is the fixed scaling $\alpha_0$ of $\deltal$. Left: Final total scaling $\alpha_0 + \beta_T$. Right: Average final loss after $T=200$ epochs. The depths at which we train our networks are $L\in\left\{2^k : k \in \left[3, 10\right] \right\}$.}
    \label{fig:fixed-delta-scaling}
\end{figure}
In fact, we observe in Figure~\ref{fig:fixed-delta-scaling} (left) that the total scaling $\alpha_0 + \beta_T$ is independent of $\beta_0$ and is roughly equal to $2\alpha_0$. We observe in Figure~\ref{fig:fixed-delta-scaling} (right) that the parameters that gives the best performance is around $\alpha_0=1/2$, again independently of $\beta_0$. This is expected, as \[
h_{k}^{(L)} - h_{k-1}^{(L)} = \deltal \sigma_d\left(A_k^{(L)} h_{k-1}^{(L)} \right) \propto L^{-\alpha_0 - \beta},
\]
so the final scaling of the increments of the hidden states is roughly $2\alpha_0$, which should be around $1$ to guarantee stability of the large depth limit.

\subsection{Rate of convergence}

We now verify that the convergence rates of gradient descent agree with the theoretical rates derived in Remark \ref{rmk:learning-rates}. To do so, we run our experiments with different initial learning rates, and take the average loss curve across the depths. We then plot the number of gradient steps needed to reach a certain loss level. 
\begin{figure}[!ht]
    \centering
    \includegraphics[width=0.48\textwidth]{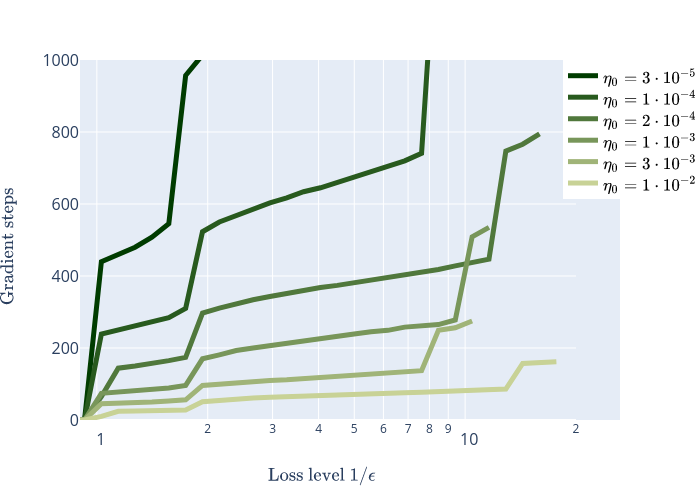}
    \includegraphics[width=0.48\textwidth]{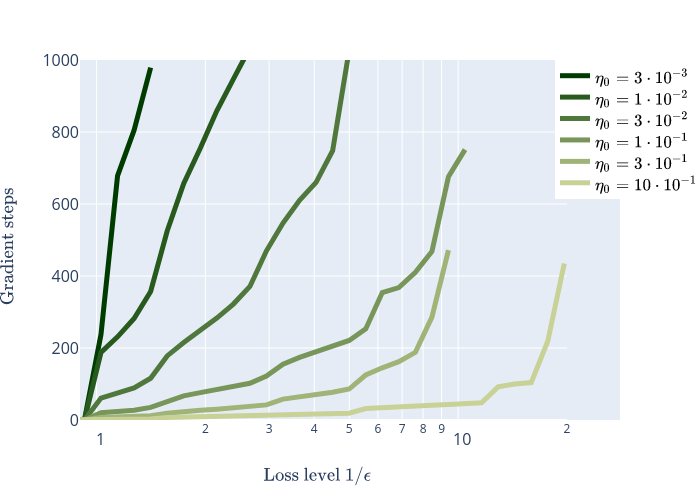}
    \caption{Both figures: horizontal axis is the inverse loss level $1/\epsilon$, in log-scale, and the vertical axis is the number of gradient steps needed for the average loss to drop below $\epsilon$. The average is taken over the depths $L\in\left\{2^k : k \in \left[3, 10\right] \right\}$. Left: constant learning rates $\lr_L(t) = \lr_0$. Right: decaying learning rates $\lr_L(t) = \lr_0 (t+1)^{-1}$.}
    \label{fig:convergence-rates}
\end{figure}
We observe in Figure \ref{fig:convergence-rates} that the number of gradient steps needed to attain a given level $\epsilon$ is linear in $\log(1/\epsilon)$ for constant learning rates, and exponential in $\log(1/\epsilon)$ for learning rates decaying like $1/t$. We also see that in both cases, the rate of convergence is inversely proportional to the initial learning rate $\eta_0$.

\subsection{Emergence of regularity of weights as a function of the layer index} \label{sec:regularity-limit}

Recall the results of Proposition \ref{prop:tanh_continuity} stating that under condition \eqref{ass:scaling-limit}, the rescaled trained weights $L^{1/2} A^{(L)}_{\floor{Ls}}(T)$ converge to a limit $\overbar{A}^*_s$ that has finite $2$-variation. We verify that condition \eqref{ass:scaling-limit} holds by running experiments for varying depths and looking at the quantities
\begin{equation*}
\overbar{f}^{(L)}(t) \coloneqq \frac{1}{2}\sum_{k=1}^L \norm{A^{(L)}_k(t)}_F^2 \quad \text{and} \quad \overbar{g}^{(L)}(t) \coloneqq \frac{1}{2}L \sum_{k=1}^{L-1} \norm{A^{(L)}_{k+1}(t) - A_k^{(L)}(t)}_F^2.
\end{equation*}

\begin{figure}[!ht]
    \centering
    \includegraphics[width=0.48\textwidth]{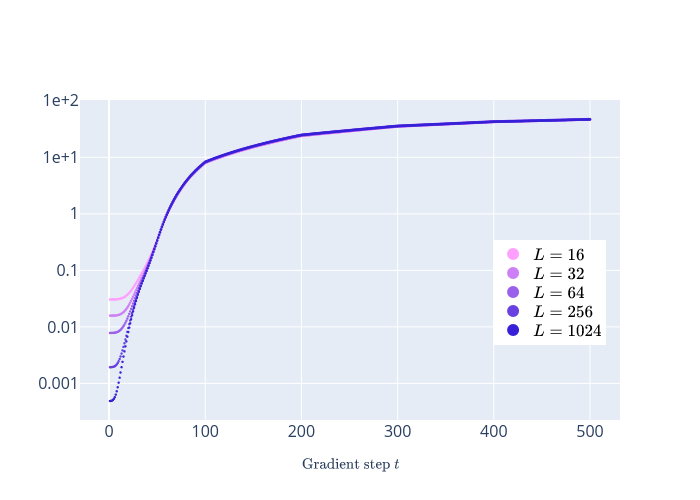}
    \includegraphics[width=0.48\textwidth]{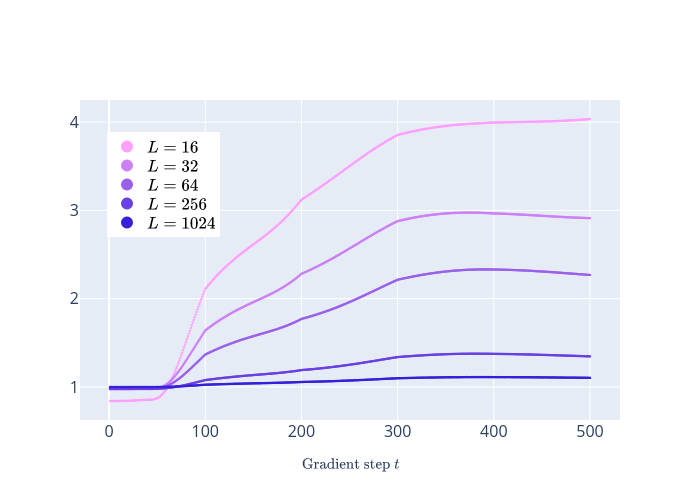}
    \caption{ Evolution of weight norms along gradient descent path for different depths $L\in\left\{2^4, 2^5, 2^6, 2^8, 2^{10} \right\}$. Left: $L^2$-type norm $\overbar{f}^{(L)}(t)$ as a function of gradient iterations. Right: Quadratic variation-type norm $\overbar{g}^{(L)}(t)$ as a function of  gradient iterations.}
    \label{fig:behaviour-norms}
\end{figure}
We observe in Figure \ref{fig:behaviour-norms} that at initialization $t=0$, the sum of the squared norms $\overbar{f}^{(L)}$ is $\OO(L^{-1})$, and becomes $\OO(1)$ during training $t\gg 1$. However, the smoothness of the weights as measured by $\overbar{g}^{(L)}(t)$ is constant with $t$ for large $L$. That means, the conservation of smoothness during training is a feature of the architecture (smooth activation function) and of gradient descent, not of the particular weight initialization nor of a particular scaling.

We observe in Figure \ref{fig:smoothing-effects} that as $L\to\infty$, the rescaled trained weights converge to a limit $\overbar{A}^*$. This is a striking result, indicative of the stability of this network architecture \cite{HR2018}: there is no \textit{a priori} reason that networks with different depths and trained independently of each other should behave similarly. The limiting behaviour of trained weights of residual networks with a smooth activation function was first observed in \cite{CCRX21}, where the limit is explicitly derived and proved.

\begin{figure}[!ht]
    \centering
    \includegraphics[width=0.7\textwidth]{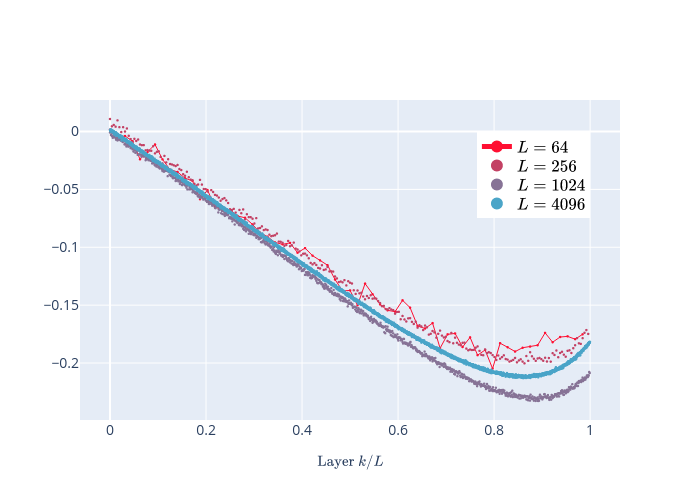}
    \caption{Scatter plot of the rescaled weights $L^{1/2} A^{(L)}_{k, (7, 18)}(T)$ for different values of $L\in\left\{4^x : x\in [3, 6] \right\}$ at the end of the training $T=500$. Horizontal axis is the scaled layer index $k/L$.}
    \label{fig:smoothing-effects}
\end{figure}
\section{Conclusion}

We prove linear convergence of gradient descent to a global minimum of the training loss for deep residual networks with constant layer width and smooth activation function. We further show that if the trained weights, as a function of the layer index, admits a scaling limit as the depth of the network tends to infinity, then it has finite $2-$variation. \\
A natural question to investigate next is the generalization capability of the trained weights obtained by gradient descent, which we characterize in this work. Indeed, it is still an open question whether the weights obtained by gradient descent admit the tightest generalization gap among all the other global minima. Also, our work can be generalized to study other residual architectures (for example with ReLU activation) by looking at alternative norms along the gradient descent path.

%\paragraph{Acknowledgements} 
\bibliographystyle{siam}
\bibliography{ref}

\newpage

\appendix

\addcontentsline{toc}{section}{Appendix}
\addtocontents{toc}{\protect\setcounter{tocdepth}{-1}}

\section{Gradient of the loss function with respect to parameters} \label{app:gradient_derivation}

Let $x,y\in \R^d$ and $\parms^{(L)}\in\R^{L \times d \times d}$. We want to compute the gradient of $\ell(y, \widehat{y}(x, \parms^{(L)}))$ with respect to the network parameters $\{ \parms^{(L)}_k : k=1, \ldots, L \}$. Fix $1\leq k \leq L$ and $1 \leq m,n \leq d$. We first observe that 
\begin{align*}
    \frac{\del \ell}{\del \parms^{(L)}_{k, mn}}\left(y, \widehat{y}(x, \parms^{(L)})\right) &= \nabla_{\widehat{y}} \, \ell \left(y, \widehat{y}\left(x, \parms^{(L)}\right)\right)^{\top} \frac{\del h_L^{(L)}}{\del h^{(L)}_k} \frac{\del h_k^{(L)}}{\del \parms^{(L)}_{k, mn}}.
\end{align*}
By induction, we obtain
\begin{align}
  M_{k}^{(L)} \coloneqq \frac{\del h_L^{(L)}}{\del h_k^{(L)}} &= \prod_{j=k+1}^{L} \frac{\del h_{j}^{(L)}}{\del h_{j-1}^{(L)}} \nonumber \\
  &= \prod_{j=k+1}^{L} \left(I_d + \deltal  \frac{\del}{\del h_{j-1}^{(L)}} \sigma_d\left(\parms^{(L)}_{j} h_{j-1}^{(L)} \right) \right) \nonumber \\
  &= \prod_{j=k+1}^{L} \left( I_d + \deltal  \diag\left(\nabla\sigma_d\left(\parms^{(L)}_{j} h_{j-1}^{(L)} \right) \right) \parms^{(L)}_{j} \right). \label{app:eq:def_M}
\end{align}
We also have \[
\frac{\del h_k^{(L)}}{\del \parms^{(L)}_{k, mn}} = \deltal \sigma'\left( \left( \parms^{(L)}_k h_{k-1}^{(L)} \right)_m \right) h_{k-1, n}^{(L)}  e_m \in \R^d.
\]
Denote $\dot{\sigma}^{(L)}_{k, m} \coloneqq \sigma'\left( \left( \parms^{(L)}_k h_{k-1}^{(L)} \right)_m \right)$. Regrouping everything, we get 
\begin{equation} \label{app:eq:gradients}
\frac{\del \ell}{\del \parms^{(L)}_{k, mn}} = \deltal \, h_{k-1,n}^{(L)}  \, \dot{\sigma}^{(L)}_{k, m} \, \nabla_{\widehat{y}} \, \ell \left(y, \widehat{y}\left(x, \parms^{(L)} \right) \right)^{\top}  M_{k}^{(L)} e_m.
\end{equation}

\section{Boundedness of hidden states and Jacobians} \label{app:proof_forward _backward} This section contains two useful results for our analysis.

\begin{lemma} \label{lem:3_4_bounded_forward}
Let $\parms^{(L)}\in\R^{L\times d \times d}$ and $\Ca > 0$ such that $L\geq 5 \Ca$ and \[
\norm{\parms^{(L)}}_{F, \infty} = \max_{k=1, \ldots, L} \norm{\parms^{(L)}_k}_F \leq \Ca L^{-1/2}.
\]
Then, under Assumption \ref{3_4_assumptions_1} \textit{(i)}--\textit{(ii)}, we have that for all $x\in\R^d$ and for every $k=1, \ldots, L$,
\[
\norm{x}_2 e^{-2\Ca} \leq \norm{h^{x, \, (L)}_k}_2 \leq \norm{x}_2 e^{1.1\Ca} \hspace{1cm} \mbox{and} \hspace{1cm} \norm{M^{x, \, (L)}_{k} e_m}_2 \leq e^{\Ca}.
\]
\end{lemma}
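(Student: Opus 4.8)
The plan is to prove the two bounds (forward and backward) separately, using induction on the layer index $k$ and exploiting the residual structure together with the scaling $\de = L^{-1/2}$ and $\norm{\parms^{(L)}}_{F,\infty}\le \Ca L^{-1/2}$. Throughout I would use the submultiplicativity of the operator norm, the inequality $\norm{\cdot}_2 \le \norm{\cdot}_F$ for the spectral-versus-Frobenius comparison, and the activation bounds from Assumption \ref{3_4_assumptions_1} \textit{(i)}, namely $\abs{\sigma(z)}\le\abs{z}$ and $\abs{\sigma'(z)}\le 1$.

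\textbf{Upper bound on the hidden states.} From the recursion \eqref{resnets_wo_bias}, I would bound
\[
\norm{h_k^{x}}_2 \le \norm{h_{k-1}^{x}}_2 + \de \norm{\sigma_d(\parms_k h_{k-1}^{x})}_2 \le \norm{h_{k-1}^{x}}_2\left(1 + \de \norm{\parms_k}_2\right) \le \norm{h_{k-1}^{x}}_2\left(1 + \Ca L^{-1}\right),
\]
using $\abs{\sigma(z)}\le\abs{z}$ componentwise so that $\norm{\sigma_d(v)}_2 \le \norm{v}_2$, then $\norm{\parms_k h_{k-1}}_2 \le \norm{\parms_k}_F \norm{h_{k-1}}_2$. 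Iterating gives $\norm{h_k^{x}}_2 \le \norm{x}_2 (1+\Ca L^{-1})^L \le \norm{x}_2 e^{\Ca}$, and I would sharpen the constant from $e^{\Ca}$ to $e^{1.1\Ca}$ by using $L \ge 5\Ca$ to control higher-order terms in the expansion $(1+\Ca/L)^L$ (concretely, $(1+x/L)^L \le e^{x}$ already, so the $1.1$ slack simply absorbs the spectral-vs-Frobenius looseness and leaves room; I would verify the explicit numerical inequality $e^{\Ca}\le e^{1.1\Ca}$ trivially). For the lower bound, the reverse triangle inequality yields $\norm{h_k^{x}}_2 \ge \norm{h_{k-1}^{x}}_2(1 - \Ca L^{-1})$, and iterating gives $\norm{h_k^{x}}_2 \ge \norm{x}_2 (1-\Ca L^{-1})^L$. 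Here the condition $L\ge 5\Ca$ ensures $\Ca L^{-1}\le 1/5 < 1$ so the factors stay positive, and the elementary bound $(1-x/L)^L \ge e^{-2x}$ valid for $x/L \le 1/2$ delivers the constant $e^{-2\Ca}$.

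\textbf{Upper bound on the Jacobian.} From \eqref{app:eq:def_M}, $M_k^{x} = \prod_{j=k+1}^L (I_d + \de\, \diag(\dot\sigma_j^{x})\parms_j)$. I would bound the operator norm of each factor by $1 + \de \norm{\diag(\dot\sigma_j^{x})\parms_j}_2 \le 1 + \de\norm{\parms_j}_F \le 1 + \Ca L^{-1}$, using $\abs{\sigma'(z)}\le 1$ to get $\norm{\diag(\dot\sigma_j^{x})}_2\le 1$. Submultiplicativity over at most $L$ factors then gives $\norm{M_k^{x} e_m}_2 \le \norm{M_k^{x}}_2 \le (1+\Ca L^{-1})^L \le e^{\Ca}$, which is exactly the claimed bound.

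\textbf{Main obstacle.} The only genuinely delicate point is pinning down the explicit numerical constants, especially the asymmetry between the lower constant $e^{-2\Ca}$ and the upper constant $e^{1.1\Ca}$: the lower bound loses a factor of roughly two in the exponent because $(1-x/L)^L$ decays faster than $(1+x/L)^L$ grows, and one must invoke $L\ge 5\Ca$ to keep $1-\Ca L^{-1}$ bounded away from zero and to make the second-order Taylor correction $-\Ca^2/(2L)$ negligible against the slack in the exponent. I expect the forward lower bound to require the most care, since the reverse-triangle step is where positivity and the $L\ge 5\Ca$ hypothesis are both essential; the upper bounds and the Jacobian estimate are comparatively routine once the factor-by-factor operator-norm bound is in place.
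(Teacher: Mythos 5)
Your proposal is correct and follows essentially the same route as the paper: per-layer multiplicative estimates of the form $\norm{h_k}_2 \le (1+\Ca L^{-1})\norm{h_{k-1}}_2$ (resp.\ $\ge (1-\Ca L^{-1})\norm{h_{k-1}}_2$) obtained from $\abs{\sigma(z)}\le\abs{z}$ and $\norm{\parms_k}_2\le\norm{\parms_k}_F$, followed by the same factor-by-factor operator-norm bound for $M_k^{x,\,(L)}$. The paper merely packages the forward recursion through $\log\norm{h_k}^2$ and Taylor bounds on $\log(1+z)$ (arriving at the slightly looser exponent $1.1\Ca$ that your direct iteration $(1+\Ca L^{-1})^L\le e^{\Ca}$ already implies), so the two arguments are equivalent in substance.
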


Note that we did not try to optimize the constants in front of the bounds, and one can easily sharpen them if needed.

% \paragraph{Proof of Lemma \ref{lem:3_4_bounded_forward} }
\begin{proof}
We follow the same lines as \cite{ALS2019}. Fix $L \geq 5 \Ca$. In the proof, we omit the explicit dependence in $L$. First, note that we can write the logarithm of the norm of the hidden state as follows:
\begin{align*}
    \log \norm{h_k} &= \log \norm{x} + \frac{1}{2} \sum_{j=1}^k \log \frac{\norm{h_j}^2}{\norm{h_{j-1}}^2} \\
    &= \log \norm{x} + \frac{1}{2} \sum_{j=1}^k \log \left( 1 + \underbrace{\frac{2\deltal}{\norm{h_{j-1}}^2} \Big\langle h_{j-1}, \sigma(\parms_j h_{j-1})  \Big\rangle + \deltal^2 \frac{\norm{\sigma(\parms_j h_{j-1})}^2}{\norm{h_{j-1}}^2}}_{\eqqcolon \, \Delta_j} \right).
\end{align*}
We can bound $\Delta_j$ further:
\begin{align}
    \Delta_j &\leq 2\deltal \norm{\parms_j}_F + \deltal^2 \norm{\parms_j}_F^2 \nonumber \\ 
    &\leq 2 \Ca L^{-1} + \Ca^2 L^{-2} \leq \frac{11}{5} \Ca L^{-1}. \label{3_4_bound_delta_l}
\end{align}
The first inequality holds by Cauchy-Schwartz and Assumption \ref{3_4_assumptions_1} \textit{(ii)}, the second by hypothesis, and the third by Assumption \ref{3_4_assumptions_1} \textit{(i)}. Thus, we conclude the proof of the upper bound by noting that $\log(1+z) \leq z$ for all $z > -1$. \\ 
For the lower bound, first observe that Cauchy-Schwartz yields \[
\Delta_j \geq -2 \deltal \norm{\parms_j}_F \geq -2 \Ca L^{-1}.
\]
From \eqref{3_4_bound_delta_l}, we also have $\abs{\Delta_j} \leq \frac{11}{25} < \frac{1}{2}$, so we can use the fact that $\log(1+z) \geq z - z^2$ for all $\abs{z} < \frac{1}{2}$ to deduce that 
\begin{align*}
    \log \norm{h_k} &\geq \log \norm{x} +  \frac{1}{2} \sum_{j=1}^k \left( \Delta_j - \Delta_j^2 \right) \\
    &\geq \log \norm{x} - \Ca - \frac{121}{25} \Ca^2 L^{-1} \geq \log \norm{x} - 2 \Ca,
\end{align*}
which concludes the proof for the lower bound on the hidden states. \\
For the upper bound on the Jacobians, we apply Lemma \ref{app:lemma_spectral_frobenius} repeatedly on $M_{k}$ to get 
\begin{align*}
    \log \norm{M_{k} e_m}_2 &\leq \log \norm{e_m}_2 + \sum_{j=k+1}^L \log \norm{I_d + \deltal  \diag\left(\nabla\sigma_d\left(\parms_{j} h_{j-1} \right) \right) \parms_{j} }_2   \\ 
    &\leq \sum_{j=k+1}^L \deltal \norm{\diag\left(\nabla\sigma_d\left(\parms_{j} h_{j-1} \right) \right) \parms_{j}}_2 \leq \sum_{j=k+1}^L \deltal \norm{\parms_j}_F \leq \Ca,
\end{align*}
where we use $\norm{\cdot}_2 \leq \norm{\cdot}_F$ and Assumption \ref{3_4_assumptions_1} \textit{(ii)} in the third inequality.
\end{proof}

We deduce directly an upper bound on the loss function $J_L$ that does not depend on $L$. 
\begin{corollary} \label{app:bound_loss_fct}
Under the same hypotheses as Lemma \ref{lem:3_4_bounded_forward}, we have \[
J_L\left(\parms^{(L)}\right) \leq 1 + e^{2.2 \Ca}.
\]
\end{corollary}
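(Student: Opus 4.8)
The plan is to bound the mean-squared error pointwise over the training set and then average. Starting from the definition \eqref{mean_squared_error}, I would control each squared residual $\norm{y_i - \widehat{y}_L(x_i, \parms^{(L)})}_2^2$ separately. Rather than invoking the triangle inequality directly (which would yield the looser bound $(1 + e^{1.1\Ca})^2 = 1 + 2e^{1.1\Ca} + e^{2.2\Ca}$), I would apply the elementary quadratic inequality $\norm{a - b}_2^2 \leq 2\norm{a}_2^2 + 2\norm{b}_2^2$ with $a = y_i$ and $b = \widehat{y}_L(x_i, \parms^{(L)})$, which cleanly splits each term into a data contribution and a network-output contribution.

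For the output contribution, I would recall that the network output equals the last hidden state, $\widehat{y}_L(x_i, \parms^{(L)}) = h_L^{x_i, (L)}$, so the upper bound on the hidden states established in Lemma \ref{lem:3_4_bounded_forward} applies verbatim and gives $\norm{\widehat{y}_L(x_i, \parms^{(L)})}_2^2 \leq \norm{x_i}_2^2 \, e^{2.2\Ca}$. Invoking the unit normalization $\norm{x_i}_2 = \norm{y_i}_2 = 1$ from Assumption \ref{3_4_assumptions_1} \textit{(iii)}, each squared residual is then at most $2\norm{y_i}_2^2 + 2\norm{x_i}_2^2 e^{2.2\Ca} = 2 + 2e^{2.2\Ca}$.

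Finally I would sum over the $N$ training points: the prefactor $\tfrac{1}{2N}$ in \eqref{mean_squared_error} multiplies a sum of $N$ identical upper bounds $2 + 2e^{2.2\Ca}$, and the factor $2$ coming from the splitting inequality is exactly cancelled by the $\tfrac{1}{2}$, leaving $J_L(\parms^{(L)}) \leq 1 + e^{2.2\Ca}$ as claimed. There is no genuine obstacle in this argument; the only points worth flagging are that the statement tacitly relies on the unit-norm hypothesis from Assumption \ref{3_4_assumptions_1} \textit{(iii)} (in addition to the hypotheses of Lemma \ref{lem:3_4_bounded_forward}, which only require \textit{(i)}--\textit{(ii)}), and that it is the quadratic splitting rather than the triangle inequality that produces the sharp constant $e^{2.2\Ca}$ matching the squared hidden-state bound.
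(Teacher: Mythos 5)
Your proof is correct and is essentially identical to the paper's: the paper also applies $\norm{a-b}_2^2 \leq 2\norm{a}_2^2 + 2\norm{b}_2^2$, bounds $\norm{h_L^{x_i,(L)}}_2 \leq \norm{x_i}_2 e^{1.1\Ca}$ via Lemma \ref{lem:3_4_bounded_forward}, and uses the unit norms to obtain $1+e^{2.2\Ca}$. Your side remark that the statement tacitly invokes the unit-norm part of Assumption \ref{3_4_assumptions_1} \textit{(iii)}, beyond the hypotheses of the lemma itself, is a fair and accurate observation about the paper's phrasing.
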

\begin{proof}
By definition of the loss function and using Lemma \ref{lem:3_4_bounded_forward}, we have 
\begin{align*}
    J_L\left(\parms^{(L)}\right) &= \frac{1}{2N} \sum_{i=1}^N \norm{ y_i - \widehat{y}_L \left(x_i, \parms^{(L)}\right)}_2^2 \\
    &\leq \frac{1}{2N} \sum_{i=1}^N 2 \norm{y_i}^2 + 2 \norm{h^{x_i, \, (L)}_L}_2^2 \leq 1 + e^{2.2\Ca}.
\end{align*}
\end{proof}

\section{Upper bounds on the gradient and Hessian of the loss function} \label{app:gradient_estimation}

\begin{lemma} \label{lemma:gradient-upper}
Let $\parms^{(L)}\in\R^{L\times d \times d}$ and $\Ca > 0$ such that $L\geq 5 \Ca$ and $\norm{\parms^{(L)}}_{F, \infty} \leq \Ca L^{-1/2}$. Then, under Assumption \ref{3_4_assumptions_1} \textit{(i)}--\textit{(ii)}, for $k=1, \ldots L$, it holds that
\begin{equation*}
\norm{\nabla_{\parms_k} J_L\left(\parms^{(L)} \right)}_F^2 \leq 2d e^{4.2\Ca} L^{-1} 
J_L\left(\parms^{(L)} \right).
\end{equation*}
\end{lemma}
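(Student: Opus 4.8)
The plan is to start from the closed-form gradient \eqref{app:eq:gradients} derived in Appendix~\ref{app:gradient_derivation}, average it over the training set, and then use Cauchy--Schwarz to trade the square of the data-average for the data-average of squares, which is exactly what turns the estimate into one proportional to $J_L$. Concretely, since $\nabla_{\widehat y}\ell(y,\widehat y) = \widehat y - y$, averaging \eqref{app:eq:gradients} over $i$ gives
\[
\frac{\del J_L}{\del \parms_{k,mn}} = \frac{\deltal}{N}\sum_{i=1}^N h_{k-1,n}^{x_i}\,\dot\sigma_{k,x_i,m}\,\big\langle M_k^{x_i} e_m,\ \widehat y(x_i,\parms) - y_i\big\rangle,
\]
so that, pulling out $\deltal$ and applying Cauchy--Schwarz in the index $i$ (equivalently $(\tfrac1N\sum_i a_i)^2 \le \tfrac1N\sum_i a_i^2$),
\[
\norm{\nabla_{\parms_k} J_L(\parms^{(L)})}_F^2 \le \frac{\deltal^2}{N}\sum_{i=1}^N \sum_{m,n=1}^d \big(h_{k-1,n}^{x_i}\big)^2\,\dot\sigma_{k,x_i,m}^2\,\big\langle M_k^{x_i}e_m,\ \widehat y(x_i,\parms)-y_i\big\rangle^2 .
\]

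For each fixed $i$ the inner double sum factorizes: the sum over $n$ produces $\norm{h_{k-1}^{x_i}}_2^2$; the factor $\dot\sigma_{k,x_i,m}^2 \le 1$ is discarded using $\abs{\sigma'}\le 1$ from Assumption~\ref{3_4_assumptions_1}~\textit{(i)}; and the remaining sum over $m$ is $\sum_m \langle M_k^{x_i}e_m,\ \widehat y(x_i,\parms)-y_i\rangle^2$. I would bound this last quantity componentwise by Cauchy--Schwarz, $\langle M_k^{x_i}e_m, v\rangle^2 \le \norm{M_k^{x_i}e_m}_2^2\,\norm{v}_2^2$, and then invoke the Jacobian estimate $\norm{M_k^{x_i}e_m}_2 \le e^{\Ca}$ from Lemma~\ref{lem:3_4_bounded_forward}; summing over the $d$ values of $m$ yields the factor $d\, e^{2\Ca}\norm{\widehat y(x_i,\parms)-y_i}_2^2$. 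This is the single place where the dimension $d$ enters, precisely because the per-column bound on $M_k$ must be summed over its $d$ columns.

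It then remains to assemble the pieces. Using the forward estimate $\norm{h_{k-1}^{x_i}}_2 \le \norm{x_i}_2 e^{1.1\Ca}$ from Lemma~\ref{lem:3_4_bounded_forward} together with the unit-norm normalization $\norm{x_i}_2 = 1$, and $\deltal^2 = L^{-1}$ from Assumption~\ref{3_4_assumptions_1}~\textit{(ii)}, I obtain
\[
\norm{\nabla_{\parms_k} J_L(\parms^{(L)})}_F^2 \le d\, e^{4.2\Ca}\, L^{-1}\,\frac1N\sum_{i=1}^N \norm{\widehat y(x_i,\parms)-y_i}_2^2 ,
\]
and recognizing $\tfrac1N\sum_i \norm{\widehat y(x_i,\parms)-y_i}_2^2 = 2 J_L(\parms^{(L)})$ from the definition \eqref{mean_squared_error} of the loss delivers the stated constant $2 d\, e^{4.2\Ca}$. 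There is no deep obstacle here: the argument is a chain of elementary inequalities, and the only points requiring care are \textit{(a)} keeping the averaging over $i$ intact through the Cauchy--Schwarz step so that the final sum reconstitutes $J_L$ rather than a cruder maximum, and \textit{(b)} correctly propagating the factor $d$ out of the column-wise Jacobian bound. The exponent $4.2\Ca = 2.2\Ca + 2\Ca$ is merely the sum of the squared forward bound $e^{2.2\Ca}$ on $\norm{h_{k-1}^{x_i}}_2^2$ and the squared Jacobian bound $e^{2\Ca}$.
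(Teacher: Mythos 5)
Your proof is correct and follows essentially the same route as the paper's: the identity \eqref{app:eq:gradients}, Cauchy--Schwarz over the sample index to convert the squared average into an average of squares, $\abs{\sigma'}\leq 1$, the column-wise Jacobian bound from Lemma \ref{lem:3_4_bounded_forward} summed over $m$ to produce the factor $d\,e^{2\Ca}$, the forward bound $e^{2.2\Ca}$ on $\norm{h_{k-1}^{x_i}}_2^2$, and $\norm{\widehat y - y}_2^2 = 2\ell$. The constants and the decomposition $4.2\Ca = 2.2\Ca + 2\Ca$ match the paper exactly.
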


% \paragraph{Proof of Lemma \ref{lemma:gradient-upper}}
\begin{proof}
Fix $L \geq 5 \Ca$. In the proof, we omit the explicit dependence in $L$. We first use Cauchy-Schwartz and \eqref{app:eq:gradients} to bound the Frobenius norm. 
\begin{align*}
    \norm{\nabla_{\parms_k} J_L(\parms)}_F^2 &= \sum_{m,n=1}^d \left( \frac{\del J_L}{\del \parms_{k, mn}} (\parms) \right)^2 \\
    &\leq \sum_{m,n=1}^d \frac{1}{N} \sum_{i=1}^N \left( \frac{\del \ell}{\del \parms_{k, mn}}\left(y_i, \widehat{y}(x_i, \parms) \right) \right)^2 \\ 
    &\leq \sum_{m,n=1}^d \frac{\deltal^2}{N} \sum_{i=1}^N \left( h^{x_i}_{k-1, n} \right)^2 \norm{ \nabla_{\widehat{y}} \, \ell\left(y_i, \widehat{y}(x_i, \parms) \right) }_2^2 \norm{M_{k}^{x_i} e_m}_2^2 \\
    &= \frac{2L^{-1}}{N} %\frac{2 \modif{ L^{-2\alpha} }}{N} 
    \sum_{i=1}^N \norm{h^{x_i}_{k-1}}_2^2 \ell(y_i, \widehat{y}(x_i, \parms)) \norm{M_{k}^{x_i}}_F^2 \\
    &\leq 2d e^{4.2\Ca} L^{-1} J_L(\parms), %\modif{ L^{-2\alpha} } 
\end{align*}
where we use the fact that $2\ell(y, \widehat{y}) = \norm{ \nabla_{\widehat{y}} \, \ell\left(y, \widehat{y} \right) }_2^2$ and Lemma \ref{lem:3_4_bounded_forward} in the last inequality. \end{proof}

Finally, we derive an upper bound on the spectral norm of the Hessian of the loss function.

\begin{lemma} \label{app:hessian}
Let $\parms^{(L)}\in\R^{L\times d \times d}$ and $\Ca > 0$ such that $L\geq 5 \Ca$ and $\norm{\parms^{(L)}}_{F, \infty} \leq \Ca L^{-1/2}$. Then, under Assumption \ref{3_4_assumptions_1} \textit{(i)}--\textit{(ii)}, we have \[
    \norm{\nabla^2_{\parms} J_L\left(\parms^{(L)}\right)}_2 \leq 5d e^{4.3 \Ca}.
\]
\end{lemma}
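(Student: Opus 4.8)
The plan is to bound the spectral norm of the Hessian through a single directional second derivative. Since $\nabla^2_{\parms} J_L(\parms^{(L)})$ is symmetric, we have $\norm{\nabla^2_{\parms} J_L(\parms^{(L)})}_2 = \sup_{\norm{v}_F=1}\abs{v^{\top}\nabla^2_{\parms} J_L(\parms^{(L)})\,v}$, where $v\in\R^{L\times d\times d}$ is a perturbation direction normalised so that $\sum_{k=1}^L\norm{v_k}_F^2=1$. Setting $\phi(s)\coloneqq J_L(\parms^{(L)}+sv)$, this quantity equals $\abs{\phi''(0)}$, so it suffices to bound $\phi''(0)$ uniformly over all such $v$. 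Differentiating $\phi(s)=\frac{1}{2N}\sum_i\norm{\widehat{y}(x_i,\parms^{(L)}+sv)-y_i}_2^2$ twice and evaluating at $s=0$ gives
\begin{equation*}
\phi''(0)=\frac{1}{N}\sum_{i=1}^N\left(\norm{\dot h_L^{x_i}}_2^2 + \Big\langle \widehat{y}(x_i,\parms^{(L)})-y_i,\,\ddot h_L^{x_i}\Big\rangle\right),
\end{equation*}
where $\dot h_k^{x}$ and $\ddot h_k^{x}$ denote the first and second derivatives in $s$ at $s=0$ of the hidden state $h_k^x(\parms^{(L)}+sv)$ under the recursion \eqref{resnets_wo_bias}. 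The two summands are the Gauss--Newton term and the residual-curvature term, and I would bound each in turn.

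First I would control the first directional derivative. Writing $z_k=\parms^{(L)}_kh^x_{k-1}$ and differentiating \eqref{resnets_wo_bias} once yields the closed form $\dot h_L^x=\deltal\sum_{k=1}^L M_k^x\,\diag(\sigma'(z_k))\,v_k h^x_{k-1}$. Using $\norm{M_k^x}_2\leq\norm{M_k^x}_F\leq\sqrt{d}\,e^{\Ca}$ (from the column bound of Lemma \ref{lem:3_4_bounded_forward}), $\norm{h^x_{k-1}}_2\leq e^{1.1\Ca}$ on unit data, $\abs{\sigma'}\leq1$, together with Cauchy--Schwarz ($\sum_k\norm{v_k}_F\leq L^{1/2}$) and $\deltal=L^{-1/2}$, gives the depth-independent bound $\norm{\dot h_k^x}_2\leq\sqrt{d}\,e^{2.1\Ca}$ for every $k$. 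This already controls the Gauss--Newton term by $d\,e^{4.2\Ca}$, and it is here that the factor $d$ in the final estimate originates.

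The main work is the second directional derivative. Differentiating \eqref{resnets_wo_bias} a second time produces, at $s=0$, the update
\begin{equation*}
\ddot h_k^x=\ddot h_{k-1}^x+\deltal\Big(\diag\!\big(\sigma''(z_k)\odot\dot z_k\big)\dot z_k+\diag(\sigma'(z_k))\,\ddot z_k\Big),
\end{equation*}
with $\dot z_k=v_k h^x_{k-1}+\parms^{(L)}_k\dot h^x_{k-1}$ and $\ddot z_k=2v_k\dot h^x_{k-1}+\parms^{(L)}_k\ddot h^x_{k-1}$. Taking norms, using $\abs{\sigma''}\leq1$ and the pointwise estimate $\norm{\diag(\sigma''(z_k)\odot\dot z_k)\dot z_k}_2\leq\norm{\dot z_k}_2^2$, yields a discrete Gr\"onwall-type recursion $b_k\leq(1+\Ca L^{-1})b_{k-1}+\deltal\norm{\dot z_k}_2^2+2\deltal\norm{v_k}_F\norm{\dot h^x_{k-1}}_2$ for $b_k\coloneqq\norm{\ddot h_k^x}_2$. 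The multiplicative factor telescopes to $e^{\Ca}$, and the source terms are summed to give a bound on $\norm{\ddot h_L^x}_2$ of order $\sqrt{d}\,e^{3.1\Ca}$ up to an $\OO(L^{-1/2})$ remainder.

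The delicate point --- and the main obstacle --- is precisely this summation. Because the constraint $\sum_k\norm{v_k}_F^2=1$ allows individual layers to carry $\norm{v_k}_F=\OO(1)$, one cannot bound the source terms layer-by-layer: a naive estimate would grow with $L$. The resolution is to exploit the $\deltal=L^{-1/2}$ scaling jointly with Cauchy--Schwarz, so that the quadratic-in-$v$ source $\sum_k\deltal\norm{\dot z_k}_2^2$ is $\OO(d\,L^{-1/2})$ and hence negligible, while the cross term $\sum_k\deltal\norm{v_k}_F\norm{\dot h^x_{k-1}}_2$ stays $\OO(\sqrt{d})$ uniformly in $L$. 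Finally I would assemble the two pieces: the residual-curvature term is estimated by Cauchy--Schwarz as $\frac{1}{N}\sum_i\abs{\langle\widehat{y}(x_i,\parms^{(L)})-y_i,\ddot h_L^{x_i}\rangle}\leq\big(\max_i\norm{\ddot h_L^{x_i}}_2\big)\big(2J_L(\parms^{(L)})\big)^{1/2}$, where $J_L(\parms^{(L)})\leq1+e^{2.2\Ca}$ by Corollary \ref{app:bound_loss_fct}. Collecting constants, the Gauss--Newton contribution $d\,e^{4.2\Ca}$ dominates the residual contribution of order $\sqrt{d}\,e^{4.2\Ca}$, and absorbing the slack into the exponent $4.3\Ca$ gives $\abs{\phi''(0)}\leq5d\,e^{4.3\Ca}$, which is the claimed spectral-norm bound.
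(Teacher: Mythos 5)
Your proof is correct, and it takes a genuinely different route from the paper's. The paper computes the Hessian entrywise and decomposes the $Ld^2\times Ld^2$ matrix into three structured blocks --- a positive semi-definite Gauss--Newton block $H_{\psd}$, a block $H$ carrying $\sigma''$ that is diagonal in the layer index, and a cross-layer block $\widetilde H$ --- plus an $\OO(L^{-1/2})$ remainder, and then controls $\norm{H_{\psd}}_2$ via its trace and the other two via their Frobenius norms. You instead bound the quadratic form $v^{\top}\nabla^2_{\parms}J_L\,v$ directly as the second derivative of $s\mapsto J_L(\parms^{(L)}+sv)$, splitting it into the Gauss--Newton term $\frac1N\sum_i\norm{\dot h_L^{x_i}}_2^2$ and the residual-curvature term, and controlling $\ddot h_L$ through a discrete Gr\"onwall recursion on the forward pass. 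The two decompositions correspond ($H_{\psd}\leftrightarrow$ Gauss--Newton, $H+\widetilde H\leftrightarrow$ residual curvature), and both hinge on the mechanism you correctly isolate: $\deltal=L^{-1/2}$ combined with Cauchy--Schwarz over layers converts the normalisation $\sum_k\norm{v_k}_F^2=1$ into depth-independent estimates. What your route buys: it never materialises the Hessian matrix, and it bounds the PSD block as a quadratic form, $v^{\top}H_{\psd}v\le d\,e^{4.2\Ca}$, which is cleaner than the paper's passage $\norm{H_{\psd}}_2\le\tr(H_{\psd})^{1/2}$ (an inequality that does not hold for a general PSD matrix, though the paper's argument is easily repaired by using $\norm{H_{\psd}}_2\le\tr(H_{\psd})$ with its own trace estimate). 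What the paper's route buys is the explicit block structure of the Hessian, which makes the provenance of each constant and of the $\OO(L^{-1/2})$ remainder visible. Note that both your argument and the paper's absorb an unquantified $\OO(L^{-1/2})$ remainder into the slack between $e^{4.2\Ca}$ and $e^{4.3\Ca}$ without checking that $L\ge 5\Ca$ alone suffices for this, so you are no less rigorous than the source on that point.
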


\begin{proof}
    Fix $L \geq 5 \Ca$. In the proof, we omit the explicit dependence in $L$. We use first-order information \eqref{app:eq:gradients} to compute the second-order derivatives. Straightforward but lengthy computations show that \[
        \nabla^2_{\parms} J_L \left(\parms^{(L)} \right) = H_{\psd} + H + \widetilde{H} + \OO\left(L^{-1/2}\right),
    \]
    where $H_{\psd}, H, \widetilde{H} \in \R^{Ld^2 \times Ld^2}$ are given by the following formulae: \[
    H_{\psd} = \frac{1}{N}\sum_{i=1}^N H^{x_i}_{\psd} \quad H = \frac{1}{N}\sum_{i=1}^N H^{x_i} \quad \widetilde{H} = \frac{1}{N}\sum_{i=1}^N \widetilde{H}^{x_i},
    \] 
    where
    \begin{align*}
        \left(H^{x_i}_{\psd}\right)_{(k, \, mn) \, (k', \, m'n')} &= \deltal^2  h^{x_i}_{k-1, n} h^{x_i}_{k'-1, n'}  \dot{\sigma}_{k, x_i, m} \dot{\sigma}_{k', x_i, m'} \left(M^{x_i}_{k} e_m \right)^{\top} \left(M^{x_i}_{k'} e_{m'} \right)  \\
        H^{x_i}_{(k, \, mn) \, (k', \, m'n')} &= \deltal h^{x_i}_{k-1, n} h^{x_i}_{k-1, n'} \ddot{\sigma}_{k, x_i, m} \left( \widehat{y}(x_i, \parms) - y_i \right)^{\top} M_{k}^{x_i} e_m \one_{m=m'} \one_{k=k'} \\
        \widetilde{H}^{x_i}_{(k, \, mn) \, (k', \, m'n')} &= \deltal^2 h^{x_i}_{k-1, n} \dot{\sigma}_{k, x_i, m} \dot{\sigma}_{k, x_i, m'} \left( \widehat{y}(x_i, \parms) - y_i \right)^{\top} M_{k, -k'}^{x_i} e_m \one_{m'=n'} \one_{k<k'}.
    \end{align*}
    Here, $M_{k, -k'}^{x_i}$ is defined as the same product of matrices as $M_{k}^{x_i}$ in \eqref{app:eq:def_M}, but without the term $j=k'$. By the same reasoning as in Lemma \ref{lem:3_4_bounded_forward}, we still have $\norm{M_{k, -k'}^{x_i} e_m}_2 \leq e^{\Ca}$. \\
    We readily see that for each $i$ there exists $Q_i$ such that $H^{x_i}_{\psd} = Q_i^{\top}Q_i$, so $H_{\psd}$ is positive semi-definite. The trace of $H_{\psd}$ is straightforward to compute.
    \begin{align*}
        \tr\left(H_{\psd}\right) = \frac{1}{N} \sum_{i=1}^N \tr\left(H^{x_i}_{\psd}\right) &= \frac{\deltal^2}{N} \sum_{i=1}^N \sum_{k,m,n} \abs{h_{k-1, n}^{x_i}}^2 \left( \dot{\sigma}_ {k,x_i, m}\right)^2   \norm{M^{x_i}_{k} e_m}_2^2. \\ 
        &= \frac{1}{N} \sum_{i=1}^N L^{-1} %\modif{L^{-2\alpha}} 
        \sum_{k=1}^L \norm{h_{k-1}^{x_i}}_2^2 \norm{M^{x_i}_{k}}_F^2.
    \end{align*}
    We deduce that by Lemma \ref{lem:3_4_bounded_forward} that $\tr\left(H_{\psd}\right) \leq d^2 e^{4.2\Ca}$. \\ %\modif{L^{1-2\alpha}}
    The upper bound on the Frobenius norm of $H$ and $\widetilde{H}$ is no harder. 
    \begin{align*}
        \norm{H^{x_i}}^2_F &\leq \sum_{k=1}^L \sum_{m=1}^d \deltal^2 \norm{h_{k-1}^{x_i}}_2^4 \ell(y_i, \widehat{y}(x_i, \parms)) \norm{M^{x_i}_{k} e_m}_2^2 \leq d e^{6.4 \Ca} \ell(y_i, \widehat{y}(x_i, \parms)), \\ %\modif{L^{1-2\alpha}}, \\
        \norm{\widetilde{H}^{x_i}}_F^2 &\leq  \sum_{k \neq k'} \deltal^4 \norm{h_{k-1}^{x_i}}_2^2  \ell(y_i, \widehat{y}(x_i, \parms)) d^2 e^{2\Ca } \leq d^2 e^{4.2\Ca} \ell(y_i, \widehat{y}(x_i, \parms)). \\ % \modif{L^{2-4\alpha}}. \\
    \end{align*}
    Hence, $\norm{H}_F \leq \sqrt{2d} e^{3.2\Ca} J_L(\parms)^{1/2}$ %\modif{L^{1/2-\alpha}}$ 
    and $\lVert\widetilde{H}\rVert_F \leq \sqrt{2}d e^{2.1\Ca} J_L(\parms)^{1/2}$.  %\modif{L^{1-2\alpha}}$. 
    Using Corollary \ref{app:bound_loss_fct} and wrapping both terms together, we get 
    \begin{align*}
        \norm{\nabla^2_{\parms} J\left( \parms^{(L)} \right) }_2 &\leq \norm{H_{\psd}}_2 + \norm{H}_2 + \lVert\widetilde{H}\rVert_2 + \OO(L^{-1/2}) \\
        &\leq \tr(H_{\psd})^{1/2} + \norm{H}_F + \lVert\widetilde{H}\rVert_F + \OO(L^{-1/2}) \\
        &\leq 5d e^{4.3 \Ca}. 
    \end{align*}
\end{proof}

\section{Lower bounds on loss gradients} \label{app:lower_bound_gradient}
This section contains a supporting result for the proof of Lemma \ref{lemma:lower-bound-grad}.

%Any lower bound on $\nabla_{A_k} J$ heavily depends on the fact that the hidden states $h_{k-1}^{x_i}$ and $h_{k-1}^{x_j}$ are pairwise close to being orthogonal. By assumption, it is true for $k=1$, and the crux of the proof resides in showing that it is approximately true for $k>1$ as well.

% We have a lower bound on $\nabla_{A_1} J_L$. To derive a lower bound on $\nabla_{A_k} J_L$ for $k>1$, it is tricky to use \eqref{app:eq:gradients} directly. Instead, we estimate the difference between gradients of neighbouring layers. First, we need the following lemma.
\begin{lemma} \label{lem:neighbour-grad}
Let $\parms^{(L)}\in\R^{L\times d \times d}$. Under Assumption \ref{3_4_assumptions_1} \textit{(i)}--\textit{(ii)}, we have, for $k=1, \ldots, L-1$, 
\begin{align*}
\frac{\del J_L}{\del \parms_{k, mn}} - \frac{\del J_L}{\del \parms_{k+1, mn}} &= \frac{\deltal}{N} \sum_{i=1}^N h_{k-1, n}^{x_i} \left( \dot{\sigma}_{k, x_i, m} - \dot{\sigma}_{k+1, x_i, m} \right) \nabla_{\widehat{y}} \, \ell\left(y_i, \widehat{y}(x_i, \parms) \right)^{\top} M_{k+1}^{x_i} e_m  \\
    &+ \frac{\deltal^2}{N} \sum_{i=1}^N \nabla_{\widehat{y}} \, \ell\left(y_i, \widehat{y}(x_i, \parms) \right)^{\top} M_{k+1}^{x_i} \xi^{x_i, \, (L)}_{k, mn},
\end{align*}
where $\xi^{x, \, (L)}_{k, mn} \in \R^d$ satisfies
\[
\norm{ \xi^{x, \, (L)}_{k, mn} }_2^2 \leq 2\left( h^x_{k-1,n} \right)^2 \norm{\parms_{k+1} - \parms_k}_F^2 + 2 \norm{\parms_{k,n}}_2^4 \norm{h_{k-1}^x}_2^4.
\]
\end{lemma}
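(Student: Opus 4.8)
The plan is to start from the closed-form gradient \eqref{app:eq:gradients}, which for the averaged loss reads
\[
\frac{\del J_L}{\del \parms_{k,mn}} = \frac{\deltal}{N}\sum_{i=1}^N h_{k-1,n}^{x_i}\,\dot\sigma_{k,x_i,m}\,\nabla_{\widehat y}\ell\big(y_i,\widehat y(x_i,\parms)\big)^{\top} M_k^{x_i} e_m,
\]
together with the analogous expression for $k+1$ (involving $h_k$, $\dot\sigma_{k+1}$, $M_{k+1}$). First I would reduce both gradients to the common Jacobian factor $M_{k+1}^{x_i}$: differentiating $h_{k+1}=h_k+\deltal\sigma_d(\parms_{k+1}h_k)$ gives $M_k=M_{k+1}\big(I_d+\deltal\diag(\dot\sigma_{k+1})\parms_{k+1}\big)$, hence $M_k e_m=M_{k+1}e_m+\deltal M_{k+1}\diag(\dot\sigma_{k+1})\parms_{k+1}e_m$. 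Substituting this into the $k$-term pulls a factor $M_{k+1}^{x_i}$ out of every summand, so the difference becomes $\tfrac{\deltal}{N}\sum_i\nabla_{\widehat y}\ell^{\top}M_{k+1}^{x_i}(\cdots)$.

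Next I would split the scalar coefficient by the discrete product rule, $h_{k-1,n}\dot\sigma_{k,m}-h_{k,n}\dot\sigma_{k+1,m}=h_{k-1,n}(\dot\sigma_{k,m}-\dot\sigma_{k+1,m})+(h_{k-1,n}-h_{k,n})\dot\sigma_{k+1,m}$. The first piece is exactly the stated main term. Collecting the second piece with the $O(\deltal)$ correction from the Jacobian substitution, and using the hidden-state recursion $h_{k,n}-h_{k-1,n}=\deltal\,\sigma\big((\parms_k h_{k-1})_n\big)$ to supply the missing power of $\deltal$, defines the remainder, which normalizes to
\[
\xi_{k,mn}^{x} = -\,\sigma\big((\parms_k h_{k-1})_n\big)\,\dot\sigma_{k+1,m}\,e_m + h_{k-1,n}\,\dot\sigma_{k,m}\,\diag(\dot\sigma_{k+1})\,\parms_{k+1}e_m,
\]
with the prefactor $\deltal^2$ in the lemma statement accounting for the two explicit powers of $\deltal$ so generated.

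To reach the claimed estimate I would expose the weight increment by adding and subtracting $h_{k-1,n}\dot\sigma_{k,m}\diag(\dot\sigma_{k+1})\parms_k e_m$, isolating $h_{k-1,n}\dot\sigma_{k,m}\diag(\dot\sigma_{k+1})(\parms_{k+1}-\parms_k)e_m$, whose norm is controlled by $\abs{h_{k-1,n}}\,\norm{\parms_{k+1}-\parms_k}_F$ using $\abs{\dot\sigma}\le1$ and $\norm{(\parms_{k+1}-\parms_k)e_m}_2\le\norm{\parms_{k+1}-\parms_k}_F$; this yields the first target term. For what remains I would Taylor-expand $\sigma(z)=z+\tfrac12\sigma''(\zeta)z^2$ with $\sigma'(0)=1$, so that the quadratic remainder is bounded by $\tfrac12\abs{(\parms_k h_{k-1})_n}^2\le\tfrac12\norm{\parms_{k,n}}_2^2\norm{h_{k-1}}_2^2$, supplying the second target term, and two applications of $\norm{a+b}_2^2\le2\norm a_2^2+2\norm b_2^2$ assemble the estimate.

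The hard part will be this last step, namely showing that the remainder is genuinely of the smaller order $\norm{\parms_{k,n}}_2^2\norm{h_{k-1}}_2^2$ rather than the $\norm{\parms_{k,n}}_2\norm{h_{k-1}}_2$ one gets by bounding each summand of $\xi$ separately: the leading (linear) part of $\sigma\big((\parms_k h_{k-1})_n\big)$ must combine exactly with the $\parms_k$-part of the Jacobian-difference term, while the Lipschitz errors $\dot\sigma_{k,m}-\dot\sigma_{k+1,m}$ and $\diag(\dot\sigma_{k+1})-I_d$ are pushed into strictly higher order in $L$. This cancellation is delicate, and it is precisely where Assumption \ref{3_4_assumptions_1} \textit{(i)} (the bounds $\abs{\sigma''}\le1$, $\abs{\sigma'}\le1$, $\sigma'(0)=1$) and the forward/backward estimates of Lemma \ref{lem:3_4_bounded_forward} are used most carefully.
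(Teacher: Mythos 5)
Your derivation of the exact identity coincides with the paper's: both start from the gradient formula \eqref{app:eq:gradients}, use the recursion $M_{k}^{x}=M_{k+1}^{x}\bigl(I_d+\deltal\,\diag(\dot\sigma_{k+1,x})\parms_{k+1}\bigr)$ together with $h_{k,n}=h_{k-1,n}+\deltal\,\sigma\bigl((\parms_k h_{k-1})_n\bigr)$, and arrive at the same remainder
$\xi^{x}_{k,mn}=h_{k-1,n}\,\dot\sigma_{k,x,m}\,\diag(\dot\sigma_{k+1,x})\,\parms_{k+1}e_m-\sigma\bigl((\parms_k h_{k-1})_n\bigr)\,\dot\sigma_{k+1,x,m}\,e_m$. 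Up to that point the proposal is correct and essentially identical to the paper.

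The gap is in the norm estimate for $\xi$, and you have located it yourself without closing it. After you extract $h_{k-1,n}\dot\sigma_{k,m}\diag(\dot\sigma_{k+1})(\parms_{k+1}-\parms_k)e_m$ (controlled by $\abs{h_{k-1,n}}\norm{\parms_{k+1}-\parms_k}_F$) and the Taylor remainder $\sigma(z)-z$ with $z=(\parms_k h_{k-1})_n$ (controlled by $\norm{\parms_{k,n}}_2^2\norm{h_{k-1}}_2^2$), you are left with
$h_{k-1,n}\,\dot\sigma_{k,m}\,\diag(\dot\sigma_{k+1})\,\parms_{k}e_m-(\parms_k h_{k-1})_n\,\dot\sigma_{k+1,m}\,e_m$,
and you assert that its leading parts ``must combine exactly,'' deferring the verification as ``delicate.'' They do not combine as written: the first vector is supported on all $d$ coordinates and involves the $m$-th \emph{column} of $\parms_k$ (its $m'$-th component is $h_{k-1,n}\dot\sigma_{k,m}\dot\sigma_{k+1,m'}\parms_{k,m'm}$), whereas the second is supported on coordinate $m$ only and involves the $n$-th \emph{row} of $\parms_k$ contracted with $h_{k-1}$. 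For $d>1$ the off-diagonal components $h_{k-1,n}\dot\sigma_{k,m}\dot\sigma_{k+1,m'}\parms_{k,m'm}$, $m'\neq m$, have nothing to cancel against and are generically of order $\norm{\parms_k}_F$, i.e.\ $\OO(L^{-1/2})$, which is too large for either target term. So the proposed route does not yield the claimed bound, and the ``hard part'' you flag is precisely the missing content of the proof. (For comparison, the paper's own proof does not perform your intermediate decomposition: it asserts the inequality $\norm{\xi}_2^2\leq 2(h_{k-1,n})^2\norm{\parms_{k+1}-\parms_k}_F^2+2(\dot\sigma_{k+1,m})^2\bigl(\sigma(z)-z\bigr)^2$ in one step and then applies $\abs{\sigma(z)-z}\leq z^2$; your attempt to justify that step is where the argument breaks down, so the appeal to ``careful use of Assumption (i)'' cannot stand in for an actual proof.)
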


\begin{proof}
We use the gradient computation \eqref{app:eq:gradients} and the definition \eqref{app:eq:def_M} to get 
\begin{align*}
    \frac{\del J_L}{\del \parms_{k, mn}}\left( \parms^{(L)} \right) &=  \frac{\deltal}{N} \sum_{i=1}^N h_{k-1, n}^{x_i} \dot{\sigma}_{k, x_i, m} \nabla_{\widehat{y}}\, \ell\left(y_i, \widehat{y}(x_i, \parms^{(L)}) \right)^{\top} M_{k}^{x_i} e_m, \\
    \frac{\del J_L}{\del \parms_{k+1, mn}}\left( \parms^{(L)} \right) &= \frac{\deltal}{N} \sum_{i=1}^N \left( h_{k-1, n}^{x_i} + \deltal \sigma_{k, x_i, n} \right) \dot{\sigma}_{k+1, x_i, m} \nabla_{\widehat{y}} \, \ell\left(y_i, \widehat{y}(x_i, \parms^{(L)}) \right)^{\top} M_{k+1}^{x_i} e_m.
\end{align*}
We use the identity $M_{k}^{x_i} = M_{k+1}^{x_i} \Big( I_d + \deltal \diag\left(\dot{\sigma}_{k+1, x_i}\right)\parms_{k+1} \Big) $ and we take the difference of the two equations above to get
\begin{align*}
    \frac{\del J_L}{\del \parms_{k, mn}} - \frac{\del J_L}{\del \parms_{k+1, mn}} &= \frac{\deltal}{N} \sum_{i=1}^N h_{k-1, n}^{x_i} \left( \dot{\sigma}_{k, x_i, m} - \dot{\sigma}_{k+1, x_i, m} \right) \nabla_{\widehat{y}} \, \ell\left(y_i, \widehat{y}(x_i, \parms) \right)^{\top} M_{k+1}^{x_i} e_m  \\
    &+ \frac{\deltal^2}{N} \sum_{i=1}^N \nabla_{\widehat{y}} \, \ell\left(y_i, \widehat{y}(x_i, \parms) \right)^{\top} M_{k+1}^{x_i} \xi^{x_i, \, (L)}_{k, mn},
\end{align*}
where 
\begin{align*}
\norm{\xi^{x, \, (L)}_{k, mn}}_2^2 &\leq 2\left( h^x_{k-1,n} \right)^2 \norm{\parms_{k+1} - \parms_k}_F^2 + 2 \left( \dot{\sigma}_{k+1,x,m} \right)^2 \left( \sigma\left(\parms_k h^x_{k-1}\right)_n - \left(\parms_k h^x_{k-1}\right)_n \right)^2 \\
&\leq 2\left( h^x_{k-1,n} \right)^2 \norm{\parms_{k+1} - \parms_k}_F^2 + 2 \norm{\parms_{k,n}}_2^4 \norm{h_{k-1}^x}_2^4.
\end{align*}
We use the fact that $\abs{ \sigma(z) - z } \leq z^2$ by Assumption \ref{3_4_assumptions_1} \textit{(i)}.
\end{proof}

\section{Weight norms and loss function under gradient descent} \label{app:behaviour-norms}
This section contains the proof of Lemmas \ref{lemma-f} and \ref{lemma-g}. \\

\paragraph{Proof of Lemma \ref{lemma-f}}
Fix $L\in\N^*$. In the proof, we omit the explicit dependence in $L$. 
%Define $f^{(L)}_{k,m}\left( \parms^{(L)} \right) \coloneqq \frac{1}{2} L \sum_{n=1}^d \left( \parms^{(L)}_{k, mn} \right)^2$ so that $f^{(L)}_{k} = \sum_{m=1}^d f^{(L)}_{k,m}$. 
We use the identity $\frac{1}{2}(A^2 - B^2) = B(A-B) + \frac{1}{2}(A-B)^2$ and the gradient descent update rule to first compute \begin{equation} \label{eq:diff_ssq}
    f_{k,m}(\widetilde{\parms}) - f_{k,m}(\parms) = \underbrace{-L\lr_L \sum_{n=1}^d \parms_{k, mn} \frac{\del J_L}{\del \parms_{k, mn}} \left(\parms \right)}_{\eqqcolon \, S_{1}(\parms)} + \underbrace{\frac{L\lr_L^2}{2} \sum_{n=1}^d \left( \frac{\del J_L}{\del \parms_{k, mn}} \right)^2 \left( \parms \right)}_{\eqqcolon \, S_{2}(\parms)}.
\end{equation}
Recall that the gradient of the loss $\ell$ with respect to the parameter $\parms_{k, mn}$ at sample $\left(x, y\right)$ is given by \eqref{app:eq:gradients}, so that we can compute
\begin{equation*}
    \frac{\del J_L}{\del \parms_{k, mn}}\left( \parms \right) = \frac{\deltal}{N} \sum_{i=1}^N h_{k-1, n}^{x_i}(\parms) \, \dot{\sigma}_{k, x_i, m}(\parms) \, \nabla_{\widehat{y}} \, \ell\left(y_i, \widehat{y}\left(x_i, \parms \right) \right)^{\top} M_{k}^{x_i}(\parms) \, e_m.
\end{equation*}
Recall also from \eqref{def_G} that
\begin{equation*}
G^{x,y}_{k}(\parms) \cdot e_m = \frac{\del \ell(y, \cdot)}{\del h_k}\left( \widehat{y}(x, \parms) \right) e_m = \nabla_{\widehat{y}} \, \ell\left(y, \widehat{y}(x, \parms) \right)^{\top} M_{k}^{x}(\parms) \, e_m.
\end{equation*}
We focus on the square of first order term $S_{1,m}(\parms)$ defined above. We have 
\begin{align*}
    S_{1}(\parms)^2 &= \frac{L^2 \deltal^2 \lr_L^2}{N^2} \left( \sum_{n=1}^d \parms_{k, mn} \sum_{i=1}^N h_{k-1, n}^{x_i}(\parms) \, \dot{\sigma}_{k, x_i, m}(\parms) \, G^{\, x_i, y_i}_{k}(\parms) \cdot e_m \right)^2  \\
    &\leq L^2 \deltal^2 \lr_L^2 \sum_{n=1}^d \parms_{k, mn}^2 \sum_{n=1}^d \left( \frac{1}{N} \sum_{i=1}^N h_{k-1, n}^{x_i}(\parms) \, \dot{\sigma}_{k, x_i, m}(\parms) \, G^{\, x_i, y_i}_{k}(\parms) \cdot e_m \right)^2 \\
    &\leq 2L \deltal^2 \lr_L^2 f_{k,m}(\parms) \sum_{n=1}^d \frac{1}{N} \sum_{i=1}^N \left( h_{k-1, n}^{x_i}(\parms) \, \dot{\sigma}_{k, x_i, m}(\parms) \, G^{\, x_i, y_i}_{k}(\parms) \cdot e_m \right)^2 \\
    &\leq 2 \lr_L^2 f_{k,m}(\parms) \frac{1}{N} \sum_{i=1}^N \norm{h_{k-1}^{x_i}(\parms)}_2^2 \norm{ G^{\, x_i, y_i}_{k}(\parms) }_{\infty}^2.
\end{align*}
We used twice the Cauchy-Schwarz inequality and Assumption \ref{3_4_assumptions_1} \textit{(i)-(ii)}. Define now \begin{equation*}
r_{k}(\alpha) \coloneqq \lr_L \left( \frac{1}{N} \sum_{i=1}^N \norm{h_{k-1}^{x_i}(\alpha) }_2^2 \norm{ G^{\, x_i, y_i}_{k}(\alpha) }_{\infty}^2 \right)^{1/2}.
\end{equation*}
By similar estimations, we also upper bound the second-order term: $S_2(\alpha) \leq \frac{1}{2} r_{k}(\parms)^2$. Equation \eqref{eq:diff_ssq} then yields to 
\begin{align*}
    f_{k, m}(\widetilde{\parms}) &\leq f_{k,m}(\parms) + \abs{S_1(\parms)} + S_2(\parms) \\
    &\leq f_{k,m}(\parms) + \sqrt{2} r_{k}(\parms) f_{k,m}(\parms)^{1/2} + \frac{1}{2} r_{k}(\parms)^2 = \left(f_{k,m}(\parms)^{1/2} + \frac{1}{\sqrt{2}} r_{k}(\parms)\right)^2.
\end{align*}
\qedwhite

\paragraph{Proof of Lemma \ref{lemma-g}}

Fix $L\in\N^*$. In the proof, we omit the explicit dependence in $L$. Define $g_{k,m}\left( \parms \right) \coloneqq \frac{1}{2} L^2 \norm{ \parms_{k+1, m} - \parms_{k, m}}_2^2$ so that $g_{k} = \sum_{m=1}^d g_{k,m}$. We also omit the dependence in $\parms$ when it is clear. We use the identity $\frac{1}{2}(A^2 - B^2) = B(A-B) + \frac{1}{2}(A-B)^2$ and the gradient descent update rule to first compute 
\begin{align*}
    g_{k, m}(\widetilde{\parms}) - g_{k, m}(\parms) &= \underbrace{L^2 \lr_L \sum_{n=1}^d \left( \parms_{k+1, mn}  - \parms_{k, mn}  \right) \left( \frac{\del J_L}{\del \parms_{k, mn}} - \frac{\del J_L}{\del \parms_{k+1, mn}} \right)\left(\parms  \right)}_{ \eqqcolon \, S_{1,m}(\parms)} \\
    &\quad + \underbrace{\frac{L^2\lr_L^2}{2} \sum_{n=1}^d \left( \frac{\del J_L}{\del \parms_{k, mn}} - \frac{\del J_L}{\del \parms_{k+1, mn}} \right)^2 \left( \parms  \right)}_{\eqqcolon \, S_{2,m}(\parms)}.
\end{align*}
Next, we use Lemma \ref{lem:neighbour-grad} to estimate the difference of gradients with respect to weights in neighbouring layers. We also use the fact that $L \geq 5 \Ca$ and $\norm{\parms^{(L)}}_{F, \infty} \leq \Ca L^{-1/2}$ to apply Lemma \ref{lem:3_4_bounded_forward}. Recall the definition of $G$ in \eqref{def_G}. 
\begin{align*}
\frac{\del J_L}{\del \parms_{k, mn}} - \frac{\del J_L}{\del \parms_{k+1, mn}} &= \frac{\deltal}{N} \sum_{i=1}^N h_{k-1, n}^{x_i} \left( \dot{\sigma}_{k, x_i, m} - \dot{\sigma}_{k+1, x_i, m} \right) G^{x_i, y_i}_{k+1} \cdot e_m   \\
    &+ \frac{\deltal^2}{N} \sum_{i=1}^N \nabla_{\widehat{y}} \, \ell\left(y_i, \widehat{y}(x_i, \parms) \right)^{\top} M_{k+1}^{x_i} \xi^{x_i, \, (L)}_{k, mn},
\end{align*}
where $\xi^{x, \, (L)}_{k, mn} \in \R^d$ satisfies
\begin{equation} \label{eq:xhi}
\sum_{n=1}^d  \norm{ \xi^{x, \, (L)}_{k, mn} }_2^2 \leq 4 \Ca^2 e^{2.2\Ca} L^{-1}.
\end{equation}
We focus on the first order term $S_{1,m}(\parms)$ defined above. We have 
\begin{align*}
    S_{1,m}(\parms) &= \frac{\lr_L  \deltal L^{2}}{N} \sum_{i=1}^N G^{x_i, y_i}_{k+1} \cdot e_m \left( \dot{\sigma}_{k, x_i, m} - \dot{\sigma}_{k+1, x_i, m} \right) \sum_{n=1}^d \left( \parms_{k+1, mn} - \parms_{k, mn} \right) h_{k-1, n}^{x_i} \\ 
    &+ \frac{\lr_L \deltal^2 L^2}{N} \sum_{i=1}^N \nabla_{\widehat{y}} \, \ell\left(y_i, \widehat{y}(x_i, \parms) \right)^{\top} M_{k+1}^{x_i} \sum_{n=1}^d \left( \parms_{k+1, mn} - \parms_{k, mn} \right) \xi^{x_i}_{k, mn}. 
\end{align*}
Now, as $\sigma'$ is $1-$Lipschitz, we can write
\begin{align*}
    \abs{S_{1,m}(\parms)} &\leq \frac{\lr_L  \deltal L^{2}}{N} \sum_{i=1}^N \norm{G^{x_i, y_i}_{k+1,}}_{\infty} \abs{ \left(\parms_{k+1} - \parms_{k} \right) h_{k-1}^{x_i} }_m \abs{ \parms_{k} h_{k-1} - \parms_{k+1} h_k }_m  \\
    &+ \lr_L  \deltal^2 L^2 \left[ \frac{2}{N} \sum_{i=1}^N \ell\left(y_i, \widehat{y}(x_i, \parms) \right) \norm{M_{k+1}^{x_i}}_2^2 \norm{ \parms_{k+1, m} - \parms_{k, m}}_2^2 \sum_{n=1}^d \norm{ \xi^{x_i}_{k, mn} }_2^2   \right]^{1/2}.
\end{align*}
We now use the fact that $L \geq 5 \Ca$ and $\norm{\parms^{(L)}}_{F, \infty} \leq \Ca L^{-1/2}$ to apply Lemma \ref{lem:3_4_bounded_forward} on the second term and deduce that
\begin{align*}
    \abs{S_{1,m}(\parms)} &\leq \frac{\lr_L  \deltal L^{2}}{N} \sum_{i=1}^N \norm{G^{x_i, y_i}_{k+1}}_{\infty} \abs{ \left(\parms_{k+1} - \parms_{k} \right) h_{k-1}^{x_i} }_m^2  \\
    &+ \frac{\lr_L  \deltal L^{2}}{N} \sum_{i=1}^N \norm{G^{x_i, y_i}_{k+1}}_{\infty} \abs{ \left(\parms_{k+1} - \parms_{k} \right) h_{k-1}^{x_i} }_m \abs{ \parms_{k+1} \left( h_{k}^{x_i} - h_{k-1}^{x_i} \right) }_m \\
    &+ 2 e^{\Ca} \lr_L \left( \sum_{n=1}^d \norm{ \xi^{x_i}_{k, mn} }_2^2   \right)^{1/2} g_{k,m}(\parms)^{1/2} J_L(\parms)^{1/2}.
\end{align*}
We apply Cauchy-Schwarz to the first and second term and equation \eqref{eq:xhi} to the third term to get 
\begin{align*}
    \abs{S_{1,m}(\parms)} &\leq 2 \lr_L  L^{-1/2} \frac{1}{N}  \sum_{i=1}^N \norm{ h_{k-1}^{x_i} }_2^2 \norm{G^{x_i, y_i}_{k+1}}_{\infty}  g_{k,m}(\parms)  \\
    &+ \lr_L  \deltal L^{2} \left[ \frac{1}{N} \sum_{i=1}^N \norm{G^{x_i, y_i}_{k+1}}_{\infty}^2 \norm{ \parms_{k+1, m} - \parms_{k, m} }_2^2 \norm{h_{k-1}^{x_i}}_2^2 \norm{\parms_{k+1, m}}_2^2 \deltal^2 \norm{\sigma_{k, x_i}}_2^2 \right]^{1/2} \\
    &+ 4 \Ca e^{2.1\Ca} \lr_L  L^{-1/2} g_{k,m}(\parms)^{1/2} J_L(\parms)^{1/2}.
\end{align*}
% Finally, a symmetry argument ensures that $g_{k,m} = \Theta( g_{k,m'} )$ for all $m'$, so we conclude
We now use Lemma \ref{lem:3_4_bounded_forward} and the identity $G^{x_i, y_i}_{k+1} = M_{k+1}^{x_i} \left( \widehat{y}(x_i, \parms) - y_i \right)$ to estimate the second term in the RHS:
\begin{align*}
    \abs{S_{1,m}(\parms)} &\leq 2 \lr_L  L^{-1/2} \frac{1}{N}  \sum_{i=1}^N \norm{ h_{k-1}^{x_i} }_2^2 \norm{G^{x_i, y_i}_{k+1}}_{\infty}  g_{k,m}(\parms) \\ 
    &+ 2 \Ca^2 e^{3.2\Ca} \lr_L L^{-1} g_{k,m}(\parms)^{1/2} J_L(\parms)^{1/2} + 4 \Ca e^{2.1\Ca} \lr_L  L^{-1/2} g_{k,m}(\parms)^{1/2} J_L(\parms)^{1/2}.
    %&+ 2 e^{2.1\Ca} \lr_L  L^{-1}  \left( 2 g_k(\alpha)^{1/2} + \sqrt{2}\Ca^2 e^{1.1\Ca} d^{1/2} \right) g_{k,m}(\parms)^{1/2} J_L(\parms)^{1/2}.
\end{align*}
Thus,
\begin{align*}
    \abs{S_{1,m}(\parms)} &\leq 2 \lr_L  L^{-1/2} \frac{1}{N}  \sum_{i=1}^N \norm{ h_{k-1}^{x_i} }_2^2 \norm{G^{x_i, y_i}_{k+1}}_{\infty}  g_{k,m}(\parms)  \\
    &+ 2 \Ca e^{2.1\Ca} \lr_L \left( \Ca e^{1.1\Ca} L^{-1} + 2 L^{-1/2} \right) g_{k,m}(\parms)^{1/2}  J_L(\parms)^{1/2}.
\end{align*}
Define 
\begin{align*}
    r_{k}(\parms) &\coloneqq \lr_L  \frac{1}{N} \sum_{i=1}^N \norm{h_{k-1}^{x_i}}_2^2 \norm{G^{x_i, y_i}_{k+1}}_{\infty} \\
    \EE_{k,m}(L, d, \parms) &\coloneqq \Ca e^{2.1\Ca} \lr_L \left( \Ca e^{1.1\Ca} L^{-1} + 2 L^{-1/2} \right) g_{k,m}(\parms)^{1/2}  J_L(\parms)^{1/2}. 
\end{align*}
We then have $\abs{S_{1,m}(\parms)} \leq 2L^{-1/2} g_{k,m}(\parms) r_k(\parms) + 2 \EE_{k,m}(L, d, \parms)$. We use similar techniques to derive the upper bound $S_{2,m}(\parms) \leq L^{-1} g_{k,m}(\parms) r_{k}(\parms)^2 + \OO( \EE_{k,m}(L, d, \parms))$. Hence, we deduce the following recurrence relation. 
\begin{equation*}
    g_{k,m}(\widetilde{\parms}) \leq  g_{k,m}(\parms) + \abs{S_{1,m}(\parms)} + S_{2,m}(\parms) \leq g_{k,m}(\parms) \left( 1 + L^{-1/2} r_{k}(\parms) \right)^2 + \OO\left(\EE_{k,m}(L, d, \parms) \right).
\end{equation*}
Summing over $m=1, \ldots, d$ and using Cauchy-Schwarz on the $\EE_{k,m}$ terms, we get 
\begin{equation*}
    g_{k}(\widetilde{\parms})  \leq g_{k}(\parms) \left( 1 + L^{-1/2} r_{k}(\parms) \right)^2 + \OO\left( \EE_{k}(L, d, \parms) \right),
\end{equation*}
where 
\begin{equation} \label{app:eq:error-g}
\EE_{k}(L, d, \parms) \coloneqq \Ca e^{2.1\Ca} \lr_L \left( \Ca e^{1.1\Ca} L^{-1} + 2 L^{-1/2} \right) g_{k}(\parms)^{1/2}  J_L(\parms)^{1/2}.
\end{equation}
\qedwhite

\section{Supporting lemma for Theorem~\ref{thm:global-convergence}}

\begin{lemma} \label{bound-loss-function}
Let $\parms^{(L)}(0)\in\R^{L\times d \times d}$ be any weight initialization. Define recursively $\parms^{(L)}(t+1) = \parms^{(L)}(t) - \lr_L(t) \nabla_{\parms} J_L \hspace{-2pt} \left( \parms^{(L)}(t) \right)$ for $t = 0, \ldots, T-1$. Assume that for all $t = 0, \ldots, T-1$, there exist $\Ca(t), \underline{c}(t), \overbar{c}(t) > 0$ such that 
\vspace{3pt}
\begin{itemize}
    \setlength\itemsep{0.3em}
    \item[(i)] $L \geq 5\max_{t<T}\Ca(t)$,
    \item[(ii)] $\norm{\parms^{(L)}(t)}_{F, \infty} \leq \Ca(t) L^{-1/2}$, and
    \item[(iii)] $\norm{\nabla_{\parms^{(L)}} J_L\hspace{-2pt}\left(\parms^{(L)}(t) \right)}_F^2 \geq \underline{c}(t) J_L\hspace{-2pt}\left(\parms^{(L)}(t)\right) - \overbar{c}(t) L^{-1}$.
\end{itemize}
\vspace{6pt}
Then, under Assumption \ref{3_4_assumptions_1} \textit{(i)}--\textit{(ii)}, if the learning rates satisfy: \[
\lr_L(t) < \min \left( \frac{1}{2} \Ca(t) e^{-3.2 \Ca(t)}, \, \frac{1}{10} \underline{c}(t) d^{-1} e^{-8.5\Ca(t)} \right), 
\] 
we have, for each $t=0, \ldots, T$:
\begin{equation} \label{eq:decrease-loss}
J_L\hspace{-2pt}\left(\parms^{(L)}(t)\right) \leq \exp\left( - \frac{1}{2} \sum_{t'=0}^{t-1} \underline{c}(t') \lr_L(t') \right) J_L\hspace{-2pt}\left( \parms^{(L)}(0) \right) + L^{-1} \sum_{t'=0}^{t-1} \overbar{c}(t') \lr_L(t').
\end{equation}
\end{lemma}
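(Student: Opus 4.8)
The plan is to prove a single-step descent inequality and then unroll it into the stated bound. Concretely, I aim to show that for each $t = 0, \ldots, T-1$,
\begin{equation*}
J_L\bigl(\parms^{(L)}(t+1)\bigr) \leq \Bigl(1 - \tfrac12 \underline{c}(t)\lr_L(t)\Bigr) J_L\bigl(\parms^{(L)}(t)\bigr) + \tfrac12 \overbar{c}(t)\lr_L(t) L^{-1},
\end{equation*}
after which a telescoping argument yields \eqref{eq:decrease-loss}.

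First I would establish the one-step decrease of the loss by a second-order Taylor expansion of $J_L$ along the segment $\parms_\theta \coloneqq \parms^{(L)}(t) - \theta\lr_L(t)\nabla_\parms J_L(\parms^{(L)}(t))$, $\theta\in[0,1]$, giving
\begin{equation*}
J_L\bigl(\parms^{(L)}(t+1)\bigr) \leq J_L\bigl(\parms^{(L)}(t)\bigr) - \lr_L(t)\norm{\nabla_\parms J_L(\parms^{(L)}(t))}_F^2 + \tfrac12 \lr_L(t)^2 \Bigl(\sup_{\theta\in[0,1]} \norm{\nabla^2_\parms J_L(\parms_\theta)}_2\Bigr) \norm{\nabla_\parms J_L(\parms^{(L)}(t))}_F^2.
\end{equation*}
To control the Hessian term I need the segment to remain inside the region where Lemma \ref{app:hessian} applies. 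Assumption \textit{(ii)} bounds the weight norm only at the iterates, so the midpoints $\parms_\theta$ must be estimated separately: combining the gradient upper bound of Lemma \ref{lemma:gradient-upper} with the loss bound of Corollary \ref{app:bound_loss_fct}, the first learning-rate condition $\lr_L(t) < \tfrac12\Ca(t)e^{-3.2\Ca(t)}$ keeps $\norm{\parms_\theta}_{F,\infty}$ within a fixed multiple of $\Ca(t)L^{-1/2}$, so that $\sup_\theta\norm{\nabla^2_\parms J_L(\parms_\theta)}_2$ is bounded by a constant of the form $c\,d\,e^{c'\Ca(t)}$. The second condition $\lr_L(t) < \tfrac{1}{10}\underline{c}(t)d^{-1}e^{-8.5\Ca(t)}$ then forces $\tfrac12\lr_L(t)\sup_\theta\norm{\nabla^2_\parms J_L(\parms_\theta)}_2 \leq \tfrac12$, so the quadratic remainder absorbs at most half of the linear term and
\begin{equation*}
J_L\bigl(\parms^{(L)}(t+1)\bigr) \leq J_L\bigl(\parms^{(L)}(t)\bigr) - \tfrac12\lr_L(t)\norm{\nabla_\parms J_L(\parms^{(L)}(t))}_F^2.
\end{equation*}

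Next I would insert the Polyak--Łojasiewicz-type lower bound \textit{(iii)}, namely $\norm{\nabla_\parms J_L(\parms^{(L)}(t))}_F^2 \geq \underline{c}(t)J_L(\parms^{(L)}(t)) - \overbar{c}(t)L^{-1}$, to obtain the one-step recursion displayed above. Iterating it from $0$ to $t$ gives
\begin{equation*}
J_L\bigl(\parms^{(L)}(t)\bigr) \leq \prod_{t'=0}^{t-1}\Bigl(1 - \tfrac12\underline{c}(t')\lr_L(t')\Bigr) J_L\bigl(\parms^{(L)}(0)\bigr) + \tfrac12 L^{-1}\sum_{t'=0}^{t-1}\overbar{c}(t')\lr_L(t')\prod_{s=t'+1}^{t-1}\Bigl(1 - \tfrac12\underline{c}(s)\lr_L(s)\Bigr).
\end{equation*}
The learning-rate conditions guarantee that each factor lies in $(0,1]$, so bounding the factors by $1$ in the additive sum and applying $1 - x \leq e^{-x}$ to the leading product produces exactly \eqref{eq:decrease-loss}.

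The main obstacle is the rigorous control of the Hessian along the whole segment rather than merely at the endpoints $\parms^{(L)}(t)$ and $\parms^{(L)}(t+1)$: passing through Lemma \ref{app:hessian} requires a uniform estimate on $\norm{\parms_\theta}_{F,\infty}$ for every $\theta\in[0,1]$, which is precisely what the first learning-rate condition is engineered to supply, while the second condition extracts the clean contraction factor $\tfrac12$. Matching the Hessian constant produced in this step against the exponents $e^{-8.5\Ca(t)}$ in the hypotheses, and checking that the product manipulation in the final display preserves positivity of every factor, is the delicate bookkeeping that the remainder of the proof must carry out.
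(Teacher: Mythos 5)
Your proposal follows essentially the same route as the paper's proof: a second-order descent step with the Hessian controlled via Lemma \ref{app:hessian} (using Lemma \ref{lemma:gradient-upper}, Corollary \ref{app:bound_loss_fct} and the first learning-rate condition to keep the update segment in the admissible region), insertion of the lower bound \textit{(iii)}, and a telescoping/Gr\"onwall unrolling with $1-x\le e^{-x}$. The only divergence is a bookkeeping one: you absorb the quadratic remainder by requiring $\lr_L(t)\,H_\infty(t)\le 1$, which the hypothesis $\lr_L(t)<\tfrac{1}{10}\underline{c}(t)d^{-1}e^{-8.5\Ca(t)}$ only yields when $\underline{c}(t)$ is not too large, whereas the paper instead bounds the quadratic term by $5d\,\lr_L(t)^2e^{8.5\Ca(t)}J_L$ via the gradient upper bound and compares it directly with $\tfrac12\underline{c}(t)\lr_L(t)J_L$, for which the stated learning-rate condition is exactly calibrated.
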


\begin{proof}
Fix $L \geq 5\max_{t<T}\Ca(t)$. We omit the explicit dependence in $L$. Fix $t\in\left[0, T\right)$. We first view $\parms(t), \, \nabla_{\parms} J_L(\parms(t)) \in \R^{L\times d \times d}$ as vectors in the Euclidean space $\R^{Ld^2}$, and we get by hypothesis and by Lemma \ref{lemma:gradient-upper} that
\begin{align*}
    \norm{\vecc(\parms(t))}_2 = \norm{\parms(t)}_F &= \left( \sum_{k=1}^L \norm{ \parms_k(t) }_F^2 \right)^{1/2} \leq \Ca(t), \\
    \underline{c}(t) J_L(\parms(t)) -  \overbar{c}(t) L^{-1}  &= \norm{\nabla_{\parms} J_L(\parms(t))}_F^2 \leq 2 e^{4.2 \Ca(t)} J_L(\parms(t)). 
\end{align*}

We want to use Lemma \ref{hessian-lemma} with $p=Ld^2$, $R=\Ca$,  
$x_0 = \parms(t)$ and $x = \parms(t) - \eta_L(t) \nabla_\parms J_L(\parms(t))$. For this, we need to check two assumptions. The first is an upper bound on the spectral norm of the Hessian of $J_L$, which we get from Lemma \ref{app:hessian}.
\begin{equation*}
    H_{\infty}(t) = \sup_{\norm{\parms'}_F \leq \, \Ca(t)} \norm{\nabla^2 J_L(\parms')}_2 \leq 5d  e^{4.3 \Ca(t)}. 
\end{equation*}
The second is an upper bound on the norm of $x-x_0 = - \lr_L(t) \nabla_\parms J_L(\parms(t))$, which we get from Lemma \ref{lemma:gradient-upper}.
\begin{align*}
    \lr_L(t) 
    \norm{\nabla_\parms J_L(\parms(t))}_2 &\leq \sqrt{2} \lr_L(t)  e^{2.1 \Ca(t) } J_L(\parms(t))^{1/2} \\
    &\leq \sqrt{2} \lr_L(t) e^{2.1 \Ca(t)} \left(1 + e^{2.2 \Ca(t)} \right)^{1/2} \\ 
    &\leq 2 \lr_L(t) e^{3.2 \Ca(t)} 
    \leq \Ca(t),
\end{align*}
where the second inequality comes from Corollary \ref{app:bound_loss_fct} and the third inequality from the fact that $(1+z)^{1/2} \leq (2z)^{1/2}$ for $z\geq 1$. Hence, we can apply Lemma \ref{hessian-lemma} and deduce that
\begin{align*}
J_L\left(\parms(t+1) \right) &= J_L \Big( \parms(t) - \lr_L(t) 
\nabla_\parms J_L(\parms(t)) \Big) - J_L(\parms(t)) \\
&\leq J_L \left( \parms(t) \right) - \lr_L(t) 
\norm{\nabla_\parms J_L(\parms(t))}_F^2 + \frac{1}{2} H_{\infty}(t) \lr_L(t)^2
\norm{\nabla_\parms J_L(\parms(t))}_2^2 \\
&\leq \left( 1 - \underline{c}(t) \lr_L(t) + 5d \lr_L(t)^2 
e^{8.5\Ca(t)} \right) J_L\left( \parms(t) \right) + \overbar{c}(t) \lr_L(t) L^{-1}.
\end{align*}
To finish the proof, we apply Lemma \ref{lemma:gronwall} \textit{(i)} with \[
u_L(t) \coloneqq \underline{c}(t) \lr_L(t) - 5d \lr_L(t)^2 e^{8.5\Ca(t)} \geq \frac{1}{2}  \underline{c}(t) \lr_L(t)  > 0, 
\]
and the fact that $1-x\leq e^{-x}$. Hence, 
\begin{align*}
J_L(\parms(T)) &\leq \exp\left( - \sum_{t=0}^{T-1} u_L(t) \right) J_L(\parms(0)) +  L^{-1} \sum_{t=0}^{T-1} \overbar{c}(t) \lr_L(t) \\
&\leq \exp\left( - \frac{1}{2} \sum_{t=0}^{T-1} \underline{c}(t) \lr_L(t) \right) J_L(\parms(0)) + L^{-1} \sum_{t=0}^{T-1} \overbar{c}(t)  \lr_L(t).
\end{align*}
\end{proof}

\section{Auxiliary results}

\begin{lemma} \label{app:lemma_spectral_frobenius}
    For any $A \in \R^{m \times n}$ and $B \in \R^{n \times p}$, we have \[
        \norm{AB}_F \leq \norm{A}_2 \norm{B}_F.
    \]
\end{lemma}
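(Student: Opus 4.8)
The plan is to exploit the column decomposition of the Frobenius norm together with the variational characterization of $\norm{A}_2$ as an operator norm. First I would write $B$ in terms of its columns $B = \left[ b_1 \mid \cdots \mid b_p \right]$ with $b_j \in \R^n$, and recall that the Frobenius norm of a matrix equals the Euclidean norm of its vectorization, so that it splits additively over columns as $\norm{B}_F^2 = \sum_{j=1}^p \norm{b_j}_2^2$. Since matrix multiplication acts column-wise, the matrix $AB$ has columns $Ab_1, \ldots, Ab_p$, and hence likewise $\norm{AB}_F^2 = \sum_{j=1}^p \norm{Ab_j}_2^2$.

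The key step is then to apply the definition of the spectral norm $\norm{A}_2 = \sup_{v \neq 0} \norm{Av}_2 / \norm{v}_2$ to each column individually, which gives the elementary bound $\norm{Ab_j}_2 \leq \norm{A}_2 \norm{b_j}_2$ for every $j$. Summing the squares of these inequalities yields
\[
\norm{AB}_F^2 = \sum_{j=1}^p \norm{Ab_j}_2^2 \leq \norm{A}_2^2 \sum_{j=1}^p \norm{b_j}_2^2 = \norm{A}_2^2 \, \norm{B}_F^2,
\]
and taking square roots delivers the claim.

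There is essentially no obstacle here: the only point requiring attention is that $\norm{A}_2$ must be read as the spectral (operator) norm, so that the column-wise inequality $\norm{Av}_2 \leq \norm{A}_2 \norm{v}_2$ holds by its very definition; everything else reduces to the additivity of the squared Frobenius norm over columns. I would note in passing that a symmetric argument using rows (and $\norm{A}_2 = \norm{A^\top}_2$) gives the companion bound $\norm{AB}_F \leq \norm{A}_F \norm{B}_2$, although only the stated form is needed in the applications within Lemma~\ref{lem:3_4_bounded_forward}.
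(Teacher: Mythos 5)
Your proof is correct and follows exactly the same route as the paper's: decompose $B$ into columns, use $\norm{AB}_F^2 = \sum_j \norm{Ab_j}_2^2$, and bound each term via the operator-norm inequality $\norm{Ab_j}_2 \leq \norm{A}_2 \norm{b_j}_2$. No gaps.
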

\begin{proof}
    Let $B = \left[b_1, \ldots, b_p \right]$ the columns of $B$. Then $\norm{B}_F^2 = \sum_{i=1}^p \norm{b_i}_2^2$. We use the fact that the spectral norm is compatible with the Euclidian norm to deduce \[
    \norm{AB}_F^2 = \sum_{i=1}^p \norm{Ab_i}_2^2 \leq \sum_{i=1}^p \norm{A}^2_2 \norm{b_i}^2_2 = \norm{A}^2_2 \norm{B}_F^2.
    \]
\end{proof}

\begin{lemma} \label{app:lemma_product_matrix}
Let $x\in\R^d$ and $\left\{ A_k : k=1, \ldots, L \right\} \subset \R^{d\times d}$ such that $\max_k \norm{A_k}_2 < 1$. Then \[ 
\norm{ \left[ \prod_{k=1}^L (I_d + A_k) \right] x \, }_2 \geq \norm{x}_2 \prod_{k=1}^L (1 - \norm{A_k}_2 ).
\]
\end{lemma}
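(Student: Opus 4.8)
The plan is to prove the bound by peeling off the factors one at a time (equivalently, by induction on $L$), relying on just two elementary facts: the reverse triangle inequality and the compatibility of the spectral norm with the Euclidean norm, $\norm{Av}_2 \leq \norm{A}_2 \norm{v}_2$.

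First I would establish the one-step estimate. For any vector $v\in\R^d$ and any matrix $A\in\R^{d\times d}$ with $\norm{A}_2 < 1$, the reverse triangle inequality gives $\norm{(I_d + A)v}_2 \geq \norm{v}_2 - \norm{Av}_2$, and then compatibility yields $\norm{Av}_2 \leq \norm{A}_2\norm{v}_2$, so that $\norm{(I_d+A)v}_2 \geq (1 - \norm{A}_2)\norm{v}_2$. This is the entire content of the lemma in the case $L=1$.

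Next I would iterate this estimate. Writing the product applied to $x$ as a sequence of single matrix-vector applications, set $v_0 = x$ and $v_j = (I_d + A_{k_j})v_{j-1}$ for $j=1,\ldots,L$, where $k_1,\ldots,k_L$ records the order of the factors in $\prod_{k=1}^L(I_d+A_k)$, so that $v_L = \left[\prod_{k=1}^L(I_d+A_k)\right]x$. Applying the one-step estimate at each $j$ gives $\norm{v_j}_2 \geq (1-\norm{A_{k_j}}_2)\norm{v_{j-1}}_2$. Composing these $L$ inequalities by multiplication produces $\norm{v_L}_2 \geq \norm{x}_2\prod_{k=1}^L(1-\norm{A_k}_2)$, which is exactly the claim.

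There is essentially no real obstacle here; the only point that genuinely uses the hypothesis $\max_k\norm{A_k}_2 < 1$ is that it forces every factor $1-\norm{A_{k_j}}_2$ to be strictly positive, which is what legitimizes chaining the per-step lower bounds by multiplication. Without positivity the individual estimates would still hold, but multiplying a lower bound by a possibly negative factor would reverse the inequality, so the telescoped product would no longer give a valid lower bound. Since all norms involved are nonnegative and each factor is positive, the composition is valid and the proof is complete.
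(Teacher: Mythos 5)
Your proof is correct, and its overall structure---iterating a per-factor lower bound over the product---matches the paper's. The difference is in how the one-step estimate is obtained. The paper derives it through the smallest singular value: it first shows $\norm{ABx}_2 \geq \sigma_{\min}(A)\norm{Bx}_2$ and then argues $\sigma_{\min}(I_d+A) \geq 1 - \norm{A}_2$ via a quadratic-form computation (one that is in fact slightly misstated, since $\sigma_{\min}(I_d+A)^2$ is the minimum of $v^{\top}(I_d+A)^{\top}(I_d+A)v$ over unit vectors, not of $v^{\top}(I_d+A)v$). You bypass the singular-value machinery entirely and get $\norm{(I_d+A)v}_2 \geq \norm{v}_2 - \norm{Av}_2 \geq (1-\norm{A}_2)\norm{v}_2$ directly from the reverse triangle inequality and submultiplicativity; this is more elementary, avoids the paper's small slip, and is arguably the cleaner route. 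Your remark that the hypothesis $\max_k \norm{A_k}_2 < 1$ is what makes each factor positive---and hence what legitimizes multiplying the chained lower bounds---is exactly the right observation and is also where the paper implicitly uses that hypothesis.
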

\begin{proof}
First observe that for $A, B \in \R^{d\times d}$ and $x\in\R^d$, we have $\norm{ABx}_2 \geq \sigma_{\min}(A) \norm{Bx}_2$, where $\sigma_{\min}(A)$ is the smallest singular value of $A$. This is easy to see, as $\sigma_{\min}(A)^2$ is the smallest eigenvalue of $A^{\top}A$, so \[
 \norm{ABx}_2^2 = (Bx)^{\top} A^{\top}A (Bx) \geq \sigma_{\min}(A)^2 \norm{Bx}_2^2.
\]
Observe also that for all $A\in\R^{d\times d}$ with $\norm{A}_2 < 1$, we have $\sigma_{\min}(I_d + A) \geq 1 - \norm{A}_2 > 0$. Indeed, there exists $v\in\R^d$ such that $\norm{v}_2 = 1$ and $v^{\top} (I_d + A) v = \sigma_{\min}(I_d + A)^2$. Hence, \[
\sigma_{\min}(I_d + A) = \left( 1 + v^{\top} A v \right)^{1/2} \geq \left( 1 - \norm{A}_2 \right)^{1/2} \geq 1 - \norm{A}_2.
\]
Combining these two facts, we deduce that 
\begin{equation*}
    \norm{ \left[ \prod_{k=1}^L (I_d + A_k) \right] x \, }_2 \geq  \norm{x}_2 \prod_{k=1}^L \sigma_{\min}(I_d + A_k)  \geq \norm{x}_2 \prod_{k=1}^L (1 - \norm{A_k}_2 ).
\end{equation*}
\end{proof}

\begin{lemma} \label{hessian-lemma}
    Let $f \in C^2(\R^p)$ satisfying $\sup_{\norm{x}_2 < R} \norm{\nabla^2 f(x)}_2 \leq H_{\infty}$ for some $H_{\infty}, R > 0$. Then, for all $x\in\R^p$ such that $\norm{x - x_0}_2 < R$,
    \begin{equation*}
        \Big\vert f(x) - f(x_0) - \langle \nabla_x f(x), x - x_0 \rangle \Big\vert \leq \frac{H_{\infty}}{2} \norm{x - x_0}_2^2.    
    \end{equation*}
\end{lemma}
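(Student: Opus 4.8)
The plan is to reduce the multivariate estimate to a one-dimensional second-order Taylor expansion along the segment joining $x_0$ and $x$, but to expand \emph{around $x$} rather than around $x_0$, so that the linear term produces exactly $\langle \nabla_x f(x), x - x_0\rangle$. Concretely, I would set $\phi(s) \coloneqq f\big(x + s(x_0 - x)\big)$ for $s\in[0,1]$, which is $C^2$ on $[0,1]$ since $f\in C^2(\R^p)$, with $\phi(0) = f(x)$ and $\phi(1) = f(x_0)$. This converts the statement into a routine estimate on the scalar remainder of $\phi$.

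First I would record the derivatives via the chain rule,
\[
\phi'(s) = \big\langle \nabla f\big(x + s(x_0-x)\big),\, x_0 - x \big\rangle, \qquad \phi''(s) = (x_0 - x)^\top \nabla^2 f\big(x + s(x_0-x)\big)\, (x_0 - x),
\]
so that in particular $\phi'(0) = \langle \nabla f(x), x_0 - x\rangle = -\langle \nabla_x f(x), x - x_0\rangle$. Next I would invoke the exact second-order Taylor identity with integral remainder, $\phi(1) = \phi(0) + \phi'(0) + \int_0^1 (1-s)\,\phi''(s)\,\dd s$ (itself a one-line integration by parts), and rearrange it to isolate the quantity of interest:
\[
f(x) - f(x_0) - \langle \nabla_x f(x), x - x_0\rangle = -\int_0^1 (1-s)\, \phi''(s)\,\dd s.
\]
To bound the remainder, the hypothesis $\norm{x-x_0}_2 < R$ ensures, by convexity of the Euclidean ball, that the whole segment $\{x + s(x_0-x) : s\in[0,1]\}$ lies in the ball on which $\norm{\nabla^2 f}_2 \le H_\infty$; hence $\abs{\phi''(s)} \le \norm{\nabla^2 f(\cdot)}_2 \norm{x_0-x}_2^2 \le H_\infty \norm{x-x_0}_2^2$ for every $s$. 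Combined with $\int_0^1 (1-s)\,\dd s = \tfrac12$, the triangle inequality for integrals yields the claimed bound $\tfrac{H_\infty}{2}\norm{x-x_0}_2^2$.

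The only genuinely delicate point — and the one I would be careful to spell out — is the choice of base point for the expansion: expanding at $x$ is precisely what makes $\nabla_x f(x)$ (rather than $\nabla f(x_0)$) appear in the linear term, while the quadratic constant is unchanged because $\norm{x_0 - x}_2 = \norm{x - x_0}_2$ and the Hessian bound is uniform along the segment. The secondary point is simply confirming that the segment stays within the region of Hessian control, which follows from convexity once one reads the hypothesis $\norm{x-x_0}_2 < R$ as placing both endpoints in that region. If one prefers to avoid the integral-remainder form, the identical conclusion follows from the Lagrange form $\phi(1) = \phi(0) + \phi'(0) + \tfrac12 \phi''(\xi)$ for some $\xi\in(0,1)$, bounding $\abs{\phi''(\xi)}$ by the same estimate.
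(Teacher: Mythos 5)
Your proof is correct, and it takes a genuinely (if mildly) different route from the paper's. The paper does not use a second-order Taylor formula: it writes the first-order fundamental theorem of calculus $f(x)-f(x_0)=\int_0^1 \big\langle \nabla f\big(x_0+t(x-x_0)\big),\,x-x_0\big\rangle \,\dd t$ and then bounds the gradient increment, $\norm{\nabla f(x_0+t(x-x_0))-\nabla f(x_0)}_2 \leq H_{\infty}\, t \norm{x-x_0}_2$, via the Hessian bound, integrating in $t$ to produce the factor $\tfrac12$; you instead apply the integral-remainder Taylor expansion of $\phi(s)=f(x+s(x_0-x))$ to second order. These are two bookkeepings of the same segment estimate, but your choice of base point is actually the more faithful one: the lemma as stated has the linear term $\langle \nabla_x f(x),\,x-x_0\rangle$ with the gradient evaluated at $x$, which is exactly what your expansion around $x$ produces, whereas the paper's proof establishes the version with $\nabla f(x_0)$ in the linear term --- a mismatch with its own statement that is harmless only because the estimate is symmetric under swapping $x$ and $x_0$ (and it is the $x_0$-gradient version that is what gets used in Lemma \ref{bound-loss-function}, with $x_0=\parms(t)$ and $x-x_0=-\lr_L(t)\nabla_{\parms}J_L(\parms(t))$). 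One caveat your argument shares with the paper's: as literally stated, the Hessian is controlled only on the ball $\{z : \norm{z}_2 < R\}$ centered at the \emph{origin}, and $\norm{x-x_0}_2<R$ does not by itself place the segment from $x_0$ to $x$ inside that ball; your convexity remark (like the paper's silent appeal to ``by hypothesis'') implicitly reads the Hessian bound as holding on a ball centered at $x_0$, which is clearly the intended statement and is consistent with how the lemma is invoked. With that reading, your proof is complete, and your Lagrange-remainder variant would do equally well.
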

\begin{proof}
    We apply the fundamental theorem of calculus for line integrals between $x_0$ and $x$:
    \begin{equation*}
        f(x) - f(x_0) = \int_0^1 \big\langle \nabla_x f(x_0 + t(x-x_0)), x-x_0 \big\rangle \dd t.
    \end{equation*}
    Hence, by Cauchy-Schwartz inequality and by hypothesis,
    \begin{align*}
        \Big\vert f(x) - f(x_0) - \big\langle \nabla_x f(x_0), x-x_0 \big\rangle \Big\vert &\leq \int_0^1 \norm{\nabla_x f(x_0 + t(x-x_0)) - \nabla_x f(x_0)}_2 \norm{x-x_0}_2 \dd t \\
        &\leq \int_0^1 H_{\infty} \norm{t(x-x_0)}_2 \norm{x-x_0}_2 \dd t \\
        &= \frac{H_{\infty}}{2} \norm{x-x_0}_2^2.
    \end{align*}
\end{proof}

\begin{lemma}[Discrete Grönwall inequalities] \label{lemma:gronwall}
    Let $(u_n)_{n\in \N}, (v_n)_{n\in \N}, (w_n)_{n\in \N} \subset \R_{>0}$. Then
    \begin{itemize}
        \item[(i)] If $e_{n+1} \leq u_n e_n + v_n$ for each $n\geq 0$, then \[
            e_n \leq \left( \prod_{n'=0}^{n-1} u_{n'} \right) e_0 + \sum_{n'=0}^{n-1} \left( \prod_{n''=n'+1}^{n-1} u_{n''} \right) v_{n'}.
        \]
        \item[(ii)] If $g_0 > 0$ and $0 < g_{n+1} \leq u_n g_n + w_n g_n^{1/2}$, then \[
            g_{n}^{1/2} \leq \left( \prod_{n'=0}^{n-1} u_n^{1/2} \right) g_0^{1/2} + \frac{1}{2} \sum_{n'=0}^{n-1} \left( \prod_{n''=n'+1}^{n-1} u_{n''}^{1/2} \right) \frac{w_{n'}}{u_{n'}^{1/2}}.
        \]
    \end{itemize}
\end{lemma}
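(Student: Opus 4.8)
The plan is to prove part \textit{(i)} by a direct induction on $n$, and then obtain part \textit{(ii)} from part \textit{(i)} via the substitution $h_n := g_n^{1/2}$ together with a completion-of-the-square step that linearizes the square-root recursion. No external results are needed; the argument is self-contained and elementary.

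For part \textit{(i)}, the base case $n=0$ is immediate: the empty product equals $1$ and the empty sum equals $0$, so the claimed inequality reads $e_0 \leq e_0$. For the inductive step, I would assume the bound at index $n$ and substitute it into $e_{n+1} \leq u_n e_n + v_n$. The leading term telescopes, $u_n \prod_{n'=0}^{n-1} u_{n'} = \prod_{n'=0}^{n} u_{n'}$, matching the first term at index $n+1$. Multiplying the sum by $u_n$ shifts each inner product $\prod_{n''=n'+1}^{n-1} u_{n''}$ up to $\prod_{n''=n'+1}^{n} u_{n''}$, while the extra summand $v_n = \big(\prod_{n''=n+1}^{n} u_{n''}\big) v_n$ (empty product $=1$) extends the summation range to $n$, yielding precisely the bound at $n+1$.

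For part \textit{(ii)}, set $h_n := g_n^{1/2} > 0$; positivity for all $n$ is guaranteed by the hypothesis $0 < g_{n+1}$ together with $g_0 > 0$. The recursion becomes $h_{n+1}^2 \leq u_n h_n^2 + w_n h_n$, and I complete the square on the right with $c_n := \tfrac{1}{2} w_n u_n^{-1/2}$:
\[
\left( u_n^{1/2} h_n + c_n \right)^2 = u_n h_n^2 + w_n h_n + \frac{w_n^2}{4 u_n} \;\geq\; u_n h_n^2 + w_n h_n \;\geq\; h_{n+1}^2.
\]
Since $u_n, w_n > 0$ and $h_n \geq 0$, both sides are nonnegative, so taking square roots gives the linear recursion $h_{n+1} \leq u_n^{1/2} h_n + \tfrac{1}{2} w_n u_n^{-1/2}$. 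This is exactly the hypothesis of part \textit{(i)} with the pair $(u_n, v_n)$ replaced by $\big(u_n^{1/2}, \tfrac{1}{2} w_n u_n^{-1/2}\big)$. Invoking part \textit{(i)}, recalling $h_0 = g_0^{1/2}$ and $h_n = g_n^{1/2}$, and factoring the constant $\tfrac{1}{2}$ out of the sum produces exactly the claimed estimate.

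The argument is essentially routine; the only non-mechanical idea is the completion of the square that converts the quadratic-in-$h_n$ right-hand side into a perfect square, and the only point requiring genuine care is verifying nonnegativity of both sides before extracting square roots. I expect this sign check to be the sole potential pitfall, and it is resolved immediately by the standing positivity assumptions on $(u_n), (w_n)$ and on $g_0$.
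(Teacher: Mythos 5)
Your proof is correct and follows essentially the same route as the paper: part \textit{(ii)} is obtained by exactly the paper's completion-of-the-square step $u_n g_n + w_n g_n^{1/2} \leq u_n \bigl( g_n^{1/2} + \tfrac{w_n}{2u_n} \bigr)^2$ followed by an application of part \textit{(i)}. The only (cosmetic) difference is in part \textit{(i)}, where you unroll the linear recursion by direct induction rather than by the paper's normalization $\widetilde{e}_n = \bigl(\prod_{n'=0}^{n-1} u_{n'}\bigr)^{-1} e_n$ and telescoping sum; both are equivalent, and your sign checks (positivity of $u_n$, $w_n$, $g_n$ before multiplying and taking square roots) are exactly the points that need care.
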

The first inequality is well-known, but we give proofs for both, for the sake of completeness.
\begin{proof}
To prove (i), we start by defining $\widetilde{e}_n = \left(\prod_{n'=0}^{n-1} u_{n'} \right)^{-1} e_n$. Then,
\begin{equation*}
    \widetilde{e}_{n+1} - \widetilde{e}_n =  \left(\prod_{n'=0}^{n} u_{n'} \right)^{-1} \left( e_{n+1} - u_n e_n \right) \leq  \left(\prod_{n'=0}^{n} u_{n'} \right)^{-1} v_n.
\end{equation*}
Hence, summing over $n$, we get 
\begin{align*}
    e_n = \left(\prod_{n'=0}^{n-1} u_{n'} \right) \widetilde{e}_n &\leq  \left(\prod_{n'=0}^{n-1} u_{n'} \right) \left( e_0 + \sum_{n'=0}^{n-1} \left(\prod_{n''=0}^{n'} u_{n''} \right)^{-1} v_{n'} \right) \\
    &= \left( \prod_{n'=0}^{n-1} u_{n'} \right) e_0 + \sum_{n'=0}^{n-1} \left( \prod_{n''=n'+1}^{n-1} u_{n''} \right) v_{n'}.
\end{align*}
To prove (ii), we simply complete the square: $u_n g_n + w_n g_n^{1/2} \leq u_n \left( g_n^{1/2} + \frac{w_n}{2 u_n} \right)^2$. Hence, \[
    g_{n+1}^{1/2} \leq u_n^{1/2} g_n^{1/2} + \frac{w_n}{2 u_n^{1/2}}.
\]
We can thus apply part (i) to $e_n = g_{n}^{1/2}$ to deduce the result.

\end{proof}

\end{document}